\newcommand{\argmin}[1]{\underset{#1}{\operatorname{argmin}}}
\newcommand{\argmax}[1]{\underset{#1}{\operatorname{argmax}}}
\newcommand{\R}{{\mathbb{R}}}
\newcommand{\E}{{\mathbb{E}}}
\newcommand{\cc}{\lambda}
\newcommand{\F}{\cal{F}}
\newcommand{\C}{\mathcal{C}}
\newcommand{\Ft}{\F_\lambda}
\newcommand{\M}{\mathcal{M}}
\renewcommand{\S}{\mathcal{S}}
\newcommand{\MRE}{MRE }
\newcommand{\MREC}{MRE-C }
\renewcommand{\S}{\mathcal{S}}
\newcommand{\Exp}{\mathbb{E}}
\newcommand{\one}{\mathbf{1}}
\newcommand{\var}{\operatorname{var}}
\newcommand{\hide}[1]{}
\newtheorem{assumption}{Assumption}
\newtheorem{claim}{Claim}
\newtheorem{insight}{Insight}
\begin{document}

\title{One-Shot Federated Learning: \\ Theoretical Limits   and Algorithms to Achieve Them 
\thanks{Parts of this work (including weaker versions of Theorems~\ref{th:main upper c} and~\ref{th:H2constb}) are presented in \cite{AlSG19nips} at Neurips 2019.}}

\author{\name Saber Salehkaleybar \email saleh@sharif.edu \\
	\AND
	\name Arsalan Sharifnassab \email a.sharifnassab@gmail.com \\
	\AND
	\name S.~Jamaloddin Golestani \email golestani@sharif.edu \\
	\AND
	\addr 	Department of Electrical Engineering\\
	Sharif University of Technology \\
	Tehran, Iran}
	
\editor{ }

\maketitle

\begin{abstract}
We consider distributed statistical optimization in one-shot setting, where there are $m$ machines each observing  $n$ i.i.d. samples. 
Based on its observed samples, each machine sends a $B$-bit-long message to a server. The server 
then collects messages from all machines, and estimates a parameter that minimizes an expected convex
loss function. 
We investigate the impact of communication constraint, $B$, on the expected error and derive a tight lower bound on the error achievable by any algorithm. 
We then propose an estimator, which we call \emph{Multi-Resolution Estimator} (\MRE\!), whose expected error (when $B\ge\log mn$) meet the aforementioned lower bound up to poly-logarithmic factors, and is thereby order optimal. 
We also address the problem of learning under tiny communication budget, and present lower and upper error bounds when $B$ is a constant.
The expected error of \MRE, unlike existing algorithms, tends to zero as the number of machines ($m$) goes to infinity, even when the number of samples per machine ($n$) remains upper bounded by a constant. 
This property of the \MRE algorithm makes it applicable in new machine learning paradigms where $m$ is much larger than $n$. 
\end{abstract}
\begin{keywords}
	Federated learning, Distributed learning, Few shot learning, Communication efficiency, Statistical Optimization. 
\end{keywords}


\medskip
\section{Introduction}
 In recent years, there has been a growing interest in various learning tasks over large scale data generated and collected via smart phones and mobile applications. In order to carry out a learning task over this data, a naive approach is to collect the data in a centralized server which might be infeasible or undesirable due to communication constraints or privacy reasons. For learning statistical models in a distributed fashion, several works have focused on designing communication-efficient algorithms for various machine learning applications \citep{duchi2012dual,braverman2016communication,chang2017distributed,diakonikolas2017communication,lee2017communication}.


In this paper, we consider the problem of statistical optimization in a distributed setting as follows. 
Consider an unknown distribution $P$ over a collection, $\F$, of differentiable convex functions with Lipschitz first order derivatives, defined over a convex region in $\R^d$. 
There are $m$ machines, each observing $n$ i.i.d sample functions from $P$. Each machine processes its observed data, and transmits a signal of certain length to a server. 
The server then collects all the signals and outputs an estimate of the parameter $\theta^*$ that minimizes the expected loss, i.e., $\min_{\theta}\mathbb{E}_{f\sim P}\big[f(\theta)\big]$.
See Fig.~\ref{Fig:system} for an illustration of the system model.

We focus on the  distributed aspect of the problem considering arbitrarily large number of machines ($m$)  and present tight lower bounds and matching upper bounds on the estimation error. 
In particular,
\begin{itemize}
\item 
Under general communication budget with $B\ge d\log mn$ bits per transmission, we present a tight lower bound and an order-optimal estimator that achieves this bounds up to poly-logarithmic factors. More specifically, we show that $\|\hat\theta-\theta^*\|=\tilde{\Theta}\left(\max\left(n^{-1/2} (mB)^{-1/d},\,(mn)^{-1/2} \right)\right)$.
\item 
For the regime that the communication budget is very small with constant number of bits per transmission, we present upper and lower bounds on the estimation error and show that  the  error can be made arbitrarily small if $m$ and $n$ tend to infinity simultaneously.
\item 
Compared to the previous works that consider function classes with
Lipschitz continuous second or third order derivatives, our algorithms and bounds are designed and derived for a broader class of functions with Lipschitz
continuous first order derivatives.
This brings our model closer to real-world learning applications where the  loss landscapes involved are  highly non-smooth.
\end{itemize}

\begin{figure}[t]
	\centering
	\includegraphics[width=6.5cm]{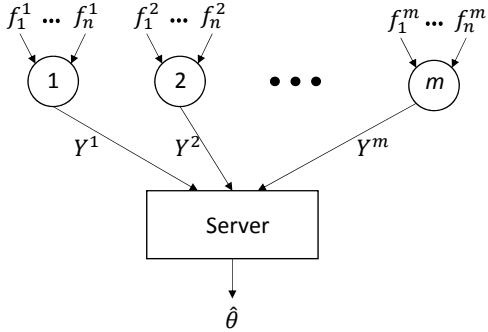}
	\caption{
		A distributed system of $m$ machines, each having access to $n$ independent sample functions from an unknown distribution $P$. 
		Each machine sends a signal to a server based on its observations. The server receives all signals  and output an estimate $\hat{\theta}$ for the optimization problem in \eqref{eq:opt problem}. }
	\label{Fig:system}
\end{figure}

\subsection{Background}
The distributed setting considered here has recently employed in   a new machine learning paradigm called \emph{Federated Learning} \citep{konevcny2015federated}. 
In this framework, training data is kept in users' computing devices due to privacy concerns, and the users participate in the training  process without revealing their data. 
As an example, Google has been working on this paradigm in their recent project, \emph{Gboard} \citep{mcmahan2017federated}, the Google keyboard. 
Besides communication constraints, one of the main challenges in this paradigm is that each machine has a small amount of data. In other words, the system operates in a regime that $m$ is much larger than $n$ \citep{chen2017distributed}.

A large body of distributed statistical optimization/estimation literature considers ``one-shot" setting, in which each machine communicates with the server merely once 
\citep{zhang2013information}. 
In these works, the main objective is to  minimize the number of transmitted bits, while keeping the estimation error as low as the error of a centralized estimator, in which the entire data is co-located in the server.  

If we impose no limit on the communication budget, then each machine can encode its entire data into a single message and sent it to the server. 
In this case, the sever acquires the entire data from all machines, and the distributed problem reduces to a centralized problem.
We call the sum of observed functions at all machines as the centralized empirical loss, and refer to its minimizer as the centralized solution.
It is part of the folklore that the centralized solution is order optimal and its expected error is $\Theta\big(1/\sqrt{mn}\big)$ \citep{lehmann2006theory,zhang2013information}. 
Clearly, no algorithm can beat the performance of the best centralized estimator. 

\subsubsection{Upper bounds}
\citet{zhang2012communication} studied a simple averaging method where each machine obtains the empirical minimizer of its observed functions and sends this minimizer to the server through an $O(\log mn)$ bit message. Output of the server is then the average of all received  empirical minimizers.
\citet{zhang2012communication} showed that the expected error of this algorithm is no larger than $O\big(1/\sqrt{mn}\,+\, 1/n\big)$, provided that:  1- all functions are convex and twice differentiable with Lipschitz continuous second derivatives, and 2- the objective function $\mathbb{E}_{f\sim P}\big[f(\theta)\big]$ is strongly convex at $\theta^*$. 
Under the extra assumption that the functions are three times differentiable with Lipschitz continuous third derivatives, \citet{zhang2012communication} also present a bootstrap method  
whose expected error is $O\big(1/\sqrt{mn}\,+\, 1/n^{1.5}\big)$. 
It is easy to see that, under the above assumptions, the averaging method and the bootstrap method achieve the performance of the centralized solution if $m\leq n$ and $m\leq n^2$, respectively. 
Recently, \citet{jordan2018communication} proposed to optimize a surrogate loss function using Taylor series expansion. This expansion can be constructed at the server by communicating $O(m)$ number of $d$-dimensional vectors. 
Under similar assumption on the loss function as in \citep{zhang2012communication},  they showed that the expected error of their method is no larger than $O\big(1/\sqrt{mn}+1/n^{{9}/{4}}\big)$. 
It, therefore, achieves the performance of the centralized solution for $m \le n^{3.5}$. 
However, note that when $n$ is fixed, 	 all aforementioned bounds remain lower bounded by a positive constant, even when $m$ goes to infinity. 

In \citep{AlSG19nips}, we relaxed the second order differentiability assumption, and considered a model that allows for convex loss functions that have Lipschitz continuous first order derivatives. There we presented an algorithm (called MRE-C-$\log$) with the communication budget of  $\log mn$ bits per transmission, and proved the upper bound $\tilde{O}\big( m^{-{1}/{\max(d,2)}} n^{-1/2}\big)$  on its estimation error. 
In this work we extend this algorithm to general communication budget of $B$ bits per transmission, for arbitrary values of $B\ge \log mn$. 
We also derive a lower bound on the estimation error of any algorithm. This lower bound meets the error-upper-bound of the \MREC algorithm, showing that the \MREC estimator has order optimal accuracy up to a poly-logarithmic factor.

\subsubsection{Lower bounds}
\cite{shamir2014fundamental} considered various communication constraints and showed that no distributed algorithm can achieve performance of the centralized solution with budget less than $\Omega(d^2)$ bits per machine. 
For the problem of sparse linear regression, \cite{braverman2016communication} proved that any algorithm that achieves optimal minimax squared error, requires to communicate  $\Omega(m\times \min(n,d))$ bits  in total from machines 
to the server. 
Later, \cite{lee2017communication} proposed an algorithm that achieves optimal mean squared error  for the problem of sparse linear regression when $d<n$.

\cite{zhang2013information} derived an information theoretic lower bound on the minimax error of parameter estimation, in presence of communication constraints. 
They showed that, in order to acquire the same precision as the centralized solution for estimating the mean of a $d$-dimensional  Gaussian distribution, the machines require to transmit a least total number of $\Omega\big(md/\log(m)\big)$ bits.
\cite{garg2014communication} improved this bound to $\Omega(dm)$ bits using direct-sum theorems \citep{chakrabarti2001informational}.


\subsubsection{One-shot vs. several-shot models}
Besides the one-shot model, there is another  communication model that allows for several transmissions back and forth between the machines and the server.  Most existing works of this type \citep{bottou2010large,lian2015asynchronous,zhang2015deep,mcmahan2017communication} 
involve variants of stochastic gradient descent, in which the server queries at each iteration the gradient of empirical loss at certain points from the machines.  
The gradient vectors are then aggregated in the server to update the model's parameters. The expected error of such algorithms typically scales as  $\tilde{O}\big({1}/{\sqrt{k}}\big)$, where $k$ is the number of iterations.

The bidirectional communication in the several-shot model  makes it convenient for the server to guide the search by sending queries to the machines (e.g., asking for gradients at specific points of interest). This powerful ability of the model typically leads to more efficient communication for the case of convex loss landscapes. 
However, the two-way communication require the users (or machines) be available during the time of training, so that they can respond to the  server queries in real time. 
Moreover, in such iterative algorithms, the users should be willing to reveal parts of their information asked by the servers. 
In contrast to the several-shot model, in the one-shot setting, because of one-way communication, SGD-like iterative algorithms are not applicable. 
The one-shot setting calls for a totally different type of algorithms and lower bounds.


\subsection{Our contributions}
We study the problem of one-shot distributed learning under milder assumptions than previously available in the literature.
We assume that loss functions, $f\in\F$, are differentiable with Lipschitz continuous first order derivatives.
This is in contrast to the works of \citep{zhang2012communication} and \citep{jordan2018communication} that assume Lipschitz continuity of second or third derivatives. 
  The assumption is indeed  practically
important since the loss landscapes involved
in several learning applications are highly non-smooth.
The reader should have in mind this model differences, when comparing our bounds with the existing results.
See Table~\ref{tab:1} for a summary of our results. 

\begin{center}
\begin{table} 
	\centering
	\begin{tabular}{|p{2.3cm}|c|l|c|}
		\hline
		 Communication \phantom{aa} Budget ($B$) & Assumptions & \hspace{3cm} Result & Ref.\\
		\hline
		\multirow{2}{*}{$B\ge d\log(mn)$} & \multirow{2}{*}{-} &  $ \|\hat\theta-\theta^*\|=\tilde{\Omega}\left(\max\left(\frac{1}{\sqrt{n}\, (mB)^{1/d}},\, \frac{1}{\sqrt{mn}} \right)\right)$& Th.~\ref{th:lower c} \\ \cline{3-4}
		& &  $ \|\hat\theta-\theta^*\|=\tilde{O}\left(\max \left(\frac1{\sqrt{n}\,(mB)^{1/{d}}}, \, \frac{1}{\sqrt{mn}} \right) \right)$& Th.~\ref{th:main upper c} \\  \cline{1-4}
		\multirow{2}{*}{Constant $B$} &  $n=1$ &  $ \|\hat\theta-\theta^*\|={\Omega}\left( 1 \right)$ & Th.~\ref{th:Hinf} \\ \cline{2-4}
		&  {$B=d$} &  $ \|\hat\theta-\theta^*\|=O\left(\frac{1}{\sqrt{n}}+\frac{1}{\sqrt{m}}\right)$ & Th.~\ref{th:H2constb} \\
		\hline
	\end{tabular} 
	\caption{Summary of our results. 
	} \label{tab:1}
\end{table}
\end{center}

We consider a sitting where the loss landscape is convex, and derive a lower bound on the estimation error, under communication budget of $B$ bits per transmission for all $B\ge d\log mn$.
We also propose an algorithm (which we call  Multi-Resolution Estimator for Convex setting (\MREC\!)), and show that its estimation error meets the lower bound up to a poly-logarithmic factor. 
Therefore, \MREC algorithm has order optimal accuracy. 
Combining these lower and upper bounds, we show that for any communication budget $B$ no smaller than $d \log mn$, we have $\|\hat\theta-\theta^*\|=\tilde{\Theta}\left(\max\left(n^{-1/2} (mB)^{-1/d},\,(mn)^{-1/2} \right)\right)$.
Moreover, computational complexity of the \MREC algorithm is polynomial in $m$, $n$, and $d$.
Our results also provide the minimum communication budget required for any estimator to achieve the performance of the centralized algorithm.

We also study a regime with tiny communication budget, where $B$ is bounded by a constant. We show that when $B$ is a constant and $n=1$, the  error of any estimator is lower bounded by a constant, even when $m$ tends to infinity. 
On the other hand, we propose an algorithm with the budget of $B=d$ bits per transmission and show that its estimation error is no larger than $O\left(n^{-1/2} +m^{-1/2}\right)$.

We evaluate the performance of \MREC algorithm in two different machine learning tasks (with convex landscapes) and compare with the existing methods in \citep{zhang2012communication}.  
We show via experiments, for the $n=1$ regime, that \MRE algorithm outperforms these algorithms. The observations are also in line with the expected error bounds we give in this paper and those previously available. In particular, in the $n=1$ regime, the expected error  of \MRE 
algorithm goes to zero as the number of machines increases, while the expected errors of the previously available estimators remain lower bounded by a  constant.

Unlike existing works, our results concern a regime where the number of machines $m$ is large, and our bounds tend to zero as $m$ goes to infinity, even if the number of per-machine observations ($n$) is bounded by a constant.
This is contrary to the algorithms in \citep{zhang2012communication}, whose errors tend to zero only when $n$ goes to infinity. 
In fact, when $n=1$, a simple example\footnote{Consider two convex functions $ f_0(\theta)=\theta^2+\theta^3/6$ and 
	$ f_1(\theta)=(\theta-1)^2+(\theta-1)^3/6$ over $[0,1]$. 
	Consider a  distribution $P$ that associates probability $1/2$ to each function.
	Then, $\mathbb{E}_P[f(\theta)]=f_0(\theta)/2+f_1(\theta)/2$, and the optimal solution is $\theta^*=(\sqrt{15}-3)/2\approx 0.436$. 
	On the other hand, in the averaging method proposed in \citep{zhang2012communication}, assuming $n=1$, 
	the empirical minimizer of each machine is either $0$ if it observes $f_0$, or $1$ if it observes $f_1$.
	Therefore, the server receives messages $0$ and $1$  with equal probability , and $\E\big[\hat{\theta}\big]=1/2$.
	Hence, $\mathbb{E}\big[|\hat{\theta}-\theta^*|\big]>0.06$,  for all values of $m$.}
shows that the expected errors of the simple Averaging and Bootstrap algorithms in \citep{zhang2012communication} remain lower bounded by a constant, for all values of $m$. The algorithm in \citep{jordan2018communication} suffers from  a similar problem and its expected error may not go to zero  when $n=1$.

\subsection{Outline}
The paper is organized as follows. 
We begin with a detailed model and problem definition in Section~\ref{sec:problem def}.
We then propose  our lower bound on the estimation error in Section~\ref{sec:main lower convex}, under general communication constraints.
In Section~\ref{sec:main upper convex}, we present the \MREC algorithm and its error upper bound.
Section~\ref{sec:constant} then provides our results for the regime where  communication budget is limited to constant number of bits per transmission.
After that, we report our numerical experiments in Section~\ref{sec:numerical}.
Finally, in Section~\ref{sec:discussion} we conclude the paper and discuss several open problems and directions for future  research. 
All proofs are relegated to the appendix for improved readability.


\section{Problem Definition} \label{sec:problem def}
Consider a positive integer $d$ and a  collection  $\mathcal{F}$ of real-valued convex functions over $[-1,1]^d$. 
Let $P$ be an unknown probability distribution over the functions in $\mathcal{F}$.  
Consider the expected loss function
\begin{equation} \label{eq:def of the loss F}
F(\theta) = \mathbb{E}_{f\sim P}\big[f(\theta)\big], \qquad \theta\in [-1,1]^d.
\end{equation} 
Our goal is to learn a parameter $\theta^*$ that minimizes $F$:
\begin{equation}\label{eq:opt problem}
\theta^*=\argmin{\theta \in [-1,1]^d}\, F(\theta) .
\end{equation}
The expected loss is to be minimized in a distributed fashion, as follows.
We consider a distributed system comprising $m$ identical  machines and a server. 
Each machine $i$ has access to a set of $n$ independently and identically distributed samples $\{f^i_1,\cdots,f^i_n\}$ drawn from the probability distribution $P$. 
Based on these observed functions,   machine $i$ then sends a signal $Y^i$ to the server.  
We assume that the length of each signal is limited to $b$ bits. 
The server then collects signals  $Y^1,\ldots, Y^m$ and outputs an estimation of $\theta^*$, which we denote by $\hat{\theta}$. 
See Fig.~\ref{Fig:system} for an illustration of the  system model.\footnote{The considered model here is similar to the one in \citep{saleh2019}.} 

We let the following assumptions be in effect throughout the paper:
\begin{assumption}[Differentiability]\label{ass:1}
	We assume:
	\begin{itemize}
		\item  Each $f\in\F$  is once differentiable  and its derivatives are bounded and Lipschitz continuous. More concretely, for any $f\in\F$ and any $\theta,\theta'\in [-1,1]^d$, we have $|f(\theta)|\le \sqrt{d}$,  $\|\nabla f(\theta)\|\leq 1$, and $\|\nabla f(\theta) - \nabla f(\theta')\|\leq \|\theta-\theta'\|$.
		\item The minimizer of $F$ lies in the interior of the cube $[-1,1]^d$. Equivalently, there exists $\theta^*\in (-1,1)^d$  such that $\nabla F(\theta^*) = \mathbf{0} $.  
	\end{itemize}
\end{assumption}

In Assumption~\ref{ass:1} we consider a class of functions with Lipschitz continuous first order derivatives, compared to previous works that consider function classes with Lipschitz continuous second or third order derivatives~\citep{zhang2013information, jordan2018communication}. This broadens the scope and applicability of our model to learning tasks where the loss landscape is far from being smooth (see Section~\ref{sec:discussion} for further discussions).

\begin{assumption}[Convexity]\label{ass:2}
	We assume:
	\begin{itemize}
		\item Every $f\in \F$	 is convex.
		\item  Distribution $P$ is such that $F$ (defined in \eqref{eq:def of the loss F}) is strongly convex. More specifically, there is a constant $\cc>0$ such that for any   $\theta_1,\theta_2 \in [-1,1]^d$, we have $F(\theta_2) \ge F(\theta_1) + \nabla F(\theta_1)^T (\theta_2-\theta_1) + \cc \|\theta_2-\theta_1\|^2$. 
	\end{itemize}
\end{assumption}

The convexity assumption (Assumption~\ref{ass:2}) is common in the literature of distributed learning~\citep{zhang2013information,jordan2018communication}. When $F$ is strongly convex, the objective is often designing estimators that minimize $ \Exp\big[\|\hat\theta-\theta^*\|^2\big]$. Given the upper and lower bounds on the second derivative (in Assumptions~\ref{ass:1} and~\ref{ass:2}), this is equivalent  (up to multiplicative constants) with minimization of $\Exp\big[F(\hat\theta)-F(\theta^*)\big]$. 
Note also that the assumption $\|\nabla F(x)\| \le 1$ (in Assumption~\ref{ass:2}) implies that 
\begin{equation} \label{eq:lam lower}
\lambda\le \frac1{\sqrt{d}}.
\end{equation}
This is because  if $\lambda> 1/\sqrt{d} $, then $\|\nabla F(x)\| > 1$, for some $x\in [-1,1]^d$.



\medskip
\section{Main Lower Bound}
\label{sec:main lower convex}
The following theorem shows that in a system with $m$ machines, $n$ samples per machine, and $B$ bits per signal transmission, no estimator can achieve estimation error less than $\|\hat\theta-\theta^*\|=\tilde{\Omega}\left(\max\left(n^{-1/2} (mB)^{-1/d},\,(mn)^{-1/2} \right)\right)$. 
The proof is given in Appendix~\ref{app:proof th lower c}.

\begin{theorem}
	\label{th:lower c}
	Suppose that Assumption~\ref{ass:2} is in effect for $\lambda\le 1/\big(10\sqrt{d} \big)$. 
	Then, for any estimator with output $\hat{\theta}$, there exists a probability distribution over $\F$ such that 
	\begin{equation} \label{eq:main lower bound}
	\Pr\left(\|\hat{\theta}-\theta^*\|=\tilde{\Omega}\left(\max\left(\frac{1}{\sqrt{n}\, (mB)^{1/d}},\, \frac{1}{\sqrt{mn}} \right)\right)^{\phantom{1\!}}\right)\geq \frac{1}{3}.
	\end{equation}
	More specifically, for large enough values of $mn$, for any estimator there is a probability distribution over $\F$  such that with probability at least $1/3$, 
	\begin{equation} \label{eq:main lower bound 2}
	\|\hat{\theta}-\theta^*\|\,\ge\,\max\left(\frac1{640\times50^{1/d}\,d^{2.5}\,\log^{2+3/d}(mn)} \times\frac{1}{\sqrt{n}\,(mB)^{1/d}}\,,\, \frac{\sqrt{d}}{5\sqrt{mn}} \right) 
	\end{equation}
\end{theorem}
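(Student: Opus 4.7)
The stated lower bound is the maximum of two terms. My plan is to prove each separately via a standard reduction from minimax estimation to multiple-hypothesis testing, and then take the maximum. Both reductions start by constructing a finite family of candidate distributions $\{P_v\}_{v \in V}$ on $\mathcal{F}$, placing a uniform prior on $V$, noting that recovery of $\theta^*$ within half the minimum separation of the $\theta^*_v$'s implies identification of $v$, and then applying a testing lower bound (Le Cam for two points, Fano for many).

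\textbf{Centralized parametric term.} For the bound $\sqrt{d}/(5\sqrt{mn})$, no communication argument is needed: it is the classical statistical lower bound that would hold even if all $mn$ samples were co-located on a single machine. I would use a two-point Le Cam argument with $F(\theta) = \cc\|\theta\|^2 \pm \epsilon^\top \theta$ and $\|\epsilon\| = \Theta(\sqrt{d/(mn)})$. This shifts the minimizer by $\Theta(\sqrt{d/(mn)})$, while the per-sample KL divergence is $O(d/(mn))$, so the total KL over $mn$ samples is $O(1)$, giving constant total variation distance between the product measures and hence a constant lower bound on the Bayes test error.

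\textbf{Communication-constrained term.} For $1/(\sqrt{n}(mB)^{1/d})$, I would use a many-hypothesis (Fano) argument. Construct $\{P_v\}_{v\in V}$ where $V \subset [-1,1]^d$ is a $\delta$-packing of a small cube with separation $\delta \asymp 1/(\sqrt{n}(mB)^{1/d})$ up to the polylogarithmic factor in \eqref{eq:main lower bound 2}, and with $\theta^*_v = v$. A convenient construction takes $F_v(\theta) = \cc\|\theta - v\|^2$, realized as the expectation of quadratic-plus-noise losses in $\mathcal{F}$ that satisfy Assumption~\ref{ass:1}. With a uniform prior and the standard reduction, Fano's inequality gives
\begin{equation*}
\Pr\bigl(\hat V \ne V\bigr) \;\geq\; 1 - \frac{I(V; Y^1, \ldots, Y^m) + \log 2}{\log |V|}.
\end{equation*}
The pivotal step is to upper-bound $I(V; Y^{1:m})$. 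By independence of the machines together with
\begin{equation*}
I(V; Y^i) \;\le\; \min\bigl(B,\; I(V; X^i)\bigr) \;\le\; \min\bigl(B,\; n \cdot I(V; f_1^i)\bigr),
\end{equation*}
and a bound on $I(V; f_1^i)$ obtained from smoothness of the quadratic parameterization, the per-sample Fisher information gives $\mathrm{KL}(P_v \| P_{v'}) = O(\|v-v'\|^2)$. Restricting the packing to a region of diameter $O(1/\sqrt{n})$ thus caps the per-machine information contribution, while simultaneously choosing $|V|$ so that $\log |V| \asymp \log(mB)$.

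\textbf{Main obstacle.} The crux, and the source of the polylogarithmic slack in \eqref{eq:main lower bound 2}, is obtaining the polynomial rate $(mB)^{1/d}$ rather than the exponentially smaller $2^{-mB/d}$ that naive Fano using only the communication bound $I(V;Y^i)\le B$ would yield. Achieving this requires balancing: (i) the packing diameter must be small enough ($O(1/\sqrt{n})$) that per-machine sampling saturates the information bound before the bit budget does; (ii) the packing must nevertheless be large enough that $\log|V|$ exceeds the effective total information; (iii) the separation $\delta$ must be kept above the target $1/(\sqrt{n}(mB)^{1/d})$. Carrying through this tradeoff and verifying Assumption~\ref{ass:1}--\ref{ass:2} for the constructed $P_v$ (in particular, compatibility with $\lambda \le 1/(10\sqrt{d})$) is the careful bookkeeping that produces the stated constants. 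Finally, combining the two bounds and converting from Bayes error to the required $\Pr(\cdot) \ge 1/3$ statement via standard arguments completes \eqref{eq:main lower bound 2}.
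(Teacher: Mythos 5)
The Le~Cam two-point argument you give for the $\sqrt{d}/(5\sqrt{mn})$ term is essentially the same as the paper's (which tests between two two-function distributions $P_1,P_2$ via a binary hypothesis test), so that part is fine. The problem is the Fano part: the construction you describe cannot produce the rate $(mB)^{-1/d}$, only an exponentially weaker one, and the ``main obstacle'' you flag is not a matter of bookkeeping but a genuine obstruction to this approach.

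Concretely, you take $V$ to be a $\delta$-packing of a cube of diameter $r\asymp n^{-1/2}$ with $\theta^*_v=v$ and a quadratic loss $F_v(\theta)=\lambda\|\theta-v\|^2$. Such a packing has $|V|\asymp(r/\delta)^d$, so $\log|V|\asymp d\log(r/\delta)$ is only \emph{logarithmic} in $1/\delta$. Under the bit bound $I(V;Y^i)\le B$ alone, Fano needs $mB\lesssim\log|V|$, which forces $\delta\gtrsim r\,2^{-mB/d}$ --- exponentially small, exactly the obstacle you identify. Your proposed rescue via $I(V;Y^i)\le n\,I(V;f_1^i)$ does not close this: the per-sample mutual information scales with the spread of the family, i.e., $I(V;f_1^i)\asymp r^2$, not $\delta^2$ (the latter controls pairwise KL, but Fano's information term is against the mixture $\bar P$, whose distance to a typical $P_v$ is governed by the diameter $r$, not the separation). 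Optimizing $m\min(B,nr^2)\lesssim d\log(r/\delta)$ over $r$ then yields either $\delta\asymp 2^{-mB/d}$ (when $r$ is large) or $\delta\asymp\sqrt{d/(mn)}$ (when $r\asymp\sqrt{d/(mn)}$) --- the centralized rate --- never $(mB)^{-1/d}$. The missing ingredient is that the alternatives must be a \emph{function-class} packing, not a location packing. The paper builds an $(\epsilon,\delta)$-packing $S^*$ of strongly convex functions (Definition~\ref{def:metric ent}, Lemma~\ref{lemma:metric}) separated in gradient over a tiny cube, with $\log|S^*|\gtrsim(\delta/\epsilon)^d$ --- polynomially large in $1/\epsilon$, so that $|S^*|$ is doubly exponential. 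Fano against $|S^*|$ then directly gives $\epsilon\gtrsim(mB)^{-1/d}$. Connecting this to minimizer estimation requires the paper's nontrivial reduction (Lemmas~\ref{lem:E2},~\ref{lemma:7}, Claim~\ref{lemma:8}): a good minimizer estimator, applied to all tilts of $F$ by linear functions $g_i^{\pm}$ realized through sub-sampling the transcripts, recovers $\nabla f$ over an $O(1/\sqrt{n})$-ball and hence identifies $f\in S^*$. Without this reduction and the gradient-metric packing of Lemma~\ref{lemma:metric}, the polynomial dependence on $mB$ is out of reach.
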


In light of \eqref{eq:lam lower}, the assumption $\lambda\le 1/\big(10\sqrt{d} \big)$ in the statement of the theorem appears to be innocuous, and is merely aimed to facilitate the proofs.
The proof is given in Section~\ref{app:proof th lower c}.
The key idea is to show that finding an $O\big(n^{-1/2} m^{-1/d})$-accurate minimizer of $F$ (i.e.,  $\|\hat\theta-\theta^*\|=O\big(n^{-1/2} m^{-1/d})$) is as difficult as finding an $O\big( n^{-1/2} m^{-1/d})$-accurate approximation of $\nabla F$ for all points in an $n^{-1/2}$-neighborhood of $\theta^*$. 
This is quite counter-intuitive, because the latter problem looks way more difficult than the former. 
To see the unexpectedness  more clearly, it suggests that in the  special case where $n=1$,  finding an $m^{-1/d}$-approximation of $\nabla F$ over the entire domain is no harder than finding an $m^{-1/d}$-approximation of $\nabla F$ at a  single (albeit unknown) point  $\theta^*$. 
This provides a key insight beneficial for devising estimation algorithms:
\begin{insight} \label{ins:1}
Finding an $\tilde{O}\big(n^{-1/2} m^{-1/d})$-accurate minimizer of $F$ is  as difficult as finding an $O\big( n^{-1/2} m^{-1/d})$-accurate approximation of $\nabla F$ over an $n^{-1/2}$-neighborhood of $\theta^*$.
\end{insight}
This inspires estimators that first approximate $\nabla F$ over a neighborhood of $\theta^*$ and then choose  $\hat\theta$ to be a point with minimum $\|\nabla F\|$. 
We follow a similar idea in Section~\ref{sec:main upper convex} to design the \MREC algorithm with  order optimal error.

As an immediate corollary of Theorem~\ref{th:main upper c}, we obtain a lower bound on the moments of estimation error.
\begin{corollary}
	For any estimator $\hat{\theta}$, there exists a probability distribution over $\F$ such that for any $k\in \mathbb{N}$, 
	\begin{equation}\label{eq:lower moments}
	\mathbb{E}\left[\|\hat{\theta}-\theta^*\|^k\right]=\tilde{\Omega}\left(\max\left(\frac{1}{\sqrt{n}\, (mB)^{1/d} }, \, \frac{1}{\sqrt{mn}}\right)^k\right).
	\end{equation}
\end{corollary}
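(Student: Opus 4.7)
The corollary is essentially a direct consequence of Theorem~\ref{th:lower c} combined with the elementary fact that for any nonnegative random variable $X$ and any $a\ge 0$, one has $\mathbb{E}[X]\ge a\,\Pr(X\ge a)$. The plan is to apply this with $X=\|\hat\theta-\theta^*\|^k$ and $a=R^k$, where $R$ is the lower bound furnished by the theorem.

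More concretely, I would proceed as follows. First, fix an arbitrary estimator $\hat\theta$ and invoke Theorem~\ref{th:lower c} to produce a distribution $P$ over $\F$ for which the event
\[
\mathcal{E}\;\triangleq\;\left\{\|\hat\theta-\theta^*\|\;\ge\;R\right\},\qquad R\;=\;\tilde\Omega\!\left(\max\left(\frac{1}{\sqrt{n}\,(mB)^{1/d}},\;\frac{1}{\sqrt{mn}}\right)\right),
\]
holds with probability at least $1/3$. Next, because $t\mapsto t^k$ is monotone nondecreasing on $[0,\infty)$, on the event $\mathcal{E}$ we have $\|\hat\theta-\theta^*\|^k\ge R^k$. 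Finally, using $\mathbb{E}[X]\ge R^k\,\Pr(X\ge R^k)$ for the nonnegative random variable $X=\|\hat\theta-\theta^*\|^k$, I conclude
\[
\mathbb{E}\!\left[\|\hat\theta-\theta^*\|^k\right]\;\ge\;R^k\cdot\Pr(\mathcal{E})\;\ge\;\frac{R^k}{3}\;=\;\tilde\Omega\!\left(\max\left(\frac{1}{\sqrt{n}\,(mB)^{1/d}},\;\frac{1}{\sqrt{mn}}\right)^{\!k}\right),
\]
which is exactly~\eqref{eq:lower moments}.

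There is no substantial obstacle in this argument; the only thing to be mindful of is the bookkeeping of poly-logarithmic factors hidden in the $\tilde\Omega$ notation when the bound is raised to the $k$-th power. Since raising $R$ to the $k$-th power simply raises the logarithmic factors to the $k$-th power as well, and since $k$ is a fixed integer, the result still fits inside the $\tilde\Omega$ notation for each fixed $k$. (If one wanted a version uniform in $k$, one would need to track the exact log exponents from the theorem's explicit bound~\eqref{eq:main lower bound 2}, but the statement as written only claims the asymptotics for each fixed $k$.) No further analysis of the distribution or the estimator is required beyond what Theorem~\ref{th:lower c} already provides.
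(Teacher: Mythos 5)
Your proof is correct and is exactly the argument the paper implicitly intends: the corollary is stated with no separate proof (just ``as an immediate corollary''), and the Markov-type inequality $\mathbb{E}[X]\ge a\Pr(X\ge a)$ applied to $X=\|\hat\theta-\theta^*\|^k$ with $a=R^k$ is the only step needed. Your remark about the poly-logarithmic bookkeeping for fixed $k$ is also the right caveat.
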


In view of \eqref{eq:lower moments},  no estimator can achieve performance of a centralized solution with the budget of $B=O(\log mn)$ when $d\ge3$.
As discussed earlier in the Introduction section, this is in contrast to the result in \citep{zhang2012communication} that a simple averaging algorithm achieves $O(1/\sqrt{nm})$ accuracy 
(similar to a centralized solution), 
in a regime that $n>m$. 
This apparent contradiction is resolved by the difference in the set of functions considered in the two works.
The set of functions in \citep{zhang2012communication} are twice differentiable with Lipschitz continuous second derivatives, while we do not assume existence or Lipschitz continuity of second derivatives.


\medskip
\section{\MREC Algorithm and its Error Upper Bound} \label{sec:main upper convex}
Here, we  propose an order optimal estimator under general communication budget  $B$, for $B\ge d\log mn$.
The high level idea, in view of Insight~\ref{ins:1}, is to acquire an approximation of derivatives of $F$ over a neighborhood of $\theta^*$, and then letting $\hat\theta$ be the minimizer of size of these approximate gradients.
For efficient gradient approximation, transmitted signals are designed such that the server can construct a multi-resolution view of gradient of function $F(\theta)$ around a promising grid point. Thus, we call the proposed algorithm ``Multi-Resolution Estimator for Convex loss (\MREC\!)". The description of \MREC is as follows:

Each machine $i$ observes $n$ functions and sends a signal $Y^i$ comprising $\lceil B/(d\log mn)\rceil$ sub-signals of length $\lfloor d \log mn\rfloor$. 
Each sub-signal has three parts of the form $(s,p,\Delta)$. 
The three parts $s$, $p$, and $\Delta$ are as follows.
\begin{itemize}
	\item Part $s$: Consider a grid $G$ with resolution $\log(mn)/\sqrt{n}$ over the $d$-dimensional cube $[-1,1]^d$. Each machine $i$ computes the minimizer of the average of its first $n/2$ observed functions,
	\begin{equation} \label{eq:def theta i half upper c}
	\theta^i=\argmin{\theta \in [-1,1]^d} \,\sum_{j=1}^{n/2} f_j^i(\theta).
	\end{equation}
	It then lets 	$s$ be the closest grid point to $\theta^i$. 
	Note that all sub-signals of a machine have the same $s$-part.
	\item Part $p$: Let
	\begin{equation}\label{eq:def delta c}
	\delta \,\triangleq \,  2d \log^{3}(mn) \, \max \left(\frac1{(mB)^{1/{d}}}, \, \frac{2^{d/2}}{m^{1/2}} \right).
	\end{equation}
	Let $t = \log(1/\delta)$. 
	Without loss of generality we assume that $t$ is a non-negative integer.\footnote{If $\delta>1$, we reset the value of $\delta$ to $\delta=1$. It is not difficult to check that the rest of the proof would not be upset in this spacial case.}
	Let $C_s$  be a $d$-dimensional cube with edge size $2\log(mn)/\sqrt{n}$ centered at $s$.
	Consider a sequence of $t+1$ grids on $C_s$ as follows.
	For each $l=0,\ldots,t$, we partition the cube $C_s$ into $2^{ld}$ smaller equal sub-cubes with edge size $2^{-l+1} \log(mn)/\sqrt{n}$. 
	The $l$th grid $\tilde{G}_s^l$ comprises the centers of these smaller cubes.
	Then, each $\tilde{G}_s^l$ has $2^{ld}$ grid points.
	For any point  $p'$ in $\tilde{G}_s^l$, we say that $p'$ is the parent of all $2^d$ points in $\tilde{G}_s^{l+1}$
	that are in the $\big(2^{-l}\times (2\log mn)/\sqrt{n}\big)$-cube centered at $p'$ (see Fig. \ref{Fig:MRE}). 
	Thus, each point $\tilde{G}_s^l$ ($l<t$) has $2^d$ children. 
	
	In each sub-signal, to select $p$, we randomly choose an $l$ from $0,\dots, t$ with probability
	\begin{equation}\label{eq:prob choose p c}
	\Pr(l) = \frac{2^{(d-2)l}}{\sum_{j=0}^t 2^{(d-2)j}}
	\end{equation}
	We then let $p$ be a uniformly chosen random grid point in $\tilde{G}_s^l$.
	The level $l$ and point $p$ chosen in different sub-signals of a machine are independent and have the same distribution. 
	Note that $O(d\log(1/\delta))=O(d\log(mn))$ bits suffice to identify $p$ uniquely. 
	
	\item Part $\Delta$: We let 
	\begin{equation}\label{eq:def hat F upper}
	\hat{F}^i(\theta)\triangleq \frac2n\sum_{j=n/2+1}^{n} f_j^i(\theta),
	\end{equation}
	and refer to it as the empirical function of the $i$th machine.
	For each sub-signal, if the selected $p$ in the previous part is in $\tilde{G}_s^0$, i.e., $p=s$, then we set $\Delta$ to the gradient of $\hat{F}^i$  at $\theta=s$.
	Otherwise, if $p$ is in $\tilde{G}_s^l$ for $l\geq 1$, we let
	\begin{equation*}
	\Delta \,\triangleq\, \nabla \hat{F}^i (p)-\nabla \hat{F}^i(p'),
	\end{equation*}
	where $p'\in \tilde{G}_s^{l-1}$ is the parent of $p$. 
	Note that $\Delta$ is a $d$-dimensional vector whose entries range over  
	$\big(2^{-l}\sqrt{d}\log(mn)/\sqrt{n}\big) \times \big[-1,+1\big]$.
	This is due to the Lipschitz continuity of the derivative of the functions in $\F$ (see Assumption~\ref{ass:1})  and the fact that $\|p-p'\|=2^{-l}\sqrt{d}\log(mn)/\sqrt{n}$. 
	Hence,  $O(d\log(mn))$  bits suffice to represent $\Delta$ within accuracy $2\delta\log(mn)/\sqrt{n}$. 
\end{itemize}

\begin{figure}[t]
	\centering
	\includegraphics[width=8cm]{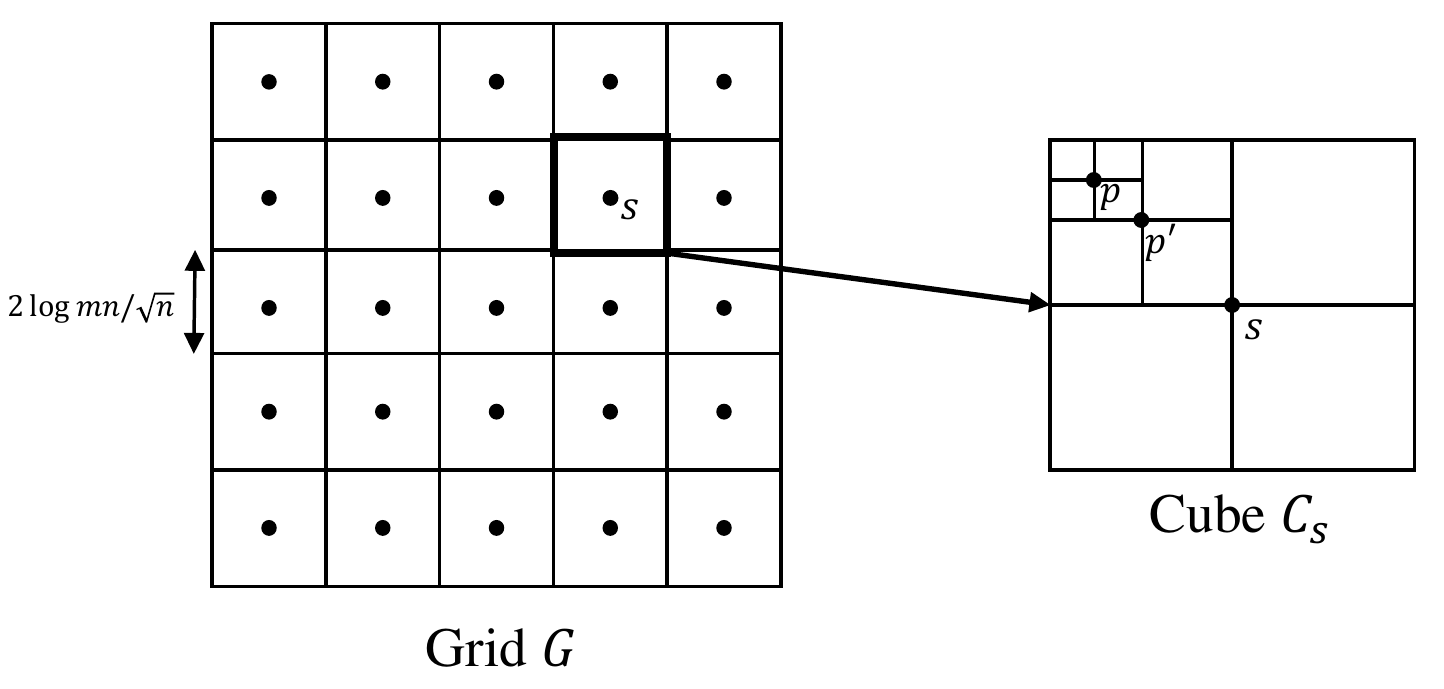}
	\caption{An illustration of grid $G$ and cube $C_s$ centered at point $s$ for $d=2$. The point $p$ belongs to $\tilde{G}_s^2$ and $p'$ is the parent of $p$.}
	\label{Fig:MRE}
\end{figure}

At the server, we choose an $s^*\in G$ that has the largest number of occurrences in the received signals. 
Then, based on the signals corresponding to $\tilde{G}_{s^*}^0$,
we approximate the gradients of $F$ over $C_{s^*}$ as follows.
We first eliminate redundant sub-signals so that no two surviving sub-signals from a same machine have the same $p$-parts (consequently, for each machine, the surviving sub-signals are distinct). We call this process ``redundancy elimination''. 
We then let $N_{s^*}$ be the total number of surviving sub-signals that contain $s^*$ in their  $p$ part, and compute
\begin{equation*}
\hat{\nabla} F(s^*)=\frac{1}{N_{s^*}}\sum_{\substack{\textrm{\scriptsize{Subsignals of the form }} \\ (s^*,s^*,\Delta)\\ \mbox{\scriptsize{after redundancy elimination}}}}\Delta,
\end{equation*}	
Then, for any point $p\in \tilde{G}_{s^*}^l$ with $l\geq 1$, we let
\begin{equation}
\hat{\nabla}F(p)=\hat{\nabla}F(p')+\frac{1}{N_p}\sum_{\substack{\mbox{\scriptsize{Subsignals of the form }} \\ (s^*,p,\Delta)\\ \mbox{\scriptsize{after redundancy elimination}}}}\Delta,
\label{eq:page17 c}
\end{equation}
where $N_p$ is the number of signals having point $p$ in their second argument, after redundancy elimination.
Finally, the sever lets $\hat{\theta}$ be a grid point $p$ in $\tilde{G}_{s^*}^t$  with the smallest $\|\hat{\nabla}F(p)\|$.


\begin{theorem}\label{th:main upper c}
	Let $\hat{\theta}$ be the output of the above algorithm. Then, 
\begin{equation*}
\Pr\left(\|\hat{\theta}-\theta^*\|> \frac{{4d^{1.5} \log^{4}(mn)}}{\lambda} \, \max \left(\frac1{\sqrt{n}\,(mB)^{1/{d}}}, \, \frac{2^{d/2}}{\sqrt{mn}} \right)\right) \,=\, \exp\Big(-\Omega\big(\log^2(mn)\big)\Big).
\end{equation*}
\end{theorem}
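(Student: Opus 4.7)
The plan is to decompose the analysis into three pieces: (i) show that, with high probability, the server-chosen centre $s^*$ is the grid point of $G$ nearest to $\theta^*$, so that the fine grid $\tilde G_{s^*}^t$ is positioned around $\theta^*$; (ii) control, uniformly over $p\in\tilde G_{s^*}^t$, the error of the reconstructed gradient $\hat\nabla F(p)$ using the telescoping structure of the multi-resolution signals; (iii) convert gradient accuracy at $\hat\theta$ into the claimed bound on $\|\hat\theta-\theta^*\|$ via $\lambda$-strong convexity. A useful structural fact I will exploit is that, by construction, the empirical functions $\hat F^i$ that produce the $\Delta$-parts depend only on the second half of each machine's samples and are therefore independent of $\theta^i$, so conditioning on $s=s^*$ does not bias the distribution of the $\Delta$'s.

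For step (i), I would invoke the standard high-probability bound for empirical risk minimisers of a $\lambda$-strongly convex objective with Lipschitz gradients: with probability $1-\exp(-\Omega(\log^2 mn))$, $\|\theta^i-\theta^*\|=\tilde O(1/\sqrt{n})$. Since $G$ has resolution $\log(mn)/\sqrt{n}$, the overwhelming majority of machines round to the same grid point $s_0$ closest to $\theta^*$, and a Chernoff/plurality argument gives $s^*=s_0$ and hence $\theta^*\in C_{s^*}$ with the same probability.

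For step (ii), condition on step (i) and let $\mathcal M$ be the set of machines whose sub-signals carry $s=s^*$, so $|\mathcal M|=\Omega(m)$. Using independence of $\hat F^i$ from $\theta^i$, every sub-signal in $\mathcal M$ picks a level $l\in\{0,\ldots,t\}$ with probability $\Pr(l)$ of~\eqref{eq:prob choose p c} and a point uniformly in $\tilde G_{s^*}^l$, independently. Writing $k=\lceil B/(d\log mn)\rceil$ and $Z=\sum_{j=0}^t 2^{(d-2)j}$, for any $p\in\tilde G_{s^*}^l$ the post-redundancy count $N_p$ concentrates around $\Theta(mk\cdot 2^{-2l}/Z)$. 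Within a fixed $p$, the surviving $\Delta$'s are i.i.d., have mean $\nabla F(p)-\nabla F(p')$, and by Lipschitzness of $\nabla f$ are bounded in an $\ell_\infty$-box of side $O(2^{-l}\log(mn)/\sqrt{n})$; a Hoeffding-type bound therefore controls the level-$l$ deviation by $\tilde O\bigl(2^{-l}\sqrt{d}\log(mn)/(\sqrt{n\, N_p})\bigr)$ with probability $1-\exp(-\Omega(\log^2 mn))$. Telescoping this along the path from $s^*$ down to any $p\in\tilde G_{s^*}^t$, and adding the per-level $O(\delta\log(mn)/\sqrt{n})$ quantisation error, bounds $\|\hat\nabla F(p)-\nabla F(p)\|$ by $\tilde O\bigl(\sqrt{dZ/(mk)}\cdot\log(mn)/\sqrt{n}\bigr)+\tilde O(\delta\log(mn)/\sqrt{n})$. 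Substituting $Z=O(2^{(d-2)t}\vee t)$, $t=\log(1/\delta)$, $k=\Theta(B/(d\log mn))$, and the definition of $\delta$ from~\eqref{eq:def delta c} collapses both terms to $\tilde O(\delta\log(mn)/\sqrt{n})$, and a union bound over the $2^{O(d\log(1/\delta))}$ grid points in $\tilde G_{s^*}^t$ preserves the probability guarantee.

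For step (iii), let $p^*\in\tilde G_{s^*}^t$ be the closest grid point to $\theta^*$, so $\|p^*-\theta^*\|\le\sqrt{d}\,\delta\log(mn)/\sqrt{n}$. Because $\nabla F(\theta^*)=0$ and $\nabla F$ is $1$-Lipschitz, $\|\nabla F(p^*)\|\le\sqrt{d}\,\delta\log(mn)/\sqrt{n}$, and step (ii) upgrades this to $\|\hat\nabla F(p^*)\|=\tilde O(\sqrt{d}\,\delta\log(mn)/\sqrt{n})$. Since $\hat\theta$ minimises $\|\hat\nabla F\|$ over $\tilde G_{s^*}^t$, we get $\|\hat\nabla F(\hat\theta)\|\le\|\hat\nabla F(p^*)\|$, hence by (ii) again $\|\nabla F(\hat\theta)\|=\tilde O(\sqrt{d}\,\delta\log(mn)/\sqrt{n})$, and strong convexity yields $\|\hat\theta-\theta^*\|\le\|\nabla F(\hat\theta)\|/\lambda$; inserting the value of $\delta$ gives the claim. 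The main obstacle I anticipate is step (ii): showing that the $N_p$'s concentrate uniformly in $p$ and that the telescoped variance collapses to exactly the $\delta$ prescribed by~\eqref{eq:def delta c} requires delicate bookkeeping of the poly-log factors and separate treatment of the $d\ge3$ and $d\le 2$ regimes of $Z$, whereas steps (i) and (iii) follow from fairly standard machinery.
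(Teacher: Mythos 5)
Your decomposition into (i) locating $s^*$ via concentration of the $\theta^i$, (ii) controlling $\hat\nabla F$ uniformly by concentrating $N_p$ and telescoping per-level Hoeffding bounds, and (iii) converting gradient accuracy at $\hat\theta$ to parameter error via $\lambda$-strong convexity is exactly the paper's proof structure (Lemmas~\ref{lem:pro E1 c}, \ref{lem:prob of E2 upper}, \ref{lem:prob E3 upper}, and the closing argument in Appendix~\ref{sec:proof main alg c}). Your explicit remark that sample-splitting makes the $\hat F^i$ independent of the $s$-component is the right justification for an independence step the paper uses implicitly, and your acknowledged ``delicate bookkeeping'' of $Z$, $t$, $k$, and $\delta$ is precisely the content of Lemma~\ref{lem:prob of E2 upper}'s case analysis for $d\le 2$ vs.\ $d\ge 3$.
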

The proof is given in  Appendix~\ref{sec:proof main alg c}.
The proof goes by first showing that $s^*$ is a closest grid point of $G$ to $\theta^*$ with high probability. 
We then show that for any $l\le t$ and any $p\in \tilde{G}_{s^*}^l$, the number of received signals corresponding to $p$ is large enough so that the server obtains a good approximation of $\nabla F$ at $p$. 
Once we have a good approximation $\hat\nabla F$ of $\nabla F$ at all points of $\tilde{G}_{s^*}^t$, a point at which $\hat\nabla F$ has the minimum norm lies close to the minimizer of $F$.

\begin{corollary} \label{cor:upper c}
	Let $\hat{\theta}$ be the output of the above algorithm. 
	There is a constant $\eta>0$ such that 
	for any $k\in \mathbb{N}$,
	\begin{equation*}
	\mathbb{E}\big[\|\hat{\theta}-\theta^*\|^k\big]\,<\,\eta \,\left(\frac{{4d^{1.5} \log^{4}(mn)}}{\lambda} \, \max \left(\frac1{\sqrt{n}\,(mB)^{1/{d}}}, \, \frac{2^{d/2}}{\sqrt{mn}} \right)\right)^k.
	\end{equation*}
	Moreover, $\eta$ can be chosen arbitrarily close to $1$, for large enough values of $mn$.
\end{corollary}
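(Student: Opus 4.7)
The plan is to lift the high-probability tail bound of Theorem~\ref{th:main upper c} to a moment bound via a standard truncation argument. Abbreviate the right-hand side appearing in Theorem~\ref{th:main upper c} as $R$; the theorem then reads $\Pr(\|\hat\theta-\theta^*\|>R)\le \exp(-c\log^2(mn))$ for some absolute constant $c>0$.

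First I would split the expectation across the event $\{\|\hat\theta-\theta^*\|\le R\}$ and its complement. On the good event, $\|\hat\theta-\theta^*\|^k\le R^k$ trivially. On the bad event, I bound $\|\hat\theta-\theta^*\|$ deterministically using the diameter of the feasible region: by Assumption~\ref{ass:1}, $\theta^*\in[-1,1]^d$, and by construction $\hat\theta\in \tilde{G}_{s^*}^t\subseteq C_{s^*}$, where $C_{s^*}$ is a cube of edge $2\log(mn)/\sqrt{n}$ centered at a grid point of $[-1,1]^d$, so $\|\hat\theta-\theta^*\|\le D$ for some deterministic $D=O(\sqrt{d}\log(mn))$. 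Combining,
\[\mathbb{E}\big[\|\hat\theta-\theta^*\|^k\big] \le R^k + D^k\exp\big(-c\log^2(mn)\big) = R^k\left(1 + (D/R)^k \exp\big(-c\log^2(mn)\big)\right).\]

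It remains to bound the bracketed factor by a constant $\eta$. Because $R$ depends polynomially (up to logarithmic factors) on $1/(mn)$, while $D$ depends only polylogarithmically on $mn$, we have $\log(D/R)=O(\log(mn))$, and hence $(D/R)^k$ grows at most polynomially in $mn$ for each fixed $k$. This polynomial growth is dwarfed by the super-polynomial decay $\exp(-c\log^2(mn))$, so for every fixed $k$ the second summand in the bracket tends to $0$ as $mn\to\infty$. The bracket is therefore bounded by some finite $\eta$, giving the first claim, and can be driven arbitrarily close to $1$ by taking $mn$ large enough, giving the refinement stated in the corollary. The main technical subtlety is a race in the exponent: one must verify that the constant hidden inside the $\Omega(\log^2(mn))$ of Theorem~\ref{th:main upper c} is independent of $k$, $d$, and $\lambda$, so that for each fixed $k$ the quadratic decay in $\log(mn)$ eventually dominates the $k\log(D/R)=O(k\log(mn))$ growth.
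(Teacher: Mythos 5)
The paper states Corollary~\ref{cor:upper c} without an explicit proof, so there is nothing to compare against verbatim; your truncation argument is the natural (and almost certainly intended) route, and it is essentially correct. The decomposition into a good event where $\|\hat\theta-\theta^*\|\le R$ and a bad event controlled by a deterministic diameter $D$ is exactly right, as is the observation that $\hat\theta$ lives in $C_{s^*}$ rather than $[-1,1]^d$, so one must pad the diameter to $O\big(\sqrt{d}\,(1+\log(mn)/\sqrt{n})\big)$; from this $\log(D/R)=O(\log(mn))$ and the super-polynomial tail $\exp(-\Omega(\log^2(mn)))$ does dominate $(D/R)^k$ for each fixed $k$ as $mn\to\infty$.

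Two small remarks. First, your worry that the hidden constant in $\Omega(\log^2(mn))$ must be independent of $d$ and $\lambda$ is unnecessary: tracing the proof of Theorem~\ref{th:main upper c} one sees that it \emph{does} depend on $d$ and $\lambda$ (e.g.\ via the $\lambda^2/d$ factor in Claim~\ref{lemma:10}), but since $d$ and $\lambda$ are fixed parameters of the problem this only shifts the threshold on $mn$ and does not affect the conclusion; what matters, and what is clear, is that the constant does not depend on $k$. Second, your own estimate $\mathbb{E}[\|\hat\theta-\theta^*\|^k]\le R^k\big(1+(D/R)^k e^{-c\log^2(mn)}\big)$ shows that the multiplicative factor genuinely depends on $k$ and on $mn$: for fixed $mn$ the bracket grows without bound as $k\to\infty$ (whenever $D>R$), so no single $\eta$ can serve all $k$ uniformly. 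Your proof therefore establishes the corollary in the only sensible reading --- for each fixed $k$, a suitable $\eta$ (depending on $k,d,\lambda$ and on $mn$) exists and tends to $1$ as $mn\to\infty$ --- which is a looseness of the corollary's phrasing rather than a gap in your argument.
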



The upper bound in Theorem~\ref{th:main upper c} matches the lower bound in Theorem~\ref{th:lower c} up to a polylogarithmic factor. 
In this view, the \MREC algorithm has order optimal error.
Moreover, as we show in Appendix~\ref{sec:proof main alg c}, in the course of computations, the server  obtains  an approximation $\hat{F}$  of $F$ such that for any $\theta$ in the cube $C_{s^*}$, we have $\|\nabla \hat{F}(\theta) -\nabla F(\theta)\| = \tilde{O}\big( m^{-1/d}n^{-1/2})$. 
Therefore, the server not only finds the minimizer of $F$, but also obtains an approximation of $F$ at all points inside $C_{s^*}$. 
This is in line with our previous observation in Insight~\ref{ins:1}.


\medskip
\section{Learning under Tiny Communication Budget} \label{sec:constant}
In this section, we consider the regime that  communication budget per transmission is bounded by a constant, i.e., $B$ is a constant independent of $m$ an $n$.  We present a  lower bound on the estimation error and  propose an estimator whose error vanishes as $m$ and $n$ tend to infinity.

We begin with a lower bound. The next theorem shows that when $n=1$, the expected error is lower bounded by a constant, even if $m$ goes to infinity.  
\begin{theorem} \label{th:Hinf}
	Let $n=1$ and suppose that the  number of bits per signal, $B$, is limited to a constant.
	Then, there is a distribution $P$ over $\F$ such that expected error, $\mathbb{E}_{P}\left[\|\hat{\theta}-\theta^*\|\right]$, of any randomized estimator $\hat{\theta}$ is lower bounded by a constant, 
	for all $m\geq 1$.
	The constant lower bound holds even when $d=1$.
\end{theorem}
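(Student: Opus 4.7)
Plan:

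My plan is Le Cam's two-point method exploiting the $n=1$ structure. For any randomized estimator (protocol $\pi$ together with decoder $\hat\theta$) I aim to exhibit two distributions $P_0,P_1$ on $\F$ such that (a) $\pi_\ast P_0 = \pi_\ast P_1$ as distributions on $\{0,1\}^B$, and (b) $|\theta^\ast(P_0)-\theta^\ast(P_1)|\ge c$ for a positive constant $c = c(B)$. Because $n=1$, each machine's message $Y^i$ is an i.i.d.\ draw from $\pi_\ast P$, so (a) forces the joint law of $(Y^1,\ldots,Y^m)$ to coincide under $P_0$ and $P_1$ for every $m\ge 1$; Le Cam's inequality then gives $\max_{j\in\{0,1\}}\mathbb{E}_{P_j}[|\hat\theta-\theta^\ast(P_j)|]\ge c/2$, independent of $m$, which is exactly the claimed lower bound.

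For deterministic $\pi:\F\to\{0,1\}^B$ the pair $(P_0,P_1)$ is constructed by pigeonhole. The $2^B$ cells $A_y=\pi^{-1}(y)$ partition $\F$, and since $\F$ (under Assumption~\ref{ass:1}) contains convex functions with every minimizer in $(-1,1)$, the projected sets $\{\arg\min f : f\in A_y\}$ cover $(-1,1)$. A simple covering argument forces some cell $A_{y^\star}$ to have minimizer-spread at least $2\cdot 2^{-B}$, so I can pick $f_0,f_1\in A_{y^\star}$ with minimizers at least $2^{-B}$ apart; setting $P_j=\delta_{f_j}$ yields $\pi_\ast P_j=\delta_{y^\star}$ (so every machine sends the constant message $y^\star$) and $\theta^\ast(P_j)=\arg\min f_j$, satisfying (a) and (b) with $c=2^{-B}$.

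To extend to randomized $\pi$ I view it as a stochastic kernel $\tilde\pi:\F\to\Delta^{2^B-1}$ and consider the linear pushforward $\Pi(P)=\int\tilde\pi(f)\,dP(f)$ from the infinite-dimensional simplex of probability measures on $\F$ into $\Delta^{2^B-1}$. Every fiber of $\Pi$ is infinite-dimensional, so my plan is to select $N=2^B+1$ functions $f_1,\ldots,f_N\in\F$ with minimizers spread across $[-1/2,1/2]$, observe that their images $\tilde\pi(f_1),\ldots,\tilde\pi(f_N)$ must be affinely dependent in the $(2^B-1)$-dimensional simplex and hence satisfy $\sum_j v_j\tilde\pi(f_j)=0$ and $\sum_j v_j=0$ for some nonzero $v$, and then perturb the uniform mixture $P^0=(1/N)\sum_j\delta_{f_j}$ in the direction $\pm v$ to obtain $P_{\pm t}=\sum_j(1/N\pm t v_j)\delta_{f_j}$ lying in a common fiber of $\Pi$. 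The main obstacle will be the quantitative step: ensuring that the spread of $\theta^\ast$ across this fiber is a positive constant independent of $\pi$. Purely quadratic test families are not enough, because there $\theta^\ast(P)=\sum_j p_j\arg\min f_j$ is the linear ``mean'' functional that an adversarial $\pi$ can align with to recover $\theta^\ast$ from $\pi_\ast P$; the fix is to choose the $f_j$ so that $P\mapsto\theta^\ast(P)$ is a genuinely nonlinear functional of the mixing weights (for instance, functions with sharp, weighted-median-like minima), so that no $(2^B-1)$-dimensional linear summary of $P$ can pin it down. Combining the resulting constant fiber-spread bound with Le Cam then closes the argument uniformly in $m\ge 1$.
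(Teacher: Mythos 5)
Your overall strategy---Le Cam's two-point method combined with the observation that a randomized $B$-bit encoder is a stochastic kernel $\tilde\pi:\F\to\Delta^{2^B-1}$, so that one can force affine dependence among finitely many images and perturb a base mixture in a null direction of the channel---is exactly the route the paper takes. The paper phrases it as picking $v$ in the null space of $[A\,|\,\mathbf{1}]^T$ where $A$ is the $(2^B+2)\times 2^B$ stochastic matrix of the strategy, and then comparing $P$ and $Q=P+2^{-(B+2)}v$, which have identical message laws for every $m$. Your count of $2^B+1$ test functions is in fact slightly tighter than the paper's $2^B+2$, and your observation that purely quadratic test families fail (because $\theta^\ast(P)=\sum_j p_j\,\arg\min f_j$ is then a \emph{linear} functional of $P$, which a $1$-bit randomized encoder can transmit losslessly in the large-$m$ limit) is correct and is the same reason the paper does not use quadratics. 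The deterministic pigeonhole warm-up is a nice pedagogical addition but is subsumed by the randomized case.

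The genuine gap is the one you flag yourself as ``the main obstacle'': making the fiber-spread bound quantitative and uniform over the adversary's choice of $\pi$. This is the crux, and your proposal stops at a heuristic. Two problems with the heuristic as stated. First, ``sharp, weighted-median-like minima'' (absolute-value-type losses) are not differentiable, so they violate Assumption~\ref{ass:1}; and once you smooth them enough to restore a Lipschitz gradient, you are locally quadratic near the minimizer, precisely the regime where $\theta^\ast(P)$ becomes linear in the weights again. Second, even with nonlinear test functions you still need a uniform positive lower bound over all unit vectors $v$ with $\sum_j v_j=0$, which requires both (a) that no such $v$ makes $\sum_j v_j\,f_j'(\theta)$ vanish identically on a fixed interval, and (b) a compactness argument to pass from ``positive for every $v$'' to ``bounded away from zero.'' The paper's resolution is to take $f(\theta,i)=\theta^2+\theta^i/i!$, so that under $\sum_i v_i=0$ the quadratic parts cancel and $\sum_i v_i f'(\theta,i)=\sum_i v_i\,\theta^{i-1}/(i-1)!$ is a nonzero polynomial of degree at most $2^B+1$. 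The fundamental theorem of algebra then rules out (a), and the infimum in \eqref{eq:def eps const} over the compact unit sphere intersected with the hyperplane $\{\sum_i v_i=0\}$ supplies (b). It then picks $P$ ``central'' (bounded away from the simplex boundary) so that $Q=P\pm 2^{-(B+2)}v$ remains a probability vector, and converts the nonvanishing derivative into a minimizer displacement via strong convexity of $\sum p_i f(\cdot,i)$. If you replace your placeholder family with this polynomial family and run the compactness argument, your plan closes; without a concrete family satisfying Assumption~\ref{ass:1} and the FTA-type nondegeneracy, the argument does not.

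One minor structural remark: your Le Cam phrasing is actually a bit cleaner than the paper's, since you observe directly that $\pi_\ast P_0=\pi_\ast P_1$ forces the joint law of $(Y^1,\ldots,Y^m)$ to coincide for \emph{every finite} $m$, whereas the paper invokes the law of large numbers and speaks of the $m\to\infty$ limit; your version avoids that detour. Both proofs, yours and the paper's, actually establish the minimax form ``for every estimator there is a bad $P$,'' with $P$ depending on the strategy, rather than the literal ``there is a fixed $P$ bad for every estimator''; this is a standard looseness in the theorem statement and not a deficiency specific to your proposal.
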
	
The proof is given in Appendix~\ref{sec:proof constant bit lower bound}. 
There, we construct a distribution $P$ that associates non-zero probabilities to $2^b+2$ polynomials of order at most $2^b+2$. 
Theorem~\ref{th:Hinf} shows that the expected error is bounded from below by a constant regardless of $m$, when $n=1$ and $B$ is a constant.

We now show that the expected error can be made arbitrarily small as $m$ and $n$ go to infinity simultaneously.
\begin{theorem}
	Under the communication budget of $B=d$ bits per transmission, 
	there exists a randomized estimator $\hat{\theta}$ such that
	\begin{equation*}
	\mathbb{E}\left[\|\hat{\theta}-\theta^*\|^2\right]^{1/2}=O\left(\frac{1}{\sqrt{n}}+\frac{1}{\sqrt{m}}\right).
	\end{equation*}
	\label{th:H2constb}
\end{theorem}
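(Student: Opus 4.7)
The plan is to combine local empirical-risk minimization at each machine with one-bit-per-coordinate randomized quantization. Each machine $i$ will compute the empirical minimizer
\[
\theta^i \;=\; \argmin{\theta \in [-1,1]^d} \frac{1}{n}\sum_{j=1}^n f^i_j(\theta),
\]
and then, independently for each coordinate $k \in \{1,\ldots,d\}$, transmit a random bit $q^i_k \in \{-1,+1\}$ with $\Pr(q^i_k = 1 \mid \theta^i) = (1+\theta^i_k)/2$. This consumes exactly $d$ bits and satisfies $\Exp[q^i_k \mid \theta^i] = \theta^i_k$, since $\theta^i_k \in [-1,1]$. The server then outputs $\hat{\theta} = (1/m)\sum_{i=1}^m q^i$.

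For the error analysis, let $\bar{\theta} = (1/m)\sum_i \theta^i$ and $\mu = \Exp[\theta^i]$, and decompose $\hat{\theta} - \theta^* = (\hat{\theta}-\bar{\theta}) + (\bar{\theta}-\mu) + (\mu - \theta^*)$. Conditional on $\{\theta^i\}_i$, the vector $\hat{\theta}-\bar{\theta}$ is an average of $m$ independent zero-mean random vectors whose coordinates lie in $[-1,1]$, hence $\Exp\|\hat{\theta}-\bar{\theta}\|^2 \leq d/m$. Since the $\theta^i$ are i.i.d., $\Exp\|\bar{\theta}-\mu\|^2 \leq \Exp\|\theta^i - \theta^*\|^2/m$, while $\|\mu-\theta^*\|^2 \leq \Exp\|\theta^i-\theta^*\|^2$ by Jensen. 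Using $(a+b+c)^2 \leq 3(a^2+b^2+c^2)$, these pieces assemble into
\[
\Exp\|\hat{\theta}-\theta^*\|^2 \;=\; O(d/m) \,+\, O\big(\Exp\|\theta^i-\theta^*\|^2\big).
\]
The whole theorem therefore reduces to the single-machine M-estimator bound $\Exp\|\theta^i - \theta^*\|^2 = O(1/n)$, with constants allowed to depend on $d$ and $\lambda$.

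Establishing this M-estimator bound is the main obstacle, because the classical argument via strong convexity of the \emph{sample} loss is not available: Assumption~\ref{ass:1} provides only Lipschitz first-order derivatives, and strong convexity holds only at the population level. I would proceed by peeling (localization). On the event $\{r/2 \leq \|\theta^i - \theta^*\| \leq r\}$, the basic inequality $\hat{F}^i(\theta^i) \leq \hat{F}^i(\theta^*)$ combined with Assumption~\ref{ass:2} gives
\[
\frac{\lambda r^2}{4} \;\leq\; F(\theta^i) - F(\theta^*) \;\leq\; h(\theta^i) - h(\theta^*),
\]
where $h := F - \hat{F}^i$ is a centered process with $2$-Lipschitz gradient. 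I would bound the empirical process $\Exp \sup_{\|\theta-\theta^*\|\leq r}|h(\theta)-h(\theta^*)| = O(r\sqrt{d/n})$ using an $\epsilon$-net of size $O((r/\epsilon)^d)$ on the ball of radius $r$, Hoeffding's inequality at net points, and Lipschitz interpolation between them. Markov's inequality then gives $\Pr(r/2 \leq \|\theta^i - \theta^*\| \leq r) = O\big(\sqrt{d/n}/(\lambda r)\big)$. Summing the resulting tail bound over dyadic radii $r\in\{2^{-k}\}$ and integrating $\Exp\|\theta^i - \theta^*\|^2 = \int_0^\infty 2r\,\Pr(\|\theta^i-\theta^*\|>r)\,dr$ yields $\Exp\|\theta^i - \theta^*\|^2 = O(d/(\lambda^2 n))$.

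Substituting this back into the earlier decomposition gives $\Exp\|\hat{\theta}-\theta^*\|^2 = O(1/m + 1/n)$ after absorbing the $d$- and $\lambda$-dependent constants, and taking square roots completes the proof. The communication cost is exactly $B = d$ bits per machine by construction, so the scheme meets the stated budget. The delicate step throughout is the empirical-process bound of the peeling phase, since naive gradient concentration at $\theta^*$ combined with smoothness is insufficient (it would lose a factor $1/(2\lambda-2)$, which is not positive under our assumption $\lambda \leq 1/\sqrt{d}$); the uniform bound on the shell is what rescues the argument.
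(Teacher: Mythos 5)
Your algorithm and the error decomposition are exactly the paper's: each machine quantizes its empirical minimizer $\theta^i$ coordinate-wise to one random bit, the server averages, and the squared error splits into a $O(d/m)$ term (quantization plus sampling noise) and a term governed by $\Exp\|\theta^i-\theta^*\|^2$. The paper reaches the latter via a coordinate-wise gradient deviation bound (Claim~\ref{lemma:10}), which produces the sub-Gaussian tail $\Pr\big(\|\theta^i-\theta^*\|\ge \alpha/\sqrt{n}\big)\le d\exp(-\alpha^2\lambda^2/d)$ and hence $\Exp\|\theta^i-\theta^*\|^2 = O\big(d^2/(\lambda^2 n)\big)$ directly by integrating the tail. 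You instead go through a peeling argument, which is a reasonable, arguably more scrupulous route given that the sample loss is not strongly convex.

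However, your peeling step has a genuine quantitative gap. You control the shell probability by Markov's inequality on $\Exp\sup_{\|\theta-\theta^*\|\le r}|h(\theta)-h(\theta^*)|$, which gives only the polynomial tail $\Pr\big(\|\theta^i-\theta^*\|>r\big)=O\big(\sqrt{d/n}/(\lambda r)\big)$. When you then integrate $\Exp\|\theta^i-\theta^*\|^2=\int_0^\infty 2r\,\Pr(\|\theta^i-\theta^*\|>r)\,dr$, the integrand becomes $\Theta\big(\sqrt{d/n}/\lambda\big)$, a \emph{constant in $r$}, for all $r$ between the critical radius $r^*\asymp\sqrt{d/n}/\lambda$ and the domain diameter $\Theta(\sqrt{d})$. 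The integral therefore evaluates to $O\big(d/(\lambda\sqrt{n})\big)$, i.e.\ $O(1/\sqrt{n})$ rather than the $O(1/n)$ you assert. Substituted into the decomposition this yields $\Exp\big[\|\hat\theta-\theta^*\|^2\big]^{1/2}=O(n^{-1/4}+m^{-1/2})$, which is strictly weaker than the theorem. (Upper-bounding $\|\mu-\theta^*\|^2$ by the second moment also concedes too much; bounding it by $\big(\Exp\|\theta^i-\theta^*\|\big)^2$ under the same Markov tail still only gives $O(\log n/n)$.) The fix within your framework is to pass from the \emph{mean} of the empirical-process supremum to its \emph{concentration}: since replacing any one sample changes $\sup_{\|\theta-\theta^*\|\le r}|h(\theta)-h(\theta^*)|$ by at most $O(r/n)$, McDiarmid's bounded-differences inequality gives $\Pr\big(\sup(\cdot)\ge \Exp\sup(\cdot)+t\big)\le \exp(-cnt^2/r^2)$; combined with $\Exp\sup(\cdot)=O(r\sqrt{d/n})$ and the basic inequality $\sup(\cdot)\ge\lambda r^2/4$ on the shell, this delivers a sub-Gaussian tail $\Pr\big(\|\theta^i-\theta^*\|>r\big)\le\exp(-cn\lambda^2 r^2/d)$ for $r\gtrsim\sqrt{d/n}/\lambda$, whose integral is $O\big(d/(\lambda^2 n)\big)$ as needed. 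Alternatively, you could follow the paper's simpler device of bounding $\|\nabla F(\theta^i)-\nabla\hat F^i(\theta^i)\|$ coordinate-wise via Hoeffding (with a uniform-over-grid union bound to handle the data-dependence of $\theta^i$, which the paper elides).
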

The proof is  given in Appendix~\ref{app:constant bit upper bound}. 
There, we propose a simple randomized algorithm in which each machine $i$ first computes an $O(1/\sqrt{n})$-accurate estimation $\theta^i$ based on its observed functions. It then generates as its output signal a random binary sequence of length $d$ whose $j$th entry is $1$ with probability $(1+\theta_j^i)/2$, where $\theta_j^i$ is the $j$th entry of $\theta^i$.
The server then computes $\hat\theta$ based on the average of received signals.





\section{Experiments} \label{sec:numerical}
\vspace{-.1cm}
We evaluated the performance of \MREC on two learning tasks and compared with the averaging method (AVGM) in \citep{zhang2012communication}. Recall that in AVGM, each machine sends the empirical risk minimizer of its own data to the server and the average of received parameters at the server is returned in the output.

 The first experiment concerns the problem of ridge regression. Here, each sample $(X,Y)$ is generated based on a linear model $Y=X^T\theta^*+E$, where $X$, $E$, and $\theta^*$ are  sampled from $N(\mathbf{0},I_{d\times d})$, $N(0,0.01)$, and uniform distribution over $[0,1]^d$, respectively. We consider square loss function with $l_2$ norm regularization: $f(\theta)=(\theta^TX-Y)^2+0.1 \|\theta\|_2^2$.
In the second experiment, we perform a logistic regression task, considering sample vector $X$ generated according to $N(\mathbf{0},I_{d\times d})$ and labels $Y$ randomly drawn from $\{-1,1\}$ with probability $\Pr(Y=1|X,\theta^*)=1/(1+\exp(-X^T \theta^*))$.
In both experiments, we consider a two dimensional domain ($d=2$) and assumed that each machine has access to one sample ($n=1$).

\begin{figure}[t!]
	\centering
	\begin{subfigure}[t]{0.5\textwidth}
		\centering
		\includegraphics[width=2.35in]{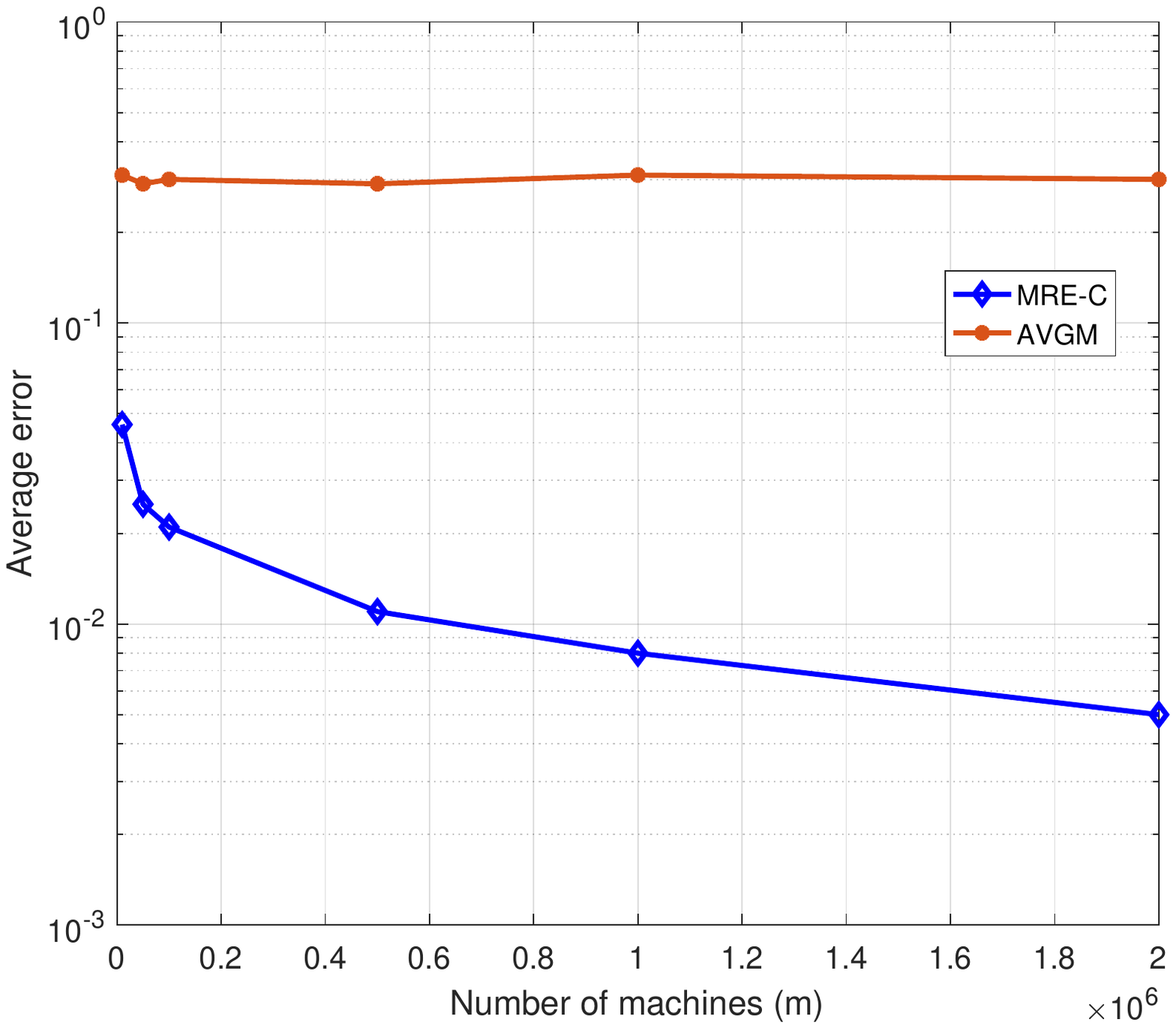}
		\caption{Ridge regression}
	\end{subfigure}%
	~ 
	\begin{subfigure}[t]{0.5\textwidth}
		\centering
		\includegraphics[width=2.37in]{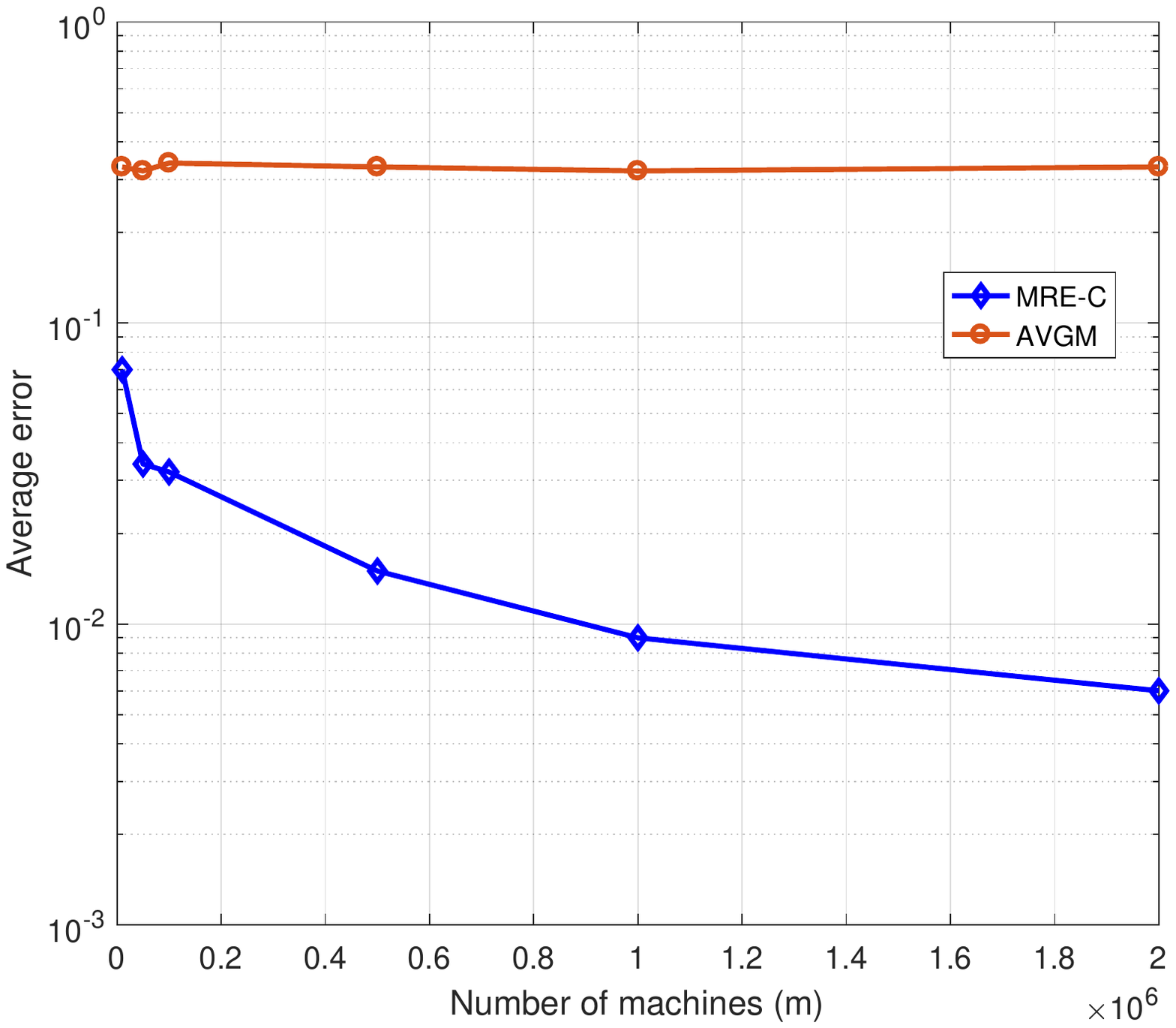}
		\caption{Logistic regression}
	\end{subfigure}
	\caption{The average  of \MREC and AVGM algorithms versus the number of machines in two different learning tasks. }
	\label{fig:sim}
\end{figure}

In Fig. \ref{fig:sim}, the average of $\|\hat{\theta}-\theta^*\|_2$ is computed over $100$ instances  for the different number of machines in the range $[10^4,10^6]$. Both experiments suggest that the average error of \MREC keep decreasing as the number of machines increases.  This is  consistent with  the result in Theorem~\ref{th:main upper c}, according to which  the expected error of \MREC is upper bounded by  $\tilde{O}(1/\sqrt{mn})$. 
It is evident from the error curves that \MREC outperforms the AVGM algorithm in both tasks. This is  because where $m$ is much larger than $n$, the expected error of the  AVGM algorithm typically scales as $O(1/n)$, independent of $m$.


\medskip
\section{Discussion} \label{sec:discussion}
We studied the problem of statistical optimization of convex loss landscapes in a distributed system with one-shot communications. 
We present matching upper and lower bounds on estimation error under general communication constraints.
We showed that the expected error of any estimator is lower bounded by $\|\hat\theta-\theta^*\|=\tilde{\Omega}\left(\max\left(n^{-1/2} (mB)^{-1/d},\,(mn)^{-1/2} \right)\right)$. 
We proposed an algorithm called \MREC\!, whose estimation errors meet the above lower bound up to a poly-logarithmic factor. 
More specifically, the \MREC algorithm has error no large than $\|\hat\theta-\theta^*\|=\tilde{O}\left(\max\left(n^{-1/2} (mB)^{-1/d},\,(mn)^{-1/2} \right)\right)$. 
Aside from being order optimal, the \MREC algorithm has the advantage over the existing estimators that its error tends to zero as the number of machines goes to infinity, even when the number of samples per machine is upper bounded by a constant and  communication budget is limited to $d\log mn$ bits per transmission.
This property is in line with the out-performance of the \MREC algorithm in the  $m\gg n$ regime, as verified in our experimental results. 

The key insight behind the proof of the lower bound and the design of our algorithm is an observation that emerges from our proofs: in the one-shot model, finding an $O\big(n^{-1/2} m^{-1/d})$-accurate minimizer of $F$  is as difficult as finding an $O\big( n^{-1/2} m^{-1/d})$-accurate approximation of $\nabla F$ for all points in an $n^{-1/2}$-neighborhood of $\theta^*$. 
Capitalizing on this observation, the \MREC algorithm  computes, in an efficient way, an approximation of the gradient of the expected loss  over a neighborhood of $\theta^*$. 
It then output a minimizer of approximate gradient norms as its estimate of the loss minimizer. 
It is quite counter intuitive that while \MREC algorithm carries out an intricate and  seemingly redundant task of approximating the loss function for all points in a region, it is still very efficient, and indeed order optimal in terms of estimation error and sample complexity. 
This remarkable observation is in line with the above insight that finding an approximate minimizer is as hard as finding an approximation of the function over a relatively large neighborhood of the minimizer.

We also addressed the problem of distributed learning under tiny (constant) communication budget. 
We showed that when budget $B$ is a constant and for $n=1$, the expected error of any estimator is lower bounded by a constant, even when $m$ goes to infinity. 
We then proposed an estimator with the budget of $B=d$ bits pert transmission and showed that its expected error is no larger than $O\left(n^{-1/2}+m^{-1/2}\right)$.

Our algorithms and bounds are designed and derived for a broader class of functions with Lipschitz continuous first order derivatives, compared to the previous works that consider function classes with Lipschitz continuous second or third order derivatives. 
The assumption is indeed both  practically important  and technically challenging. 
For example, it is well-known that the loss landscapes involved in learning applications and neural networks are highly non-smooth. Therefore, relaxing assumptions on higher order derivatives is actually a practically important improvement over the previous works.
On the other hand, considering Lipschitzness only  for the first order derivative  renders the problem way more difficult. To see this, note that when $n>m$, the existing upper bound $O\left((mn)^{-1/2}+n^{-1}\right)$ for the case of Lipschitz second derivatives  goes below the  $O(m^{-1/d}n^{-1/2})$ lower bound in the  case of Lipschitz first derivatives.

A drawback of the \MRE algorithms is that each machine requires to know $m$ in order to set the number of levels for the grids. This however can be resolved by considering infinite number of levels, and letting the probability that $p$ is chosen from level $l$ decrease exponentially with $l$. 
The constant lower bound in Theorem~\ref{th:Hinf} decreases exponentially with $B$. This we expect because when $B=d\log mn$,  error of the \MREC algorithm is proportional to an inverse polynomial of $m$ and $n$ (see Theorem~\ref{th:main upper c}, and therefore decays exponentially with $B$.


There are several open problems and directions for future research. 
The first group of problems involve the constant bit regime. 
It would be interesting if one could verify whether or not the bound in Theorem~\ref{th:H2constb} is order optimal. 
We conjecture that this bound is tight, and no estimator has expected error smaller than $o\left(n^{-1/2}+m^{-1/2}\right)$, when the communication budget is bounded by a constant. This would essentially be  an extension of Theorem~\ref{th:Hinf} for $n>1$.
Another interesting problem involves the regime $B<d$, and best accuracy achievable with $B<d$ bits per transmission?

As for the \MREC estimator, the estimation error of these algorithms are  optimal up to poly-logarithmic factors in $m$ and $n$. However, the bounds in Theorem~\ref{th:main upper c}  have  an extra exponential dependency on $d$. Removing this exponential dependency is an important problem to address in future works.

More importantly, an interesting problem involves the relaxation of the convexity assumption (Assumption~\ref{ass:2}) and finding tight lower bounds and order-optimal estimators for general non-convex loss landscapes, in the  one-shot setting. This we address in an upcoming publication (see \cite{AlSG20nc} for first drafts).

Another important group of problems concerns a more restricted class of functions with Lipschitz continuous second order derivatives. 
Despite several attempts in the literature, the optimal scaling of expected error for this class of functions in the $m\gg n$ regime is still an open problem.




\section*{Acknowledgments}
This research was supported by INSF under contract No. 97012846.
The first author thanks Nitin Vaidya for giving invaluable insights about the problem of distributed learning. The second author thanks John N. Tsitsiklis and Philippe Rigollet for fruitful discussions  on Lemma~\ref{lemma:metric}.

\bibliography{ref}
\bibliographystyle{plainnat}

\newpage

\medskip
\medskip
\section*{\centering \huge Appendices }
\appendix

\section{Preliminaries} \label{app:prelim}
In this appendix, we review some preliminaries that will be used in the proofs of our main results.

\subsection{Concentration inequalities}
We collect two well-known concentration inequalities in the following lemma.
	\begin{lemma} (Concentration inequalities)
		\begin{enumerate}
			\item[(a)] (Hoeffding's inequality)  
			Let $X_1,\cdots,X_n$ be independent random variables ranging over the interval $[a,a+\gamma]$. Let $\bar{X}=\sum_{i=1}^n X_i/n$ and $\mu =\mathbb{E}[\bar{X}]$.
			Then, for any $\alpha>0$,
			\begin{equation*}
			\Pr\big(|\bar{X}-\mu|>\alpha\big)\leq 2\exp\left(\frac{-2n\alpha^2}{\gamma^2}\right).
			\end{equation*}
			
			\item[(b)] (Theorem 4.2 in \cite{motwani1995randomized})  
			Let $X_1,\cdots,X_n$ be independent Bernoulli  random variables, ${X}=\sum_{i=1}^n X_i$, and $\mu =\mathbb{E}[{X}]$.
			Then, for any $\alpha\in(0,1]$,
			\begin{equation*}
			\Pr\big(X<(1-\alpha)\mu\big)\leq \exp\left(-\frac{\mu\alpha^2}{2}\right).
			\end{equation*}
		\end{enumerate}
		\label{lemma:CI}
	\end{lemma}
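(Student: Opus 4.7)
The plan is to prove both parts via the classical Chernoff-bound template: bound the moment generating function (MGF) of the relevant sum, apply Markov's inequality to an exponential, and optimize the free parameter. Both inequalities are standard, so the proof will essentially be a self-contained recap of the textbook arguments cited in the statement.

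For part (a), I would first reduce to the centered case by considering $Z_i = X_i - \E[X_i]$, so each $Z_i$ lies in an interval of length $\gamma$ with mean zero. The core ingredient is Hoeffding's lemma: for any zero-mean random variable $Z$ supported in an interval of length $\gamma$, $\E[e^{tZ}] \le \exp(t^2\gamma^2/8)$. I would prove this by convexity of $e^{tz}$ (so $e^{tz}$ is bounded above by the linear interpolation between the endpoint values), take expectation, and then show by standard calculus on the logarithm of the resulting bound (writing it as $e^{\psi(t)}$ and checking $\psi(0)=\psi'(0)=0$, $\psi''(t)\le \gamma^2/4$) that the sub-Gaussian bound holds. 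Given this lemma, independence gives $\E[e^{t\sum_i Z_i}] \le \exp(nt^2\gamma^2/8)$. Markov's inequality then yields $\Pr(\bar{X}-\mu > \alpha) \le \exp(-nt\alpha + nt^2\gamma^2/8)$, and optimizing over $t>0$ (choosing $t=4\alpha/\gamma^2$) gives $\exp(-2n\alpha^2/\gamma^2)$. A symmetric argument for the lower tail and a union bound produce the factor of $2$.

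For part (b), I would use the multiplicative Chernoff bound for Bernoullis. Letting $p_i = \Pr(X_i=1)$, so $\mu = \sum_i p_i$, I compute $\E[e^{-tX_i}] = 1 - p_i + p_i e^{-t} = 1 + p_i(e^{-t}-1) \le \exp\!\big(p_i(e^{-t}-1)\big)$ using $1+x \le e^x$. By independence, $\E[e^{-tX}] \le \exp\!\big(\mu(e^{-t}-1)\big)$. Markov then gives, for any $t>0$,
\begin{equation*}
\Pr\big(X < (1-\alpha)\mu\big) = \Pr\big(e^{-tX} > e^{-t(1-\alpha)\mu}\big) \le \exp\!\Big(\mu(e^{-t}-1) + t(1-\alpha)\mu\Big).
\end{equation*}
Choosing $t = -\ln(1-\alpha)$ yields the standard bound $\big(e^{-\alpha}/(1-\alpha)^{1-\alpha}\big)^\mu$, and a Taylor expansion of $\ln(1-\alpha) - \alpha + \alpha^2/2$ shows this is at most $\exp(-\mu\alpha^2/2)$ for $\alpha \in (0,1]$.

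The main (minor) obstacle is the last inequality in part (b): verifying $(1-\alpha)\ln(1-\alpha) + \alpha \ge \alpha^2/2$ for $\alpha \in (0,1]$. I would handle this by defining $g(\alpha) = (1-\alpha)\ln(1-\alpha) + \alpha - \alpha^2/2$, checking $g(0)=0$, $g'(0)=0$, and $g''(\alpha) = \alpha/(1-\alpha) \ge 0$, so $g\ge 0$ on $[0,1)$. Since both inequalities are standard and referenced from the literature, the proof can be kept brief, essentially by pointing to \cite{motwani1995randomized} for the Chernoff bound and a standard probability text for Hoeffding.
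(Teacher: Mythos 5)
Your proposal is a correct and standard Chernoff-method derivation of both inequalities, and the calculus verification of $(1-\alpha)\ln(1-\alpha)+\alpha\ge\alpha^2/2$ for $\alpha\in(0,1]$ is exactly the right way to close the loop in part (b). The paper itself states these as background facts with citations (Hoeffding; Theorem~4.2 of \cite{motwani1995randomized}) and supplies no proof, so there is nothing in the paper to compare your argument against; your write-up would serve as a self-contained substitute for the citations.
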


\subsection{Binary hypothesis testing}
	Let  $P_1$ and $P_2$ be probability distributions over $\{0,1\}$, such that $P_1(0)=1/2-1/5\sqrt{mn}$ and $P_2(0)=1/2+1/5\sqrt{mn}$.
	The following lemma provides a lower bound on the error probability of binary hypothesis tests between $P_1$ and $P_2$. By the error probability of a test, we mean the maximum between the probabilities of type one and type two errors. 
	\begin{lemma}\label{lem:hypo test}
		Consider probability distributions $P_1$ and $P_2$ as above.
		Then, for any binary hypothesis test $\cal{T}$ between $P_1$ and $P_2$ with $mn$ samples,  the error probability $\cal{T}$ is at least $1/3$.
	\end{lemma}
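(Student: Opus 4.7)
The plan is to apply the standard Neyman--Pearson reduction. For any (possibly randomized) binary hypothesis test on $mn$ i.i.d.\ samples, the maximum of the type I and type II error probabilities is bounded below by $\frac{1}{2}\bigl(1 - \|P_1^{mn} - P_2^{mn}\|_{TV}\bigr)$. It therefore suffices to upper bound the total variation distance between the two product distributions by $1/3$.

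To bound this total variation I would work through an $f$-divergence and tensorize. Writing $\delta = 1/(5\sqrt{mn})$, a direct computation gives
\[
KL(P_1 \| P_2) \;=\; 2\delta\,\log\frac{1+2\delta}{1-2\delta} \;=\; 8\delta^2 + O(\delta^4),
\]
so by additivity $KL(P_1^{mn}\|P_2^{mn}) = mn \cdot KL(P_1\|P_2)$ tends to $8/25$ as $mn$ grows. Pinsker's inequality then gives $\|P_1^{mn}-P_2^{mn}\|_{TV} \le 2/5$, which yields only error $\ge 3/10$, short of the desired $1/3$. To close the small remaining gap I would either invoke a sharper inequality (the Hellinger/Bhattacharyya bound, or a Bretagnolle--Huber style estimate), or bypass the $TV$ surrogate and analyze the optimal test directly.

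The direct route exploits the symmetry of the problem. By Neyman--Pearson the likelihood ratio is monotone in $S := \sum_i X_i$, so the optimal test rejects $P_1$ precisely when $S \le mn/2$; its type I error is $\Pr\bigl(\mathrm{Bin}(mn,\, 1/2+\delta) \le mn/2\bigr)$, and by symmetry of the construction the type II error equals the type I error. Applying a Berry--Esseen bound of size $O(1/\sqrt{mn})$ to the standardized binomial shows that this probability differs from $\Phi(-2\sqrt{mn}\,\delta) = \Phi(-2/5) \approx 0.345$ by a vanishing amount, and hence is at least $1/3$ for all sufficiently large $mn$ (which is the regime relevant to Theorem~\ref{th:lower c}).

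The main obstacle is precisely the tightness of this final constant: off-the-shelf Pinsker is loose by a small margin, and even the Hellinger refinement falls just above $1/3$. The Berry--Esseen approach is therefore preferable; it asymptotically matches the exact CLT limit $1 - \Phi(2/5)$ and makes the role of the constant $1/5$ in the definition of $\delta$ transparent: it is chosen to make the standardized mean equal $2/5$, which forces the Gaussian-limit error comfortably above $1/3$.
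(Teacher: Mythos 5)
Your final argument follows the same route as the paper's proof: reduce to bounding the minimax test error by $\frac{1}{2}\bigl(1-d_{TV}(P_1^{mn},P_2^{mn})\bigr)$, then use a normal approximation to show this is asymptotically $\Phi(-2/5)\approx 0.345>1/3$. The paper phrases the approximation step as ``the CLT implies $d_{TV}(P_1^{mn},P_2^{mn})\le d_{TV}(N_1,N_2)+0.02$'' and computes the Gaussian TV via the Q-function, whereas you analyze the Neyman--Pearson threshold test on $S=\sum X_i$ directly and control the binomial tail with Berry--Esseen; these are the same idea, but your version is somewhat cleaner, since pointwise CDF convergence does not by itself bound the total variation between a discrete law and its Gaussian limit, while Berry--Esseen applied to the half-line rejection region does exactly what is needed. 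One small point to make precise: your claim that the type~II error ``equals'' the type~I error holds only up to the boundary mass $\Pr(S=mn/2)=O(1/\sqrt{mn})$, which is absorbed by the same Berry--Esseen slack, so the conclusion is unaffected. The preliminary detour through Pinsker and Hellinger is correctly diagnosed as falling short of the required constant and plays no role in the final argument.
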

	\begin{proof}
		Let $P_1^{mn}$ and $P_2^{mn}$ be the product-distributions of $mn$ samples.
		Then, $\E\big[ P_1^{mn}  \big] = mn\left(1/2-1/5\sqrt{mn}\right)$, $\E\big[ P_2^{mn}  \big] = mn\left(1/2+1/5\sqrt{mn}\right)$, and $\var\big( P_1^{mn}  \big)= \var\big( P_2^{mn}  \big) = mn/4-1/25$.
		For large values of $mn$, the central limit theorem implies that the CDFs of $P_1^{mn}$ and $P_2^{mn}$ converge to the CDFs of $N_1 \triangleq N\big(mn(1/2-1/\sqrt{mn}),\, mn/4-1/25\big)$ and $N_2\triangleq N\big(mn(1/2+1/\sqrt{mn}),\, mn/4-1/25\big)$, respectively.
		Therefore, for large enough values of $mn$,
		\begin{equation}
		\begin{split}
		d_{TV}\big(P_1^{mn}, P_2^{mn}\big)\,&\le \, d_{TV}\big(N_1,N_2\big) \,+\,0.02\\ 
		&= \, 1-2 Q\left(\frac{1/5\sqrt{mn}}{\sqrt{1/4mn-1/\big(25(mn)^2\big)}}\right)\,+\,0.02\\
		& >  \, 1-2Q\left(0.4\right)\,+\,0.02.
		\end{split}
		\end{equation}
		where  $d_{TV}\big(P_1^{mn}, P_2^{mn}\big) $ is the total variation distance between $P_1^{mn}$ and $P_2^{mn}$,  $Q(\cdot)$ is the Q-function of normal distributions, and the equality is because the variances of  $N_1$ and $N_2$ are the same  and the two distributions have the same values at $1/5\sqrt{mn}$.
		On the other hand, it is well-known that the maximum of the two type of errors in any binary hypothesis test is lower bounded by $1/2(1-\delta)$, where $\delta$ is the  total variation distance between the underlying distributions corresponding to the hypotheses.
		Therefore, in our case, for large enough values of $mn$, the error probability of $\cal{T}$ is at least
		\begin{equation}
			\frac{1-	d_{TV}\big(P_1^{mn}, P_2^{mn}\big)}2 \,> \,\frac{2Q\left(0.4\right)\,-\,0.02}2\,>\, 0.3445-0.01\,>\frac13.
		\end{equation}
		This completes the proof of Lemma~\ref{lem:hypo test}.
	\end{proof}

\subsection{Fano's inequality}
In the rest of this appendix, we review a well-known inequality in information theory: Fano's inequality \citep{cover2012elements}. 
Consider a pair of random variables $X$ and $Y$ with certain joint probability distribution. 
Fano's inequality asserts that given an observation of $Y$ no estimator $\hat{x}$ can recover $x$ with probability of error less than $\big( H(X|Y)-1\big)/\log(|X|)$, i.e., 
\begin{equation*}
\Pr(e)\, \triangleq \,\Pr\big(\hat{x}\neq x\big) \, \geq\, \frac{H(X|Y)-1}{\log(|X|)},
\end{equation*} 
where $H(X|Y)$ is the conditional entropy and $|X|$ is the size of probability space of $X$. 
In the special case that $X$ has uniform marginal distribution, the above inequality further simplifies as follows: 
\begin{equation}
\begin{split}
H(X|Y) &= H(X,Y)-H(Y)\geq^a H(X)-H(Y)\\
&=^b\log(|X|)-H(Y)\geq \log(|X|)-\log(|Y|)\\
&\implies \Pr(e)\geq \frac{H(X|Y)-1}{\log(|X|)}\geq \frac{\log(|X|)-\log(|Y|)-1}{\log(|X|)}=1-\frac{\log(|Y|)+1}{\log(|X|)},
\end{split}
\label{eq:Fano unif}
\end{equation}  
\\($a$) Since $H(X,Y)\geq H(X)$.
\\($b$) $X$ has uniform distribution.

\section{Proof of Theorem \ref{th:lower c}} \label{app:proof th lower c}
Here, we present the proof of Theorem \ref{th:lower c}.
The high level idea is that if there is an algorithm that finds a minimizer of $F$ with high probability, then there is an algorithm that finds a fine approximation of $\nabla F$ over $O(1/\sqrt{n})$-neighborhood of $\theta^*$. 
The key steps of the proof are as follows.

We first consider a sub-collection $\S$ of $\F$ such that for any pair $f,g$ of functions in $\S$,  there is a point $\theta$ in the $O(1/\sqrt{n})$-neighborhood of $\theta^*$ such that $\|\nabla f(\theta) - \nabla g(\theta)\| \ge \epsilon/\sqrt{n}$.
We develop a metric-entropy based framework to show that such collection exists and can have as many as $\Omega(1/\epsilon^d)$  functions.
Consider a constant $\epsilon>0$ and suppose that there exists an estimator $\hat{\theta}$ that finds an $O(\epsilon/\sqrt{n})$-approximation of $\theta^*$ with high probability, for all distributions.
We generate a distribution $P$ that associates probability $1/2$ to an arbitrary function $f\in\S$, while  distributes the remaining $1/2$ probability unevenly over $2d$ linear functions.
The priory unknown probability distribution of these linear functions can displace the minimum of $F$ in an $O(1/\sqrt{n})$-neighborhood. 
Capitalizing on this observation, we show that 
the server needs to obtain an $\big(\epsilon/\sqrt{n}\big)$-approximation of $\nabla f$ all over this $O(1/\sqrt{n})$-neighborhood; because otherwise the server could mistake $f$ for another function $g\in \S$, which leads to $\Omega\big(\epsilon/\sqrt{n}\big)$-error in $\hat\theta$ for specific choices of probability distribution over the linear functions.
Therefore, the server needs to distinguish which function $f$ out of $1/\epsilon^d$ functions in $\S$ has positive probability in $P$. 
Using information theoretic tools (Fano's inequality \citep{cover2012elements}) we conclude that the total number of bits delivered to the server (i.e., $m b$ bits) must exceed the size of $\S$ (i.e., $\Omega(1/\epsilon^d)$). 
This implies that $\epsilon \ge (m b)^{1/d}$, and no estimator has error less than $O\big((m b)^{-1/d}n^{-1/2})$.

\subsection{Preliminaries}
Before going through the details of the proof, in this subsection we present some definitions and auxiliary lemmas. 	
Here, we will only consider a sub-collection of functions in $\F$ whose derivatives vanish at zero, i.e. $\nabla f(\mathbf{0})=\mathbf{0}$,
where $\mathbf{0}$ is the all-zeros vector.
Throughout the proof, we fix constant
\begin{equation} \label{eq:def c}
c \triangleq 4d\log(mn).
\end{equation}
Let $\Ft$ be a sub-collection of functions in $\F$ that are $\lambda$-strongly convex, that is for any $f\in\F_\lambda$ and any $\theta_1,\theta_2\in[0,1]^d$, we have $f(\mathbf{0}) =0$, $\nabla f(\mathbf{0})=\mathbf{0}$ and $f(\theta_2)\ge f(\theta_1)+(\theta_2-\theta_1)^T\nabla f(\theta_1)+\lambda \|\theta_2-\theta_1\|^2$. 
\begin{definition}[$(\epsilon,\delta)$-packing and metric entropy] \label{def:metric ent}
	Given $\epsilon,\delta>0$, a subset $S\subseteq \Ft$ is said to be an $(\epsilon,\delta)$-packing if 
	for any $f\in S$, $f(\mathbf{0}) =0$ and $\nabla f(\mathbf{0})=\mathbf{0}$; and  
	for any $f,g\in S$, there exists $x\in [-\delta,\delta]^d$ such that $\|\nabla f(x)-\nabla g(x)\|\geq \epsilon$. 
	We denote an $(\epsilon,\delta)$-packing with maximum size  by $S^*_{\epsilon,\delta}$, and refer to $K_{\epsilon,\delta}\triangleq \log |S^*_{\epsilon,\delta}|$ as the $(\epsilon,\delta)$-metric entropy. 
\end{definition}

\begin{lemma}
	For any $\epsilon,\delta \in (0,1)$ with $10\sqrt{d} \epsilon \leq \delta$, we have $K_{\epsilon,\delta}\geq  \left(\delta/20\epsilon\sqrt{d}\right)^d$.
	\label{lemma:metric}
\end{lemma}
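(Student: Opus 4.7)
The proof follows an Assouad/Varshamov-style packing construction. Set $N=\lfloor\delta/(20\sqrt d\,\epsilon)\rfloor$ and partition $[-\delta,\delta]^d$ into $N^d$ congruent subcubes $C_1,\dots,C_{N^d}$ of edge $s=2\delta/N$, which by the hypothesis $10\sqrt d\,\epsilon\le\delta$ satisfies $s\ge 40\sqrt d\,\epsilon$. The plan is to associate to each binary string $\sigma\in\{0,1\}^{N^d}$ a distinct candidate function $f_\sigma\in\Ft$ so that any two distinct strings produce gradients that disagree by at least $\epsilon$ at some point of $[-\delta,\delta]^d$. This immediately gives $|S^*_{\epsilon,\delta}|\ge 2^{N^d}$ and hence $K_{\epsilon,\delta}\ge N^d\ge(\delta/(20\sqrt d\,\epsilon))^d$, as claimed.

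The candidate functions will take the form $f_\sigma(x)=\lambda\|x\|^2+\sum_{i=1}^{N^d}\sigma_i\,\phi_i(x)$, where each bump $\phi_i$ is a convex $C^1$ function whose gradient carries an $\epsilon$-sized signature inside $C_i$ and is much smaller elsewhere. A concrete candidate is the half-quadratic $\phi_i(x)=c\,[\max\{0,u_i^T(x-a_i)\}]^2$, which is globally convex with $C^1$ gradient $2c\max\{0,u_i^T(x-a_i)\}u_i$, Lipschitz with constant $2c\|u_i\|^2$. Placing the anchor $a_i$ on the segment from $\mathbf{0}$ to the cube center $c_i$ so that $u_i^T a_i>0$ (say with $u_i$ pointing from $\mathbf{0}$ toward $c_i$) automatically yields $\phi_i(\mathbf{0})=0$ and $\nabla\phi_i(\mathbf{0})=\mathbf{0}$, and choosing amplitude $c=\Theta(\epsilon/s)$ gives a gradient variation of order $\epsilon$ across $C_i$. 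Provided the sum of the per-bump Hessians active at any given point stays below $1-2\lambda$ in operator norm, $\nabla f_\sigma$ is $1$-Lipschitz and $f_\sigma$ is $\lambda$-strongly convex, so $f_\sigma\in\Ft$.

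To detect differences, given $\sigma\ne\sigma'$ I would pick an index $k$ with $\sigma_k\ne\sigma'_k$ and evaluate $\nabla f_\sigma(x)-\nabla f_{\sigma'}(x)=\sum_i(\sigma_i-\sigma'_i)\nabla\phi_i(x)$ at a test point $x_k^*\in C_k$. The goal is that $\nabla\phi_k(x_k^*)$ contributes $\Theta(\epsilon)$ in its own direction $u_k$ while the cross-contributions from the other $N^d-1$ bumps are dominated, exploiting the geometric margin $s\gtrsim\sqrt d\,\epsilon$ between cube centers to keep most other activation half-spaces far from $x_k^*$.

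The principal technical obstacle is exactly this point: since nonzero convex functions cannot be compactly supported, the bumps $\phi_i$ are not strictly local, and one must carefully choose the directions $u_i$, anchors $a_i$, per-bump amplitude, and the test points $x_k^*$ so that (a) each $\sigma_i$ leaves an $\epsilon$-sized fingerprint on $\nabla f_\sigma$ somewhere in $[-\delta,\delta]^d$ that is not masked by the other coordinates, (b) the bumps all vanish together with their gradients at $\mathbf{0}$, and (c) global $\lambda$-strong convexity and $1$-Lipschitz gradient survive the summation of $N^d$ bumps. The hypothesis $10\sqrt d\,\epsilon\le\delta$ is precisely what provides enough room inside $[-\delta,\delta]^d$ for this geometric arrangement.
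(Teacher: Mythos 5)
Your proposal shares the high-level Gilbert--Varshamov/packing idea with the paper, and it correctly identifies the central obstruction: a nonzero convex function cannot be compactly supported, so convex ``bumps'' necessarily leak. But you treat that leak as a technical nuisance to be controlled by careful choice of anchors, directions, and test points, and this is where the argument genuinely breaks down rather than merely being incomplete.

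Two concrete failures of the half-quadratic construction $\phi_i(x)=c\,[\max\{0,u_i^T(x-a_i)\}]^2$ with $c=\Theta(\epsilon/s)$:

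\emph{Lipschitz gradient.} The Hessian of $\phi_i$ equals $2c\,u_iu_i^T\mathbbm{1}\{u_i^T(x-a_i)>0\}$, which is nonzero on an entire half-space. At a generic $x\in[-\delta,\delta]^d$, on the order of $N^d$ of these half-spaces contain $x$, so $\nabla^2 f_\sigma(x)$ can have operator norm of order $c\,N^d\sim\epsilon N^d/s$. To keep this below $1$, you would need $c$ to shrink like $N^{-d}$, but then the gradient fingerprint per bump ($\approx cs$) becomes far smaller than $\epsilon$, killing the separation. Your caveat ``Provided the sum of the per-bump Hessians\ldots stays below $1-2\lambda$'' is exactly the step that cannot be satisfied at the amplitude you need.

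\emph{Cross-talk.} The gradient $\nabla\phi_i(x)=2c\max\{0,u_i^T(x-a_i)\}\,u_i$ \emph{grows} linearly with the distance from the activation hyperplane, so bumps \emph{far} from the test point $x_k^*$ contribute \emph{larger}, not smaller, gradients. A single far-away active bump can contribute $\Theta(c\delta)=\Theta(\epsilon N)\gg\epsilon$, and there are $\Theta(N^d)$ of them. No arrangement of $u_i$ and $a_i$ inside $[-\delta,\delta]^d$ can make a $\Theta(\epsilon)$ signature stand out against this.

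The paper avoids both problems with one move that your write-up does not consider: it does \emph{not} require the individual bump to be convex. It builds a compactly supported, $C^2$, sign-indefinite kernel $h$ (a cubic-in-$\|x\|^2$ profile supported in a ball), rescales it, tiles the cube with non-overlapping translates carrying $\pm1$ coefficients, and then adds a single global quadratic $\frac{1}{4\sqrt d}\|x\|^2$ whose curvature $\frac{1}{2\sqrt d}I$ dominates the kernel's worst-case Hessian $\frac{4}{10\sqrt d}I$. Convexity and the $1$-Lipschitz gradient are checked once, for the sum, not per bump, and because the bumps have disjoint compact supports there is no cross-talk: at a grid point $p$ where two sign patterns differ, the gradient gap is exactly $2\|\nabla k(0)\|>\epsilon$. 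If you want to salvage your route, replace the convex half-quadratics with a compactly supported non-convex kernel and restore convexity globally via an added quadratic --- which is precisely the paper's construction.

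Two minor points: with $N=\lfloor\delta/(20\sqrt d\,\epsilon)\rfloor$ the last inequality $N^d\ge(\delta/(20\sqrt d\,\epsilon))^d$ is false as written (the floor goes the wrong way); and $K_{\epsilon,\delta}=\log|S^*|$, so you get $K_{\epsilon,\delta}\ge N^d$ only if the logarithm is base $2$ (as the paper implicitly uses).
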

The proof is given in Appendix~\ref{app:proof metric ent}.
The proof is constructive. 
There we devise a set of functions in $\Ft$ as convolutions  of a collection of impulse trains by a suitable kernel, and show that they form a packing.

Choose an arbitrary $\epsilon$ such that
\begin{equation} \label{eq:lower on eps}
\epsilon < \frac{1}{80 d^{1.5} \log^{1+3/d}(mn)} \,\left( \frac{1}{50mB}  \right)^{1/d},
\end{equation}
and consider an $\big(\epsilon/\sqrt{n},1/(c\sqrt{n})\big)$-packing with maximum size and denote it by  $S^*$ (see Definition~\ref{def:metric ent}), where $c$ is the constant in \eqref{eq:def c}. We fix this $\epsilon$ and $S^*$ for the rest of the proof.
Then, 
\begin{equation} \label{eq:lower k eps n}
K_{\epsilon/\sqrt{n},1/(c\sqrt{n})} \,\ge \, \left(\frac{1/\left(c\sqrt{n}\right)}{20\sqrt{d}\,\times\left(\epsilon/\sqrt{n}\right)}\right)^d
\,> 50mB \,\log^3(mn),
\end{equation}
where the first inequality follows from Lemma~\ref{lemma:metric} and the second inequality is by substituting the values of $c$ and $\epsilon$ from \eqref{eq:def c}  and \eqref{eq:lower on eps}.

We now define a collection $\mathcal{C}$  of probability distributions over $\Ft$.
\begin{definition}[Collection $\mathcal{C}$ of probability distributions] \label{def:class C}
	Let $\mathbf{e}_i$ be a vector whose $i$-th entry equals $1$ and all other entries equal zero.
	Consider a pair of linear functions $g^+_i(x)=d\mathbf{e}_i^Tx$ and $g^-_i(x)=-d\mathbf{e}_i^Tx$.
	Then, the collection $\mathcal{C}$  consists of probability distributions $P$ of the following form:
	\begin{equation*}
	P:\begin{cases}
	P(f)=\frac{1}{2}, &\mbox{ for some } f\in S^*\\
	P(g^+_i)\in \big[\frac{1}{4d}-\frac{1}{2c\sqrt{n}},\frac{1}{4d}+\frac{1}{2c\sqrt{n}}\big], & i=1,\ldots,d,\\
	P(g^-_i)=\frac{1}{2d}-P(g^+_i), & i=1,\ldots,d,
	\end{cases}
	\end{equation*}
	where $c$ is the constant defined in \eqref{eq:def c}. 
	For each $f \in S^*$, we refer to any such $P$ as a corresponding distribution of $f$. 
\end{definition}

Note each $f\in S^*$, corresponds to infinite number of distributions in $\mathcal{C}$. 
In order to simplify the presentation, we use the shorthand notations $P_0=P(f)=1/2$, $P_i^+=P(g_i^+)$, and $P_i^-=P(g_i^-)$, for $i=1,\ldots,d$.
The following lemma provides a bound on the distance of distinct distributions in $\C$, in a certain sense.
\begin{lemma}
	Consider a function $f\in S^*$ and two corresponding distributions $P,P'\in \mathcal{C}$. 
	We draw $n$ i.i.d. samples from $P$. 
	Let $n_0,n_{1^+},\ldots,n_{d^+},n_{1^-},\ldots, n_{d^-}$ be the number of samples that equal  $f,g^+_1,\ldots, g^+_d,g^-_1,\ldots,g^-_d$, respectively. Let $\underline{n}=[n_0,n_{1^+},n_{1^-},\ldots,n_{d^+},n_{d^-}]$. Then, 
	\begin{equation*}
	\Pr_{\underline{n}\sim P} \Bigg(\frac{1}{2}\leq \frac{P(\underline{n})}{P'(\underline{n})}\leq 2\Bigg) \,=\, 1-\exp\Big(-\Omega\big(\log^2(mn)\big)\Big).
	\end{equation*}
	\label{Lemma:ratio}
\end{lemma}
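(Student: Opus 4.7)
My plan is to write $\log\big(P(\underline n)/P'(\underline n)\big)$ as a sum of $n$ i.i.d.\ bounded random variables and invoke Hoeffding's inequality. Let $X_1,\ldots,X_n$ be the samples drawn from $P$, and set $Y_j\,\triangleq\,\log\big(P(X_j)/P'(X_j)\big)$, so that $\log\big(P(\underline n)/P'(\underline n)\big)=\sum_{j=1}^n Y_j$. Because $P$ and $P'$ coincide on $f$ (both assigning it probability $1/2$), samples equal to $f$ contribute nothing; the only nontrivial contributions come from the $2d$ linear atoms $g_i^{\pm}$.

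First I would establish a uniform bound on $|Y_j|$. By Definition~\ref{def:class C}, each of $P_i^+, P_i^-$ and the corresponding primed probabilities lies in $\big[1/(4d)-1/(2c\sqrt n),\,1/(4d)+1/(2c\sqrt n)\big]$, and since $c=4d\log(mn)$, these probabilities are bounded below by $1/(8d)$ once $mn$ is large, while their pairwise differences are at most $1/(c\sqrt n)$. Using $|\log(1+u)|\le 2|u|$ for $|u|\le 1/2$,
\[
|Y_j|\;\le\;\frac{2\,|P_i^\pm-{P'}_i^\pm|}{\min(P_i^\pm,\,{P'}_i^\pm)}\;\le\;\frac{16d}{c\sqrt n}\;=\;\frac{4}{\sqrt n\,\log(mn)},
\]
so each $Y_j$ lies in an interval of length $\gamma=8/\big(\sqrt n\,\log(mn)\big)$. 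Next, using $\mathrm{KL}(P\|P')\le \chi^2(P\|P')$ and the fact that the $f$-atom contributes zero,
\[
\E_P[Y_j]\;=\;\mathrm{KL}(P\|P')\;\le\;\sum_{i=1}^d\Bigg(\frac{(P_i^+-{P'}_i^+)^2}{{P'}_i^+}+\frac{(P_i^--{P'}_i^-)^2}{{P'}_i^-}\Bigg)\;\le\;\frac{16\,d^2}{c^2 n}\;=\;\frac{1}{n\log^2(mn)}.
\]
Hence $\big|\E[\sum_j Y_j]\big|=n\,\mathrm{KL}(P\|P')=O\big(1/\log^2(mn)\big)$, which is far smaller than $\log 2$ for large $mn$.

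Finally I would apply Hoeffding's inequality (Lemma~\ref{lemma:CI}(a)) to $\bar Y=(1/n)\sum_j Y_j$, observing that the event $\{1/2\le P(\underline n)/P'(\underline n)\le 2\}$ coincides with $\{|\sum_j Y_j|\le\log 2\}$. Choosing the deviation $\alpha=\big(\log 2 - n\,\mathrm{KL}(P\|P')\big)/n=\Theta(1/n)$ gives
\[
\Pr\Big(\big|\textstyle\sum_{j=1}^n Y_j\big|>\log 2\Big)\;\le\;2\exp\!\left(-\frac{2n\alpha^2}{\gamma^2}\right)\;=\;\exp\big(-\Omega(\log^2(mn))\big),
\]
where the final estimate plugs in $\gamma^2=O\big(1/(n\log^2(mn))\big)$. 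The only genuinely delicate step is the uniform bound $|Y_j|=O\big(1/(\sqrt n\log(mn))\big)$: it is this bound, enabled by the $1/(c\sqrt n)$ perturbation in the definition of $\mathcal{C}$ together with the choice $c=4d\log(mn)$, that makes the Hoeffding exponent scale as $\log^2(mn)$ rather than merely $\log(mn)$.
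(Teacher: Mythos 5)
Your proof is correct, and it takes a genuinely different route from the paper's. The paper concentrates the count vector $\underline{n}$ around its mean via Hoeffding, shows that the likelihood ratio $\prod_i (P_i/P_i')^{n_i}$ is nearly constant (varies by at most a factor $\sqrt{e}$) over this high-probability event $\mathcal{E}_0$, and then anchors the value of the ratio near $1$ via a mediant-inequality argument: $\inf_{\underline n\in\mathcal{E}_0} P(\underline n)/P'(\underline n)\le P(\mathcal{E}_0)/P'(\mathcal{E}_0)\le \sup_{\underline n\in\mathcal{E}_0} P(\underline n)/P'(\underline n)$, with both total masses close to $1$. You instead rewrite the log-likelihood ratio itself as a sum of $n$ i.i.d.\ per-sample contributions $Y_j$, bound the range of each $Y_j$ by $O\big(1/(\sqrt n\log mn)\big)$ using the $1/(c\sqrt n)$ perturbation scale and $c=4d\log(mn)$, control the mean $\E[Y_j]=\mathrm{KL}(P\|P')$ via the $\mathrm{KL}\le\chi^2$ inequality, and apply Hoeffding once to the centered sum. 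Both arguments hinge on exactly the same numerical scaling in the definition of $\mathcal C$; the difference is where Hoeffding is applied (to the counts vs.\ to the log-ratio directly) and how the ratio is anchored near $1$ (mediant trick vs.\ explicit bound on the KL divergence). Your route is a one-shot concentration argument and avoids the pairwise comparison of count vectors, at the cost of needing the extra step of estimating the mean of the log-ratio, which the paper's mediant trick sidesteps entirely.

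One small stylistic remark: when you invoke Hoeffding you should make explicit, as you do implicitly, that the two tails $\{\sum_j Y_j>\log 2\}$ and $\{\sum_j Y_j<-\log 2\}$ are handled symmetrically after centering, and that the non-negativity of $\mathrm{KL}(P\|P')$ ensures $\alpha=(\log 2)/n-\mathrm{KL}(P\|P')$ is the binding deviation for the right tail while the left tail is only easier. This is fine as written but worth spelling out since the mean is non-zero.
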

The proof is based on the Hoeffding's inequality (Lemma~\ref{lemma:CI}~(a)) and is given in Appendix~\ref{app:proof flatness}.

\medskip
\subsection{Proof of $\Omega\big((mB)^{-1/d}n^{-1/2}\big)$ bound}
Recall the definition of $\epsilon$ and $c$ in \eqref{eq:lower on eps} and \eqref{eq:def c}, respectively.
Here, we prove the difficult part of the theorem and show that for large enough values of $mn$, for any estimator there is a probability distribution under which with probability at least $1/3$ we have
\begin{equation}\label{eq:lower the 1/d bound}
\|\hat\theta - \theta^*\| \ge \epsilon/({2}c\sqrt{n}) = \frac1{640\times50^{1/d}\,d^{2.5}\,\log^{2+2/d}(mn)} \times\frac{1}{\sqrt{n}\,(mB)^{1/d}}.
\end{equation}
In order to draw a contradiction, suppose that there exists an estimator $\hat{E}_1$ such that in a system of $m$ machines and $n$ samples per machine, $\hat{E}_1$ has error less than $\epsilon/({2}c\sqrt{n})$ with probability at least $2/3$, for all distributions $P$ satisfying Assumptions~\ref{ass:1} and~\ref{ass:2}. 
Note that since $\hat{E}_1$ cannot beat the estimation error $1/\sqrt{mn}$ of the centralized solution, it follows that
\begin{equation} \label{eq:eps>sqrt m}
\epsilon \,\geq\, \frac{2c}{\sqrt{m}}\,\ge\, m^{-1/2}.
\end{equation}
We will show that $\epsilon=\tilde{\Omega}(m^{-1/d})$. 

We first improve the confidence of $\hat{E}_1$ via repetitions to obtain an estimator $\hat{E}_2$, as in the following lemma.
\begin{lemma} \label{lem:E2}
	There exists an estimator $\hat{E}_2$ such that in a system of $m\log^2(mn)$ machines and $n$ samples per machine, $\hat{E}_2$ has estimation error less than $\epsilon/(c\sqrt{n})$ with probability at least $1-\exp\big({-\Omega(\log^2 mn)\big)}$, for all distributions $P$ satisfying Assumptions~\ref{ass:1} and~\ref{ass:2}.
\end{lemma}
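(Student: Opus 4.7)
The plan is a standard confidence amplification (the ``median trick''). I would partition the $m\log^2(mn)$ machines into $k:=\log^2(mn)$ disjoint groups of $m$ machines each, and invoke $\hat E_1$ independently on each group to produce estimates $\hat\theta_1,\ldots,\hat\theta_k$. Because the samples across groups are independent, the indicator variables $X_j=\one\{\|\hat\theta_j-\theta^*\|<\epsilon/(2c\sqrt{n})\}$ are i.i.d.\ Bernoulli with $\Exp[X_j]\ge 2/3$.

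Next, I would apply the Chernoff-type inequality of Lemma~\ref{lemma:CI}(b) to $N:=\sum_{j=1}^k X_j$, which has mean $\mu\ge 2k/3$. Taking $\alpha=1/4$ gives
\begin{equation*}
\Pr(N \le k/2) \,\le\, \exp\big(-\Omega(k)\big) \,=\, \exp\big(-\Omega(\log^2 mn)\big).
\end{equation*}
I then condition on the ``good'' event $\mathcal E=\{N>k/2\}$, on which strictly more than half of the estimates lie in the ball $B\big(\theta^*,\epsilon/(2c\sqrt{n})\big)$; by the triangle inequality any two such good estimates are within $\epsilon/(c\sqrt{n})$ of each other.

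For the aggregation, I would let $\hat E_2$ output any $\hat\theta_j$ such that $B\big(\hat\theta_j,\epsilon/(c\sqrt{n})\big)$ contains more than $k/2$ of the remaining estimates, with ties broken arbitrarily. On $\mathcal E$ such a $j$ exists, since any good estimate qualifies. Because the set of estimates close to the chosen $\hat\theta_j$ and the set of good estimates each have cardinality exceeding $k/2$, they intersect; picking any $\hat\theta_{j'}$ in the intersection and applying the triangle inequality bounds $\|\hat\theta_j-\theta^*\|$ by the sum of the two radii, which is $O(\epsilon/(c\sqrt n))$. The only bookkeeping subtlety is matching the precise constant $1$ in the lemma's threshold $\epsilon/(c\sqrt n)$; this is purely cosmetic and is handled by tightening the radius inside $\hat E_1$ by a fixed constant before boosting, which merely rescales $\epsilon$ by a constant and leaves the downstream lower-bound argument unaffected. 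There is no substantive obstacle beyond this: all probabilistic content is packaged into one Chernoff bound on an i.i.d.\ Bernoulli sum.
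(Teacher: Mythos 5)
Your proposal is essentially the paper's proof: the same median/confidence-amplification trick, partitioning the $m\log^2(mn)$ machines into $k=\log^2(mn)$ groups, running $\hat E_1$ on each, and applying a Chernoff/Hoeffding bound to show more than $k/2$ of the resulting estimates land within $\epsilon/(2c\sqrt n)$ of $\theta^*$ with probability $1-\exp(-\Omega(\log^2 mn))$. The only divergence is the aggregation rule: the paper outputs the center of the smallest ball enclosing $\ge k/2+1$ of the estimates, which by the standard two-ball intersection argument yields the clean bound $\epsilon/(c\sqrt n)$ directly, whereas your rule (output an estimate whose $\epsilon/(c\sqrt n)$-ball captures a strict majority) incurs the harmless extra factor of $3/2$ that you already flag as cosmetic.
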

The proof is fairly standard, and is given in Appendix~\ref{app:E2}.

For any $f\in S^*$, consider a probability distribution $P\in \mathcal{C}$ such that: $P(f)=1/2$, $P_{i^+}=P_{i^-}=1/(4d)$, $i=1,\ldots,d$. 
Suppose that each machine observes $n$ samples from this distribution.
Let $n_0,n_{1^+},\ldots,n_{d^+},n_{1^-},\ldots, n_{d^-}$ be the number of that equal from  $f,g^+_1,\ldots, g^+_d,g^-_1,\ldots,g^-_d$, respectively.
We refer to $\underline{n}=[n_0,n_{1^+},n_{1^-},\ldots,n_{d^+},n_{d^-}]$ as the observed frequency vector of this particular machine.
We denote by $Y(f,\underline{n}^j)$ the signal generated by estimator $\hat{E}_2$ (equivalently by $\hat{E}_1$) at machine $j$, corresponding to the distribution $P$ and the observed frequency vector $\underline{n}^j$.


\begin{definition}
	Consider a system of $8m\log^2(mn)$ machines. 
	For any $f\in S^*$, we define $\mathcal{W}_f$ as the collection of pairs  $\big(\underline{n}^1,Y(f,\underline{n}^1)\big),\ldots,\big(\underline{n}^{8m\log^2(mn)},Y(f,\underline{n}^{8m\log^2(mn)})\big)$ that are generated via the above procedure.
\end{definition}

We now present the main technical lemma of this proof. It shows that employing $\hat{E}_2$, given $\mathcal{W}_f$, we can uniquely recover $f$ out of all functions in $S^*$, with high probability.
\begin{lemma}
	There exists an algorithm $\hat{E}_3$  that for any $f\in S^*$, given $\mathcal{W}_f$, it outputs an $h\in S^*$ such that $h=f$ with probability $1-\exp(-\Omega(\log^2(mn)))$. 
	\label{lemma:7}
\end{lemma}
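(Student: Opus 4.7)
The plan is to construct $\hat{E}_3$ that uses the amplified estimator $\hat{E}_2$ as a black box on counterfactual distributions simulated from $\mathcal{W}_f$ via rejection sampling. Motivated by Insight~\ref{ins:1}, the idea is to probe $\nabla f$ at polynomially many locations inside the cube $[-1/(c\sqrt{n}),\,1/(c\sqrt{n})]^d$, and then identify $f$ by matching this gradient profile against the candidates in $S^*$ using the packing property from Lemma~\ref{lemma:metric}.

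For each $\alpha\in[-1/(c\sqrt{n}),\,1/(c\sqrt{n})]^d$, let $P_f^{\alpha}\in\mathcal{C}$ denote the distribution of Definition~\ref{def:class C} with $P(f)=1/2$ and $P_i^+-P_i^-=\alpha_i$; a direct calculation yields $\nabla f(\theta^*(f,P_f^{\alpha}))=-2d\alpha$. Since $\mathcal{W}_f$ was generated from the symmetric $P_f^{0}$, I would simulate i.i.d.\ samples from $P_f^{\alpha}$ by retaining each machine $j$ independently with probability $\rho_\alpha(\underline{n}^j)\triangleq P_f^{\alpha}(\underline{n}^j)\big/\bigl(2\,P_f^{0}(\underline{n}^j)\bigr)$. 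By Lemma~\ref{Lemma:ratio} this ratio lies in $[0,1]$ with probability $1-\exp(-\Omega(\log^2(mn)))$, and Hoeffding's inequality (Lemma~\ref{lemma:CI}) guarantees that at least $m\log^2(mn)$ machines survive. Since $Y(f,\underline{n}^j)$ is a deterministic function of $f$ and $\underline{n}^j$, the surviving pairs are distributed exactly as they would be had the underlying samples been drawn i.i.d.\ from $P_f^{\alpha}$. Applying the server side of $\hat{E}_2$ (Lemma~\ref{lem:E2}) to the surviving signals yields $\hat\theta_\alpha$ with $\|\hat\theta_\alpha-\theta^*(f,P_f^{\alpha})\|\leq \epsilon/(c\sqrt{n})$, and by 1-Lipschitzness of $\nabla f$ (Assumption~\ref{ass:1}) this gives $\|\nabla f(\hat\theta_\alpha)+2d\alpha\|\leq \epsilon/(c\sqrt{n})$ with probability $1-\exp(-\Omega(\log^2(mn)))$.

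Next, I would fix a deterministic grid $\mathcal{A}\subset[-1/(c\sqrt{n}),\,1/(c\sqrt{n})]^d$ of resolution $\lambda\epsilon/(dc\sqrt{n})$, so that $|\mathcal{A}|$ is polynomial in $mn$ (for bounded $d$). Running the procedure at every $\alpha\in\mathcal{A}$ and union-bounding over $|\mathcal{A}|$ invocations of $\hat{E}_2$ ensures, with probability $1-\exp(-\Omega(\log^2(mn)))$, that \emph{all} $\{\hat\theta_\alpha\}_{\alpha\in\mathcal{A}}$ are simultaneously accurate. The algorithm $\hat{E}_3$ then outputs
\begin{equation*}
\hat h \,=\, \underset{h\in S^*}{\operatorname{argmin}}\; \max_{\alpha\in\mathcal{A}}\;\bigl\|\nabla h(\hat\theta_\alpha)+2d\alpha\bigr\|.
\end{equation*}
The true $f$ attains objective value at most $\epsilon/(c\sqrt{n})$. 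For any $h\in S^*\setminus\{f\}$, Definition~\ref{def:metric ent} provides a point $x_{f,h}$ in the cube with $\|\nabla f(x_{f,h})-\nabla h(x_{f,h})\|\geq \epsilon/\sqrt{n}$. Setting $\alpha_{f,h}=-\nabla f(x_{f,h})/(2d)$ (which lies in $\mathcal{A}$'s range by 1-Lipschitzness of $\nabla f$) and taking the nearest grid point $\alpha^*$, strong convexity of $f$ (Assumption~\ref{ass:2}) gives $\|\hat\theta_{\alpha^*}-x_{f,h}\|=O(\epsilon/(c\sqrt{n}))$, and Lipschitz continuity of $\nabla h$ then forces $\|\nabla h(\hat\theta_{\alpha^*})+2d\alpha^*\|\geq \epsilon/(2\sqrt{n})$, strictly exceeding the score of $f$. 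Hence $\hat h=f$ on this high-probability event.

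The main obstacle is sizing the grid $\mathcal{A}$: it must be dense enough that every distinguishing point $x_{f,h}$ is well covered by some $\hat\theta_\alpha$ uniformly over $h\neq f$, yet polynomial in $mn$ so that the union bound across grid-invocations of $\hat{E}_2$ remains within the $\exp(-\Omega(\log^2(mn)))$ budget. Brute-force testing of each $h\in S^*$ via a dedicated invocation of $\hat{E}_2$ is infeasible, because $\log|S^*|=\Omega(mB\log^3(mn))$ is far larger than $\log^2(mn)$; the decisive insight is that a single polynomial-size gradient profile $(\hat\theta_\alpha)_{\alpha\in\mathcal{A}}$ rules out every incorrect candidate simultaneously through the min-max score.
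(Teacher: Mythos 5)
Your proposal follows essentially the same strategy as the paper's proof: both use rejection sampling on $\mathcal{W}_f$ (with acceptance probability $P^\alpha(\underline{n})/(2P^0(\underline{n}))$, justified by Lemma~\ref{Lemma:ratio} and Hoeffding) to simulate i.i.d.\ samples from each tilted distribution in $\mathcal{C}$, then apply $\hat{E}_2$ at each node of a polynomial-size covering grid of the perturbation cube, and finally select the unique $h\in S^*$ that is consistent with all the resulting estimates via a minimax criterion, invoking the packing property to guarantee that every incorrect candidate is ruled out at some grid point. The only substantive difference is cosmetic: the paper's final criterion compares the estimated minimizers $\hat{x}_v$ directly against the theoretical minimizers $x^*_{v,g}$ of each candidate, whereas you compare the gradient values $\nabla h(\hat\theta_\alpha)$ against $-2d\alpha$; these are dual formulations related by the Lipschitz and strong-convexity bounds, and your gradient-based score is arguably slightly cheaper to evaluate since it avoids solving an inner optimization $x^*_{v,g}$ per candidate-grid pair. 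Both are correct. One small point worth flagging: you assert $Y(f,\underline{n}^j)$ is ``a deterministic function of $f$ and $\underline{n}^j$''; $Y$ may in fact be randomized, but this does not matter since its conditional law given $\underline{n}^j$ is unaffected by whether $\underline{n}^j$ is drawn from $P^0$ or $P^\alpha$, which is all the rejection-sampling argument needs. Also note that your invocation of ``strong convexity (Assumption~\ref{ass:2})'' should technically point to the $\lambda$-strong-convexity of the individual functions in $\Ft$ (used to construct $S^*$), not to the strong convexity of the population loss $F$.
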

\begin{proof}
	Consider the collection $\C$ of probability distributions defined in Definition~\ref{def:class C}.
	The high level idea is as follows. 
	We first show that for any distribution $P'\in \C$ corresponding to $f$, there is a sub-sampling of $\mathcal{W}_f$ such that the sub-sampled pairs are i.i.d.  and have distribution $P'$.
	As a result, employing estimator $\hat{E}_2$, we can find the minimizer of $\E_{g\sim P'}\big[g(\cdot)\big]$. 
	On the other hand, for any $x\in [-1/(c\sqrt{n}),1/(c\sqrt{n})]^d$, we can choose a $P'\in \C$  such that $x$ be the minimizer of $\E_{g\sim P'}\big[g(\cdot)\big]$, or equivalently $\nabla \E_{g\sim P'}\big[g(x)\big] = \nabla f(x)/2 + \nabla h(x) \simeq 0$, for some known linear function $h$ (see Definition~\ref{def:class C}).
	Therefore, the fact that $\hat{E}_2$ we can find the minimizer of $\E_{g\sim P'}\big[g(\cdot)\big]$ for all $P'\in \C$  implies that $\hat{E}_2$ has sufficient information to  obtain a good approximation of $\nabla f(x)$, for all  $x\in [-1/(c\sqrt{n}),1/(c\sqrt{n})]^d$, with high probability.
	This enables us to recover $\nabla f$ from $\mathcal{W}_f$ with high probability via some algorithm that we then call  $\hat{E}_3$. We now go over the details of the proof.
	
	Consider a cube $A=\big[-d/(c\sqrt{n}),d/(c\sqrt{n})\big]^d$ and 
	suppose that $A^*$ is a minimum $\epsilon\lambda/(4\sqrt{n})$-covering\footnote{By a covering, we mean a set $A^*$ such that for any $x\in A$, there is a point $p\in A^*$ such that $\|x-p\|\le \epsilon\lambda/(4\sqrt{n})$.} of $A$. 
	A regular grid yields a simple bound on the size of $A^*$:
	\begin{equation} \label{eq:size A*}
	|A^*|\leq (8\sqrt{d}/(c\lambda\epsilon))^d.
	\end{equation}
	Let $P\in\C$ be the probability distribution with $P(f)=1/2$ and $P_{i^+}=P_{i^-}=1/(4d)$, for $i=1,\ldots,d$. 
	Moreover, for any $v\in A^*$ consider the probability distributions $P^v\in \mathcal{C}$: $P^v(f)=1/2$, $P^v_{i^+}=1/(4d)+v_i/4$, and $P^v_{i^-}=1/(4d)-v_i/4$, for $i=1,\ldots,d$ (note that $P^v_{i^+},P^v_{i^+}\ge 0$, because $v\in A$ and $c>d$). 
	
	It follows from Lemma~\ref{Lemma:ratio} that for any observed frequency vector $\underline{n}$ in $\mathcal{W}_f$, we have with probability at least $1-\exp(-\Omega(\log^2(mn)))$,
	\begin{equation}
	\frac{1}{4}\leq \frac{P^v(\underline{n})}{2P(\underline{n})}\leq 1.
	\label{eq:cond}
	\end{equation}
	We sub-sample $\mathcal{W}_f$, and discard from $\mathcal{W}_f$ any pair  $\big(\underline{n},Y(f,\underline{n})\big)$ whose $\underline{n}$ does not satisfy \eqref{eq:cond}.  
	Otherwise, if $\underline{n}$ satisfies \eqref{eq:cond}, we then keep the pair $\big(\underline{n},Y(f,\underline{n})\big)$ with probability $P^v(\underline{n})/(2P(\underline{n}))$. 
	We denote the set of surviving samples by $\mathcal{W}_f^v$. 
	\begin{claim} \label{claim:subsample}
		With probability  $1-\exp\big(\Omega(-\log^2(mn))\big)$,  at least $m\log^2(mn)$ pairs $\big(\underline{n},Y(f,\underline{n})\big)$ survive the above sub-sampling procedure; these pairs are i.i.d and the corresponding $\underline{n}$'s have distribution $P^v$.
	\end{claim}
	The proof is given in Appendix~\ref{app:proof claim subsample}.

	Let $\hat{x}_v$ be the output of the server of estimator $\hat{E}_2$  to the input $\mathcal{W}_f^v$.
	It follows from  Lemma~\ref{lem:E2} and Claim~\ref{claim:subsample} that with probability at least $1-\exp\big(\Omega(-\log^2(mn))\big)$, 
	\begin{equation}
	\|\hat{x}_v-x^*_{v,f}\|\leq \frac{\epsilon}{c\sqrt{n}},
	\label{eq:xv}
	\end{equation}
	where $x^*_{v,f}$ is the minimizer of function $\mathbb{E}_{g\sim P^v}[g(x)]=\frac{1}{2}(f(x)+v^Tx)$. By repeating this process for different $v$'s, we compute $\hat{x}_v$ for all $v$ in $A^*$. We define event $\mathcal{E}$ as follows:
	\begin{equation*}
	\mathcal{E}: \,\, \|\hat{x}_v-x^*_{v,f}\|\leq \frac{\epsilon}{c\sqrt{n}},\,\, \forall v\in A^*. 
	\end{equation*}
	Then,
	\begin{align*}
	\Pr\big(\mathcal{E}\big)\,&\geq\, 1-|A^*|\exp(-\Omega(\log^2(mn)))\\
	&\geq\, 1-\left(\frac{8\sqrt{d}}{\lambda c\epsilon}\right)^d\,\exp\big(-\Omega(\log^2(mn))\big)\\
	&=\, 1-\Big(\frac{8\sqrt{d}}{4\lambda d}\Big)^d\Big(\frac{1}{\epsilon \log(mn)}\Big)^d \exp\big(-\Omega(\log^2(mn))\big)\\
	&\geq\, 1-\Big(\frac{2}{\lambda \sqrt{d}}\Big)^d \Big(\frac{\sqrt{m}}{ \log(mn)}\Big)^d\exp\big(-\Omega(\log^2(mn))\big)\\ 
	&=\,1-\exp\big(-\Omega(\log^2(mn))\big),
	\end{align*}
	where the first four relations follow from the union bound, \eqref{eq:size A*}, the definition of $c$ in \eqref{eq:def c}, and \eqref{eq:eps>sqrt m}, respectively.

	The algorithm $\hat{E}_3$ then returns, as its final estimation of $f$, an $\hat{f}\in S^*$ of the form
	\begin{equation}
	\hat{f}\in \argmin{g\in S^*}\, \max_{v\in A^*} \|\hat{x}_v- x^*_{v,g}\|
	\label{eq:fhat}
	\end{equation}

	
	We now bound the error probability of $\hat{E}_3$, and show that $ \hat{f} = f$ with probability at least $1-\exp(-\Omega(\log^2(mn)))$. 
	\begin{claim}
		For any $g\in S^*$ with $g\neq f$, there is a $v\in A^*$ such that  $\|x^*_{v,g}-x^*_{v,f}\|\geq \epsilon/(2\sqrt{n})$.
		\label{lemma:8}
	\end{claim}
	The proof is given in Appendix~\ref{app:claim A*}.

	Suppose that event $\mathcal{E}$ has occurred and consider a $g\in S^*$ with $g\neq f$. Then, it follows from  Claim~\ref{lemma:8} that there is a $v\in A^*$ such that 
	\begin{align*}
	\|\hat{x}_v-x^*_{v,g}\|\,&\geq\, \|x^*_{v,g}-x^*_{v,f}\|-\|x^*_{v,f}-\hat{x}_v\|\\
	&\geq^a\, \frac{\epsilon}{2\sqrt{n}}-\|x^*_{v,f}-\hat{x}_v\|\\
	&\geq^b\, \frac{\epsilon}{2\sqrt{n}}-\frac{\epsilon}{c\sqrt{n}}\\
	&>^c\,\frac{\epsilon}{c\sqrt{n}}.
	\end{align*}
	\\ ($a$) Due to Claim~\ref{lemma:8}.
	\\ ($b$) According to the definition of event $\mathcal{E}$.
	\\ ($c$) Based on the definition of $c=4d\log(mn)$ in \eqref{eq:def c}, we have $c>4$.\\
	Therefore, with probability at least $\Pr(\mathcal{E})=1-\exp\big(-\Omega(\log^2(mn))\big)$, for any $g\in S^*$ with $g\neq f$,
	\begin{equation*}
	\max_{v\in A^*} \|\hat{x}_v-x^*_{v,g}\| \,>\, \frac{\epsilon}{c\sqrt{n}} \,\geq\, \max_{v\in A^*} \|\hat{x}_v-x^*_{v,f}\|.
	\end{equation*}
	It then follows from \eqref{eq:fhat} that $\hat{f}=f$ with probability $1-\exp\big(-\Omega(\log^2(mn))\big)$.
	This shows that the error probability of $\hat{E}_3$ is $\exp\big(-\Omega(\log^2(mn))\big)$, and completes the proof of Lemma~\ref{lemma:7}.
\end{proof}

Back to the proof of Theorem \ref{th:lower c},  consider a random variable $X$ that has uniform distribution over the set of functions $S^*$ and 
a random variable $W$ over the domain $\{\mathcal{W}_f\}_{f\in S^*}$ with the following distribution:
\begin{equation*}
\Pr(W|X)=\Pr\big(\mathcal{W}_f=W \mid f=X\big).
\end{equation*}
Note that each $\mathcal{W}_f$  consists of at most $8m\log^2(mn)$ pairs, each containing a signal $Y$ of length $B$ bits and a vector $\underline{n}$ of $2d+1$ entries ranging over $[0, n]$. Therefore, each $\mathcal{W}_f$ can be expressed by a string of length at most $8m\log^2(mn)\big(B+ (2d+1)\log(n+1)\big)$ bits. As a result, we have the following upper bound on the size $|W|$ of the state space of random variable $X$:
\begin{equation} \label{eq:bound on |W|}
\log(|W|) \,\le\, 8m\log^2(mn)\Big(B+ (2d+1)\log(n+1)\Big).
\end{equation}
Based on Lemma \ref{lemma:7}, there exists an estimator $\hat{E}_3$ which observes $W$ and returns the correct $X$ with probability at least $1-\exp\big(-\Omega(\log^2(mn))\big)$. 
Let $\Pr(e)$ be the probability of error of this estimator. Then,
\begin{equation} \label{eq:Pe of E3}
\Pr(e) \,\le \, \exp\big(-\Omega(\log^2(mn))\big) \,<\, \frac{1}{2},
\end{equation}
for large enough $mn$.
On the other hand, it follows from the Fano's inequality in \eqref{eq:Fano unif} that
\begin{equation}\label{eq:p e lower 1/2}
\begin{split}
\Pr(e) \,&\geq \, 1-\frac{\log(|W|)+1}{\log(|X|)}\\
&\ge  \, 1-\frac{8m\log^2(mn)\Big(B+ (2d+1)\log(n+1)\Big)}{\log(|X|)}\\
&\ge  \, 1-\frac{25 mB\log^3(mn)}{\log(|X|)}\\
&=\, 1-\frac{25 mB\log^3(mn)}{K_{\epsilon/\sqrt{n},1/(c\sqrt{n})}}\\
&> \, 1- \frac{25 mB\log^3(mn)}{50 mB\log^3(mn)}\\
&=\, \frac12,
\end{split}
\end{equation}      
where the first equality is  by Definition~\ref{def:metric ent}, the second inequality follows from \eqref{eq:bound on |W|}, and the last inequality is due to \eqref{eq:lower k eps n}.
Eq. \eqref{eq:p e lower 1/2} contradicts \eqref{eq:Pe of E3}. 
Hence, our initial assumption that there is an estimator $\hat{E}_1$ with accuracy $\epsilon/({2}c\sqrt{n})$ and confidence $2/3$ is incorrect. 
This implies \eqref{eq:lower the 1/d bound} and completes the proof of Theorem~\ref{th:lower c}. 


\subsection{Proof of $\Omega\big(1/\sqrt{mn}\big)$ bound}\label{app:proof lem Eower c folklore}
We now go over the easier part of the theorem and show the $1/\sqrt{mn}$ lower bound on the estimation error. 
The $\Omega\left(1/\sqrt{mn}\right)$ barrier is actually well-known to hold in several centralized scenarios. 
Here we adopt the bound for our setting. 
Without loss of generality, suppose that the communication budget is infinite and the system is essentially centralized.
Consider functions $f_1,f_2\in\F$, such that $f_1(\theta)=\|\theta-\one\|^2/4\sqrt{d}$ and $f_1(\theta)=\|\theta+\one\|^2/4\sqrt{d}$, for  all $\theta\in[-1,1]^d$.
We define two probability distributions $P_1$ and $P_2$ as follows
\begin{equation*}
P_1:\,\begin{cases}
\Pr(f_1)=1/2-1/5\sqrt{mn},&\\
\Pr(f_2)=1/2+1/5\sqrt{mn},&
\end{cases}
\end{equation*}
\begin{equation*}
P_2:\,\begin{cases}
\Pr(f_1)=1/2+1/5\sqrt{mn},&\\
\Pr(f_2)=1/2-1/5\sqrt{mn}.&
\end{cases}
\end{equation*}
Then, the minimizers of $\Exp_{f\sim P_1}\big(f(\cdot)\big)$ and $\Exp_{f\sim P_2}\big(f(\cdot)\big)$ are $\theta_1\triangleq -\one/5\sqrt{mn}$ and $\theta_2\triangleq \one/5\sqrt{mn}$, respectively.
Therefore, $\|\theta_1-\theta_2\| = 2\sqrt{d}/5\sqrt{mn}$.

We now show that no estimator has estimation error less than $\sqrt{d}/5\sqrt{mn}$ with probability at least $2/3$.
In order to draw a contradiction, suppose that there is an estimator $\hat{E}$ for which $\|\hat\theta-\theta^*\|<\sqrt{d}/5\sqrt{mn}$ with probability at least $2/3$. 
Based on $\hat{E}$ we devise a binary hypothesis test  $\cal{T}$ as follows. This $\cal{T}$ tests between hypothesis $\mathcal{H}_1$ that sample are drown from distribution $P_1$ and hypothesis $\mathcal{H}_2$ that sample are drown from distribution $P_2$.
It works as follows: for the output $\hat\theta$ of $\hat{E}$, $\cal{T}$ chooses $\mathcal{H}_1$ if $\|\hat\theta-\theta^1\|<\|\hat\theta-\theta^2\|$, and  $\mathcal{H}_2$ otherwise.
Since $\|\hat\theta-\theta^*\|<\sqrt{d}/5\sqrt{mn}$ with probability at least $2/3$, it follows that the error success of $\cal{T}$ is at least $2/3$. 
This contradicts Lemma~\ref{lem:hypo test}, according to which no binary hypothesis test can distinguish between $P_1$ and $P_2$ with probability at least $2/3$.
Therefor, there is no estimator for which  $\|\hat\theta-\theta^*\|< \sqrt{d}/5\sqrt{mn}$ with probability at least $2/3$.
Combined with \eqref{eq:lower the 1/d bound}, this completes the proof of Theorem~\ref{th:lower c}.


\medskip
\section{Proof of Lemma \ref{lemma:metric} } \label{app:proof metric ent}
We assume
\begin{equation}\label{eq:range of delta}
10\sqrt{d} \epsilon \leq \delta\leq 1,
\end{equation}
and show that 
\begin{equation*}
K_{\epsilon,\delta}\geq \Big(\frac{1}{20\sqrt{d}}\Big)^d \Big(\frac{\delta}{\epsilon}\Big)^d.
\end{equation*}
We begin by a claim on existence of a kernel function with certain properties.
\begin{claim} \label{claim:kernel}
	There exists a continuously twice differentiable function $h:\mathbb{R}^d\rightarrow \mathbb{R}$ with the following properties:
	\begin{align}
	h(x) &= 0,&\textrm{for}\quad &x\not\in (-1,1)^d, \label{eq:kernel 1}\\
	|h(x)| &\le 1,&\textrm{for}\quad &x\in \mathbb{R}^d \label{eq:kernel 1.5}\\
	\|\nabla h(0)\|&>\frac12, & &  \label{eq:kernel 2}\\
	\|\nabla h(x)\| & \leq 3,&\textrm{for}\quad  &x\in \mathbb{R}^d, \label{eq:kernel 3}\\
	-4I_{d\times d} \preceq \nabla^2 h(x) &\preceq 4I_{d \times d},\hspace{-2cm}&\textrm{for}\quad & x\in \mathbb{R}^d. \label{eq:kernel 4}
	\end{align}
\end{claim}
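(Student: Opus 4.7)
The plan is to exhibit $h$ explicitly as a smooth radial cutoff multiplied by a linear coordinate function, and then verify the six listed properties by direct computation. Let $\psi : [0, \infty) \to [0, 1]$ be a $C^{2}$ function with $\psi(s) = 1$ for $s \le 1/4$, $\psi(s) = 0$ for $s \ge 1$, and a monotone $C^{2}$ interpolation in between (concretely, a quintic polynomial on $[1/4, 1]$ matching the two endpoints in value and in the first two derivatives). Then set
\[
h(x) \;=\; c \, x_{1} \, \psi\bigl(\|x\|^{2}\bigr),
\]
where $c \in (1/2, 1]$ is a constant to be tuned. Property \eqref{eq:kernel 1} is immediate since the Euclidean unit ball is contained in $(-1,1)^{d}$, and $C^{2}$ regularity of $h$ follows from the chain and product rules. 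Property \eqref{eq:kernel 1.5} follows from $|h(x)| \le c |x_{1}| \le c \le 1$.

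The gradient and Hessian of $h$ are obtained by direct differentiation:
\[
\nabla h(x) \;=\; c \, \psi\bigl(\|x\|^{2}\bigr) \, e_{1} \;+\; 2 c \, x_{1} \, \psi'\bigl(\|x\|^{2}\bigr) \, x,
\]
\[
\nabla^{2} h(x) \;=\; 2 c \, \psi'\bigl(\|x\|^{2}\bigr)\bigl(e_{1} x^{T} + x e_{1}^{T}\bigr) \;+\; 2 c \, x_{1} \, \psi'\bigl(\|x\|^{2}\bigr) \, I \;+\; 4 c \, x_{1} \, \psi''\bigl(\|x\|^{2}\bigr) \, x x^{T}.
\]
At the origin this gives $\nabla h(0) = c \, e_{1}$, so $\|\nabla h(0)\| = c > 1/2$, establishing \eqref{eq:kernel 2}. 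For \eqref{eq:kernel 3}, the first term of $\nabla h$ has norm at most $c$, while the second is supported on the annulus $\{\,1/4 \le \|x\|^{2} \le 1\,\}$ and has norm at most $2 c \, \|x\|^{2} \, |\psi'| \le 2 c \sup|\psi'|$, so $\|\nabla h(x)\| \le c\bigl(1 + 2\sup|\psi'|\bigr)$. For \eqref{eq:kernel 4}, bounding each of the three matrix terms in operator norm using $\|x\| \le 1$, $|x_{1}| \le 1$, $\|e_{1} x^{T} + x e_{1}^{T}\|_{\mathrm{op}} \le 2\|x\|$, and $\|x x^{T}\|_{\mathrm{op}} = \|x\|^{2}$ yields $\|\nabla^{2} h(x)\|_{\mathrm{op}} \le 6 c \sup|\psi'| + 4 c \sup|\psi''|$.

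The main obstacle, and the reason for choosing a radial cutoff rather than the naive product $\prod_{i} \eta(x_{i})$, is to keep every bound independent of $d$: in a product construction the gradient at points near a boundary corner of the cube picks up contributions from many coordinates and scales like $\sqrt{d}$, whereas the radial form collapses the transition onto the single direction of $x$ and so yields bounds that depend only on $\sup|\psi'|$ and $\sup|\psi''|$. Once this structural choice is made, what remains is a purely numerical tuning: pick $c$ slightly above $1/2$ (to get \eqref{eq:kernel 2}) and choose $\psi$ so that $\sup|\psi'|$ and $\sup|\psi''|$ are small enough that $c(1 + 2\sup|\psi'|) \le 3$ and $6 c \sup|\psi'| + 4 c \sup|\psi''| \le 4$. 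Both inequalities are easily met because the transition interval $[1/4, 1]$ has width $3/4$, allowing $\psi$ to be chosen with moderate derivatives; no subtle estimate is required beyond this explicit tuning of absolute constants.
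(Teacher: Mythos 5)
Your construction $h(x) = c\,x_1\,\psi(\|x\|^2)$ is a genuinely different route from the paper, which uses an off-center radial bump $h(x)=\tfrac{8}{27}\bigl(1-\tfrac94\|x+\tfrac13 e_1\|^2\bigr)^3$ (a cubed parabola supported on a ball of radius $2/3$ around $-e_1/3$, with the origin placed off-center so that $\nabla h(0)\ne 0$). Your odd-in-$x_1$ construction is an attractive alternative and the dimension-free reasoning is sound. However, there is a genuine gap in the verification of \eqref{eq:kernel 4}, and in fact the specific $\psi$ you propose does not satisfy it.

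The issue is that the triangle-inequality bound $\|\nabla^2 h\|_{\mathrm{op}}\le c\bigl(6\sup|\psi'|+4\sup|\psi''|\bigr)$ cannot be pushed below $4$ with $c>1/2$, for \emph{any} admissible $\psi$. Since $\psi$ drops from $1$ to $0$ on $[1/4,1]$ one has $\int_{1/4}^1 |\psi'| = 1$ and hence $\sup|\psi'|\ge 4/3$; and since $\psi'$ vanishes at both endpoints of $[1/4,1]$ and reaches magnitude $\sup|\psi'|$ somewhere inside, a two-sided mean-value argument gives $\sup|\psi''|\ge (8/3)\sup|\psi'|\ge 32/9$. Plugging in, $6\sup|\psi'|+4\sup|\psi''|\ge 200/9 > 22$, so the term-by-term Hessian bound is $\ge 11$ for $c>1/2$, which overshoots $4$ by a wide margin. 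The remark that ``both inequalities are easily met\dots no subtle estimate is required'' is therefore false: the separate operator norms of the three Hessian terms are simply too large to sum to $\le 4$. Worse, the stated quintic smoothstep actually fails the conclusion, not merely the bound: on $\{1/4<\|x\|^2<\text{roughly }0.5\}$ one has $\psi'<0$ and also $\psi''<0$, so at $x=r e_1$ the $e_1$-eigenvalue $6cr\psi'(r^2)+4cr^3\psi''(r^2)$ has both contributions of the same sign; a direct evaluation near $\|x\|^2\approx 0.41$ gives an eigenvalue around $-15c$, which for $c$ near $1/2$ violates $\nabla^2 h\succeq -4I$.

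The construction can be salvaged, but only by abandoning the crude bound and choosing $\psi$ so that $\psi'$ and $\psi''$ have \emph{opposite} signs throughout the transition, so that the $\psi'$ and $\psi''$ contributions to the Hessian cancel rather than reinforce. For example, with no plateau and $\psi(s)=(1-s)^3$ on $[0,1]$, the $e_1$-eigenvalue at $x=re_1$ equals $6cr(1-r^2)(7r^2-3)$, whose magnitude peaks near $4c$; this respects \eqref{eq:kernel 4} for $c$ modestly above $1/2$. This is, in disguise, the same cancellation that the paper's cubed-parabola bump exploits (there the analysis is carried out directly on $\lambda_{\min}$ and $\lambda_{\max}$ of the Hessian, not via a term-by-term triangle inequality). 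So the missing idea is precisely this eigenvalue-level analysis: you must either compute the Hessian eigenvalues of $c\,x_1(1-\|x\|^2)^3$ directly as the paper does for its bump, or at minimum argue that the rank-one $\psi''$ term opposes the $\psi'$ terms, rather than adding their operator norms.
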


\begin{proof}
	We show that the following function satisfies \eqref{eq:kernel 1}--\eqref{eq:kernel 4}:
	\begin{equation}
	h(x)=
	\begin{cases}
	\frac{8}{27}\Big(1-\frac{9}{4}\, \|x+\frac{1}{3}e_1\|^2\Big)^3, &\mbox{if }\,\|x+\frac{1}{3}e_1\|_2\leq \frac{2}{3},\\
	0, &\mbox{ otherwise,}
	\end{cases}
	\label{eq:tri}
	\end{equation}
	where $e_1$ is a vector whose first entry equals one and all other entries equal zero. 
	Note that if $\|x+e_1/3\|\ge 2/3$, then we have $h(x)=0$, $\nabla h(x)=\mathbf{0}_{n\times 1}$, and $\nabla^2 h(x)=\mathbf{0}_{n\times n}$. Therefore, the function value and its the first and second derivatives are continuous. Hence, $h$ is continuously twice differentiable.
	The gradient and Hessian of function $h$ are as follows:
	\begin{align}
	\nabla h(x)&= -4\Big(1-\frac{9}{4}\|x+\frac{1}{3}e_1\|^2\Big)^2\Big(x+\frac{1}{3}e_1\Big) \label{eq:h grad}\\
	\nabla^2 h(x)&=36 \Big(1-\frac{9}{4} \|x+\frac{1}{3}e_1\|^2\Big)\big(x+e_1/3\big)\big(x+e_1/3\big)^T-4\Big(1-\frac{9}{4}\|x+\frac{1}{3}e_1\|^2\Big)^2 I. \label{eq:h hess}
	\end{align}
	
	We now examine  properties \eqref{eq:kernel 1}--\eqref{eq:kernel 4}. 
	For \eqref{eq:kernel 1}, note that if $x\not\in (-1,1)^d$, then $\|x+e_1/3\|\ge 2/3$, and as a result,  $h(x)=0$.
	Eq. \eqref{eq:kernel 1.5} is immediate from the definition of $h$ in \eqref{eq:tri}.
	Property \eqref{eq:kernel 2} follows because $\|\nabla h(0)\| \,=\, 3/4\,>\, 1/2$.
	For \eqref{eq:kernel 3}, consider any $x$ such that $\nabla h(x)\ne 0$. Then, $\|x+e_1/3\|\leq 2/3$, and \eqref{eq:h grad} implies that
	\begin{equation*}
	\|\nabla h(x)\|\,=\, 4\left(1-\frac{9}{4}\|x+\frac{1}{3}e_1\|^2\right)^2\|x+e_1/3\|
	\,\leq\, 4\times \frac{2}{3}
	\,<\, 3.
	\end{equation*}
	Based on \eqref{eq:h hess}, at any point $x$ where $\|x+e_1/3\|<2/3$, the largest and smallest eigenvalues of the Hessian matrix are
	\begin{equation}
	\begin{split}
	\lambda_{min} & =-4\Big(1-\frac{9}{4}\|x\|^2\Big)^2\geq -4,\\
	\lambda_{max} & =36\Big(1-\frac{9}{4}\|x+e_1/3\|^2\Big)\|x+e_1/3\|^2-4\Big(1-\frac{9}{4}\|x+e_1/3\|^2\Big)^2.\\
	\end{split}
	\label{eq:a41}
	\end{equation}
	Letting $\alpha=\big(3\|x+e_1/3\|/2\big)^2$, we have
	\begin{equation}
	\lambda_{max}\,\leq\, \sup_{\alpha\in [0,1]} 16(1-\alpha)\alpha -4(1-\alpha)^2
	\,\leq\, \sup_{\alpha\in [0,1]} 4-4(1-\alpha)^2
	\,\leq\, 4.
	\label{eq:a42}
	\end{equation}
	Property \eqref{eq:kernel 4}  follows from \eqref{eq:a41} and \eqref{eq:a42}. This completes the proof of the claim.
\end{proof}
\medskip

Consider a function  $h$ as in Claim~\eqref{claim:kernel}, and let
$$k(x) = 10\sqrt{d} \epsilon^2 \, h\left(\frac{x}{10\sqrt{d}\epsilon}\right),$$
for all $x\in\R^d$.
Also let $\epsilon' = 10\sqrt{d}\epsilon$. Then, \eqref{eq:range of delta} implies that $\delta\ge\epsilon'$.
It follows from Claim~\ref{claim:kernel} that $k(\cdot)$ is continuously twice differentiable, and
\begin{align}
k(x) &= 0,&\textrm{for}\quad &x\not\in (-\epsilon',\epsilon')^d, \label{eq:k kernel 1}\\
|k(x)| &\le 10\sqrt{d} \epsilon^2,&\textrm{for}\quad &x\in \mathbb{R}^d \label{eq:k kernel 1.5}\\
\|\nabla k(0)\|&>\frac{\epsilon}2, & &  \label{eq:k kernel 2}\\
\|\nabla k(x)\| & \leq 3\epsilon,&\textrm{for}\quad  &x\in \mathbb{R}^d, \label{eq:k kernel 3}\\
-\frac{4}{10\sqrt{d}}\,I_{d\times d} \preceq \nabla^2 k(x) &\preceq \frac{4}{10\sqrt{d}}\,I_{d \times d},\hspace{-2cm}&\textrm{for}\quad & x\in \mathbb{R}^d. \label{eq:k kernel 4}
\end{align}

Consider a regular $2\epsilon'$-grid $G$ inside $[-\delta,\delta]^d$, such that the coordinates of points in $G$ are odd multiples of $\epsilon'$, e.g., $[\epsilon',\epsilon',\ldots,\epsilon']\in G$. Let $M$ be the collection of all functions  $s:G\rightarrow \{-1,+1\}$ that assign $\pm 1$ values to each grid point in $G$. 
Therefore , $\M$ has size
\begin{equation}
|\M| \,=\, 2^{|G|} \,=\, 2^{\left(2\left\lfloor \frac{\delta+\epsilon'}{2\epsilon'}\right\rfloor\right)^d}
\,\geq\, 2^{\left(\frac{\delta}{2\epsilon'}\right)^d},
\label{eq:M}
\end{equation}
where in the last inequality is because $\delta\ge \epsilon'$.

For any function $s\in \M$, we define a function $f_s:\R^d\to\R$ of the following form
\begin{equation} \label{def:function fs}
f_s(x)\,=\,\left(\sum_{p\in G} s(p)k(x-p)\right)+\frac{1}{4\sqrt{d}}\|x\|^2.
\end{equation}
There is an  equivalent representation for $f_s$ as follows.  
For any $x\in \mathbb{R}^d$, let $\pi(x)$ be the closest point to $x$ in $G$.
Then, it follows from \eqref{eq:k kernel 1} that for any $x\in \mathbb{R}^d$,
\begin{equation}
f_s(x) \,=\, s\big(\pi(x)\big) \, k\big(x-\pi(x)\big)\,+\,\frac{1}{4\sqrt{d}}\|x\|^2.
\label{eq:anotherform}
\end{equation}

\begin{claim}
	For any $s\in \M$, we have $f_s\in \Ft$, $f_s(\mathbf{0})=0$, and $\nabla f_s(\mathbf{0})=\mathbf{0}$.
\end{claim}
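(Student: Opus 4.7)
The plan is to verify the three requirements separately, leaning on the kernel properties \eqref{eq:k kernel 1}--\eqref{eq:k kernel 4} and on the structure of the grid $G$.

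First I would handle $f_s(\mathbf{0})=0$ and $\nabla f_s(\mathbf{0})=\mathbf{0}$. The key geometric observation is that $G$ consists of points whose coordinates are \emph{odd} multiples of $\epsilon'$, so $\mathbf{0}\notin G$ and in fact $\|\mathbf{0}-p\|_{\infty}\ge \epsilon'$ for every $p\in G$. Thus $\mathbf{0}-p\notin(-\epsilon',\epsilon')^d$, and by \eqref{eq:k kernel 1} we get $k(\mathbf{0}-p)=0$ and $\nabla k(\mathbf{0}-p)=\mathbf{0}$ for every $p\in G$. Plugging into the definition \eqref{def:function fs} of $f_s$ and noting that the quadratic term $\|x\|^2/(4\sqrt d)$ and its gradient vanish at $\mathbf{0}$, both desired equalities follow.

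Next I would verify membership in $\F$. Because adjacent grid points in $G$ are spaced by $2\epsilon'$ while each translated kernel is supported in a cube of half-side $\epsilon'$, the supports $p+(-\epsilon',\epsilon')^d$ for distinct $p\in G$ have pairwise disjoint interiors. Hence, the representation \eqref{eq:anotherform} is valid, and because $k$ is continuously twice differentiable and vanishes together with its first and second derivatives on the boundary of its support, $f_s$ is continuously twice differentiable on $\R^d$. The bound $|f_s(x)|\le\sqrt d$ on $[-1,1]^d$ follows from $|k|\le 10\sqrt d\,\epsilon^2$ (by \eqref{eq:k kernel 1.5}) and $\|x\|^2/(4\sqrt d)\le \sqrt d/4$, using that $\epsilon$ is tiny by \eqref{eq:lower on eps}. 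For $\|\nabla f_s(x)\|\le 1$, I combine $\|\nabla k(x-\pi(x))\|\le 3\epsilon$ from \eqref{eq:k kernel 3} with $\|x/(2\sqrt d)\|\le 1/2$, again using the smallness of $\epsilon$. Lipschitz continuity of $\nabla f_s$ (with constant $\le 1$) follows from the operator-norm bound on the Hessian: by \eqref{eq:k kernel 4}, $\nabla^2 f_s(x)\preceq \bigl(\tfrac{4}{10\sqrt d}+\tfrac{1}{2\sqrt d}\bigr)I=\tfrac{9}{10\sqrt d}I\preceq I$.

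Finally I would verify $\lambda$-strong convexity, i.e., $f_s\in\Ft$. Using the disjoint-support decomposition and \eqref{eq:k kernel 4},
\[
\nabla^2 f_s(x)\;=\;s(\pi(x))\,\nabla^2 k(x-\pi(x))\,+\,\tfrac{1}{2\sqrt d}\,I\;\succeq\;\Bigl(\tfrac{1}{2\sqrt d}-\tfrac{4}{10\sqrt d}\Bigr)I\;=\;\tfrac{1}{10\sqrt d}\,I,
\]
at every point where $k$ is twice differentiable; at the measure-zero boundary set, continuity of $\nabla^2 f_s$ extends the inequality. Since the theorem assumes $\lambda\le 1/(10\sqrt d)$, this is enough to conclude $\lambda$-strong convexity in the paper's convention.

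The only mildly delicate step is confirming, via the disjointness of kernel supports, that the formal sum collapses to the single-term form \eqref{eq:anotherform} \emph{together with} its first two derivatives, so that the pointwise Hessian estimate above holds globally. Everything else is bookkeeping with the constants from \eqref{eq:k kernel 1}--\eqref{eq:k kernel 4}, which work out cleanly because $\epsilon$ is forced to be small by \eqref{eq:lower on eps}.
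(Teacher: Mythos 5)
Your proof is correct and follows essentially the same route as the paper: verify $f_s(\mathbf{0})=\mathbf{0}$ and $\nabla f_s(\mathbf{0})=\mathbf{0}$ from the vanishing of $k$ outside $(-\epsilon',\epsilon')^d$, then check the bounds on $f_s$, $\nabla f_s$, and $\nabla^2 f_s$ via the kernel properties \eqref{eq:k kernel 1}--\eqref{eq:k kernel 4}, concluding strong convexity from $\lambda\le 1/(10\sqrt d)$. The only cosmetic differences are that you argue the first part from the full sum \eqref{def:function fs} (noting every summand vanishes at $\mathbf{0}$) rather than from the collapsed form \eqref{eq:anotherform}, and you spell out the disjoint-support observation that the paper leaves implicit; also, the hedge about a measure-zero boundary set is unnecessary since $k$ is globally $C^2$ by Claim~\ref{claim:kernel}.
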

\begin{proof}
	First note that $\pi(0)=[\epsilon',\ldots,\epsilon']=\epsilon'\mathbf{1}$, and it follows from \eqref{eq:anotherform} that 
	\begin{equation*}
	\begin{split}
	f_s(\mathbf{0})\,&=\,k(\epsilon'\mathbf{1})=0,\\ 
	\nabla f(\mathbf{0}) \,&=\, \nabla k(\epsilon'\mathbf{1}) \,=\, \mathbf{0},
	\end{split}
	\end{equation*} 
	where the second and last equalities are due to \eqref{eq:k kernel 1}. 
	Moreover, since $k(\cdot)$ is continuously twice differentiable, so is $f_s$.
	We now show that $f_s\in \Ft$, i.e.,
	\begin{align}
	|f_s(x)| \,&\leq\, \sqrt{d},\label{eq:prop fs 1}\\
	\|\nabla f_s(x)\| \, &\leq\, 1,\label{eq:prop fs 2}\\
	\lambda I_{d\times d} \,\preceq\, \nabla^2 f_s(x)  \,&\preceq\, \frac{1}{\sqrt{d}} I_{d\times d}, \label{eq:prop fs 3}
	\end{align}
	for all $x\in[-1,1]^d$

	From \eqref{eq:anotherform} and \eqref{eq:k kernel 1.5} , we have for $x\in [-1,1]^d$,
	\begin{equation*}
	|f_s(x)| \, \le\, |k(x)| + \frac{1}{4\sqrt{d}}\|x\|^2 \, \le\, 10\sqrt{d} \epsilon^2 + \frac{\sqrt{d}}{4} \,\le^{a}\, \epsilon + \frac{\sqrt{d}}{4} \, \le^{b}\, \sqrt{d},
	\end{equation*}
	and
	\begin{equation*}
	\begin{split}
	\|\nabla f_s(x)\|\,&=\, \Big\|\nabla k(x-\pi(x)) +\frac{x}{2\sqrt{d}} \Big\|\\
	&\leq\, \|\nabla k(x-\pi(x))\|+\frac{\|x\|}{2\sqrt{d}}\\
	&\leq^c 3\epsilon+\frac{\sqrt{d}}{2\sqrt{d}}\\
	&\leq^d 1,
	\end{split}
	\end{equation*}
	where $(a)$, $(b)$, and $(d)$  are due to the assumption $10\sqrt{d}\epsilon\leq 1$ in \eqref{eq:range of delta}, and   $(c)$ follows from \eqref{eq:k kernel 3}.
	For \eqref{eq:prop fs 3}, we have from \eqref{eq:anotherform},
	\begin{equation*}
	\nabla^2 f(x) \,=\, s\big(\pi(x)\big)\nabla^2 k\big(x-\pi(x)\big)+\frac{1}{2\sqrt{d}}I.
	\end{equation*}
	It then follows from \eqref{eq:k kernel 4}  that for any $x\in [-1,1]^d$, 
	\begin{equation*}
	\lambda I \,\preceq\, \frac{1}{10\sqrt{d}}I \,=\, \frac{-4}{10\sqrt{d}}I+\frac{1}{2\sqrt{d}} 
	\,\preceq\, \nabla^2 f(x) 
	\,\preceq\, \frac{4}{10\sqrt{d}}I + \frac{1}{2\sqrt{d}}I
	\,\prec\, \frac{1}{\sqrt{d}}I.
	\end{equation*}
	where the first inequality is from the assumption $\lambda \le {1}/{\big(10\sqrt{d}\big)}$ in the statement of Theorem~\ref{th:lower c}.
	Therefore $f_s\in \Ft$ and the claim follows.
\end{proof}

Let $\S_{\epsilon,\delta}$ be the collection of functions $f_s$ defined in \eqref{def:function fs}, for all $s\in \M$; i.e.,
\begin{equation}
\S_{\epsilon,\delta} \,=\,  \big\{ f_s \, \mid\, s\in \M \big\}.
\label{eq:defS}
\end{equation}
We  show that $\S_{\epsilon,\delta}$ is an $(\epsilon,\delta)$-packing. 
Consider a pair of distinct functions $s_1,s_2\in M$.  
Then, there exists a point $p\in G$ such that $s_1(p)\neq s_2(p)$; equivalently, $|s_1(p)-s_2(p)|=2$. 
Therefore,
\begin{equation*}
\begin{split}
\|\nabla f_{s_1}(p)-\nabla f_{s_2}(p)\|\,&=\, \|s_1(p)\nabla k(0)-s_2(p)\nabla k(0)\|\\
&=\,|s_1(p)-s_2(p)|\times \|\nabla k(0)\|\\
&>\, 2\times \frac{\epsilon}{2}\\
&=\,\epsilon,
\end{split}
\end{equation*}
where the first equality is due to \eqref{eq:anotherform}, and the inequality follows from \eqref{eq:k kernel 3}.
This shows that 
for any pair of distinct $f,g\in S_{\epsilon,\delta}$ functions, $\|\nabla f(p)-\nabla g(p)\|\geq \epsilon$, for some $p\in [-\delta,\delta]^d$. 
Therefore, $S_{\epsilon,\delta}$ is an $(\epsilon,\delta)$-packing. 

Finally, it follows from \eqref{eq:M} that
\begin{equation*}
K_{\epsilon,\delta} \,\geq\, \log\big(|S_{\epsilon,\delta}|\big)
\,=\,\log\big(|M|\big)
\,\geq\, \left(\frac{\delta}{2\epsilon'}\right)^d
\,=\, \left(\frac{1}{20\sqrt{d}}\right)^d \, \left(\frac{\delta}{\epsilon}\right)^d,
\end{equation*}
and Lemma~\ref{lemma:metric} follows.


\medskip
\section{Proof of Lemma~\ref{Lemma:ratio}} \label{app:proof flatness}
We define an event $\mathcal{E}_0$ as follows:
\begin{equation*}
\Big|n_{i^+}-\frac{n}{4d}\Big|\leq \frac{\sqrt{n}\log(mn)}{16d},\quad \textrm{and}\quad \Big|n_{i^-}-\frac{n}{4d}\Big|\leq \frac{\sqrt{n}\log(mn)}{16d},\quad i=1,\cdots,d. 
\end{equation*}
We first show that $\mathcal{E}_0$ occurs with high probability.
Let $n_i$ be the $i$-th entry of $\underline{n}$ for $i\geq 1$.  Then, 
\begin{align*}
\Pr_{\underline{n}\sim P} \Bigg( \Big|n_i-\frac{n}{4d}\Big|\geq \frac{\sqrt{n}\log(mn)}{16d}\Bigg)\,
&\leq\, \Pr_{\underline{n}\sim P} \Bigg(\Big|n_i-\mathbb{E}_P[n_i]\Big|+\Big|\mathbb{E}_P[n_i]-\frac{n}{4d}\Big|\geq \frac{\sqrt{n}\log(mn)}{16d}\Bigg)\\
&\leq^a\,\Pr_{\underline{n}\sim P}\Bigg(\Big|n_i-\mathbb{E}_P[n_i]\Big|\geq \frac{\sqrt{n}}{16d}\Big(\log(mn)-\frac{2}{\log(mn)}\Big)\Bigg)\\
&\leq^b\, \Pr_{\underline{n}\sim P} \Bigg( \big|n_i -\mathbb{E}_P[n_i]\Big|\geq \frac{\sqrt{n}\log(mn)}{32d}\Bigg)\\
&=\, \Pr_{\underline{n}\sim P} \Bigg( \frac{1}{n}\Big|n_i -\mathbb{E}_P[n_i]\Big|\geq \frac{\log(mn)}{32d\sqrt{n}}\Bigg)\\
&\leq^c\, \exp\left(-2n\left(\frac{\log(mn)}{32\sqrt{n}d}\right)^2\right)\\
&=\,\exp\Big(-\Omega\big(\log^2(mn)\big)\Big),
\end{align*}
where ($a$) is due to that fact that $\big|\mathbb{E}[n_i]-n/(2d)\big|=\big|P_in-{n}/({4d})\big|\leq 
n/(2c\sqrt{n})=\sqrt{n}/\big(8d\log(mn)\big)$, ($b$) is valid for $mn\ge4$, and ($c$) follows from the Hoeffding's inequality. 
It then follows from the union bound that
\begin{equation} \label{eq:pr E1 in PP'}
\Pr(\mathcal{E}_0)\,\geq\, 1-d\exp\Big(-\Omega\big(\log^2(mn)\big)\Big) \,=\, 1-\exp\Big(-\Omega\big(\log^2(mn)\big)\Big).
\end{equation}

Consider a pair $\underline{n}$ and $\underline{n}'$ of vectors that satisfy $\mathcal{E}_0$.
Then,
\begin{equation*}
\begin{split}
\frac{P\big(\underline{n}\big)/P'\big(\underline{n}\big)}{P\big(\underline{n}'\big)/P'\big(\underline{n}'\big)}\,
&=\,\frac{\prod_{i=0}^{2d}\big(P_i/P'_i\big)^{n_i}}{\prod_{i=0}^{2d}\big(P_i/P'_i\big)^{n'_i}}\\
&=\, \prod_{i=0}^{2d} \left(\frac{P_i}{P'_i}\right)^{n_i-n'_i}\\
&=^a\, \prod_{i=1}^{2d} \left(\frac{P_i}{P'_i}\right)^{n_i-n'_i}\\
&\leq^b\, \prod_{i=1}^{2d} \left(\frac{1/(4d)+\big(2c\sqrt{n}\big)}{1/(4d) -1/\big(2c\sqrt{n}\big)}\right)^{2\sqrt{n}\log(mn)/(16d)}\\
&=\, \prod_{i=1}^{2d} \left(\frac{1+2d/\big(c\sqrt{n}\big)}{1 -2d/\big(c\sqrt{n}\big)}\right)^{\sqrt{n}\log(mn)/(8d)}\\
&\le^c\, \prod_{i=1}^{2d}\left(1+4\times\frac{2d}{c\sqrt{n}}\right)^{\sqrt{n}\log(mn)/(8d)}\\
&\leq^d\, \exp\left(2d\frac{8d}{c\sqrt{n}}\times \frac{\sqrt{n}\log(mn)}{8d}\right)\\
&=\,\exp\left(\frac{16d^2\log(mn)}{32d^2\log(mn)}\right)\\
&=\,\sqrt{e},
\end{split} 
\label{eq:ratio}
\end{equation*}
($a$) Due to the fact that: $P_0=P'_0=1/2$.\\
($b$) Since $\underline{n}$ and $\underline{n}'$ satisfy event $\mathcal{E}_0$ and $P,P'\in \mathcal{C}$.\\
($c$) Because $2d/c \le 1/2$.\\
($d$) Due to the fact that $1+x\leq \exp(x)$.\\
Therefore,
\begin{align*}
\sup_{\underline{n}\in \mathcal{E}_0} \frac{P\big(\underline{n}\big)}{P'\big(\underline{n}\big)} \,
&\leq\, \sqrt{e}\,\, \inf_{\underline{n}\in \mathcal{E}_0} \frac{P\big(\underline{n}\big)}{P'\big(\underline{n}\big)}\\
&\leq\, \sqrt{e}\,\,\frac{\sum_{\underline{n}\in \mathcal{E}_0}P\big(\underline{n}\big)}{\sum_{\underline{n}\in \mathcal{E}_0}P'\big(\underline{n}\big)}\\ 
&= \, \sqrt{e}\,\,\frac{P(\mathcal{E}_0)}{P'(\mathcal{E}_0)}\\
&\leq\, \sqrt{e} \Big(1-\exp\big(-\Omega(\log^2(mn))\big)\Big)\\
&\le\, 2, 	\end{align*}
where the inequality before the last one is due to \eqref{eq:pr E1 in PP'} and the last inquality is valid for sufficiently large values of $mn$. 
Interchanging the roles of $P$ and $P'$, it follows that
$\inf{\underline{n}\in \mathcal{E}_0} {P\big(\underline{n}\big)}/{P'\big(\underline{n}\big)}\ge 1/2$.
This completes the proof of Lemma~\ref{Lemma:ratio}.

\medskip

\section{Proof of Lemma~\ref{lem:E2}} \label{app:E2}
The server subdivides the $m\log^2(mn)$ machines into $\log^2(mn)$ groups of $m$ machines, and employs  $\hat{E}_1$ to obtain an estimate $\hat{\theta}_i$ for each group $i=1,\ldots,\log^2(mn)$.
Let $k= \log(mn)^2$ and without loss of generality suppose that $k$ is an even integer.
Consider a $d$-dimensional ball ${B}$ of smallest radius that encloses at least $k/2 +1$  points from $\hat{\theta}_1,\ldots, \hat{\theta}_k$.
Let $\hat{\theta}$ be the center of $B$.
The estimator $\hat{E}_2$ then outputs $\hat{\theta}$ as an estimation of $\theta^*$.

We now show that $\| \hat{\theta}-\theta^* \| \le \epsilon/(c\sqrt{n})$ with high probability.
Let $B'$ be the ball of radius $\epsilon/({2}c\sqrt{n})$ centered at $\theta^*$, and let $q$ be the number of point from $\hat{\theta}_1,\ldots, \hat{\theta}_k$ that lie in $B'$. 
Since the error probability of $\hat{E}_1$ is less than $1/3$, we have $\E[q]\ge 2k/3$.
Then, 
\begin{equation*}
\begin{split}
\Pr\big( q\le k/2  \big) &\,\le \, \Pr\big( q -\E[q] \le  -k/6  \big) \\
&\,\le\, \exp\left( \frac{-2 k}{36}  \right)\\
&\, = \, \exp\left( \frac{- \log^2(mn)}{18}  \right)\\
&\, = \, \exp\Big( -\Omega\big( \log^2(mn)\big)  \Big),
\end{split}
\end{equation*}
where the second inequality follows from the Hoeffding's inequality.

Therefore, with probability $1-\exp\big({-\Omega(\log^2 mn)\big)}$, $B'$ encloses at least $k/2+1$ points from $\hat{\theta}_1,\ldots, \hat{\theta}_k$. 
In this case, by definition, the radius $r$ of $B$ would be no larger that the radius fo $B'$, i.e., $r\le\epsilon/(2c\sqrt{n}) $.
Moreover, since $B$ and $B'$ each encapsulate at least $k/2+1$ points out of $k$ points, they intersect (say at point $p$). 
Then, with probability at least $1-\exp\big({-\Omega(\log^2 mn)\big)}$,
\begin{equation*}
\|\hat{\theta} - \theta^*  \| \, \le \, \|\hat{\theta} - p  \| + \|p - \theta^*  \| \, \le \, r + \frac{\epsilon}{2c\sqrt{n}}
\, \le\, \frac{\epsilon}{c\sqrt{n}},
\end{equation*}
and the lemma follows.

\medskip

\section{Missing Parts of the Proof of Lemma~\ref{lemma:7}}
\subsection{Proof of Claim~\ref{claim:subsample}} \label{app:proof claim subsample}
Since  
$\mathcal{W}_f$ consists of $8m\log^2(mn)$ samples, the union bound implies that all received $\underline{n}$'s satisfy  \eqref{eq:cond} with probability at least  $1-8m\log^2(mn)\exp\big(-\Omega(\log^2(mn))\big)=1-\exp\big(-\Omega(\log^2(mn))\big)$. 
Thus, with probability at least $1-\exp\big(-\Omega(\log^2(mn))\big)$, the survived sub-sampled pairs are i.i.d. with distribution $P^v$. 
Moreover, if we denote the number of sub-sampled pairs by $t$, then,
\begin{equation*}
\mathbb{E}[t]\,\geq\, \Bigg(\frac{1}{4}-\exp\Big(-\Omega(-\log^2(mn))\Big)\Bigg)\times 8m\log^2(mn)
\,\geq\, \frac{3}{2}m\log^2(mn),
\end{equation*}
for large enough values of $mn$. Then,
\begin{equation}
\begin{split}
\Pr\big(t<m\log^2(mn)\big)\,&\leq\, \Pr\left(t-\mathbb{E}[t]\leq -\frac{1}{2}m\log^2(mn)\right)\\
& \hspace{0cm} \leq\,  \Pr\left(\frac{t-\mathbb{E}[t]}{8m\log^2(mn)}\leq -\frac{1}{16}\right)\\
& \hspace{0cm} \leq\, \exp\left(-16m\log^2(mn)\left(\frac1{16}\right)^2\right)\\
&\hspace{0cm} = \, \exp\Big(-\Omega\big(\log^2(mn)\big)\Big),
\end{split}
\label{eq:boundt}
\end{equation}
where the last inequality is due to  Hoeffding's inequality. This completes the proof of the claim.

\medskip
\subsection{Proof of Claim~\ref{lemma:8}} \label{app:claim A*}
According to the definition of $S^*$ and the fact that $g,f\in S^*$ and $f\neq g$, there exists $x\in [-1/(c\sqrt{n}),1/(c\sqrt{n})]^d$ such that:
\begin{equation}
\|\nabla f(x)-\nabla g(x)\|\geq \epsilon/\sqrt{n}.
\label{eq:lemma8:1}
\end{equation}
It follows from the assumption $\nabla f(0)=0$ that
\begin{equation*}
\|\nabla f(x)\|\,=\,\|\nabla f(x)-\nabla f(0)\|\leq^a \|x\|\,\leq\, \frac{\sqrt{d}}{c\sqrt{n}},
\end{equation*}
where the first inequality is due to the Lipschitz continuity of derivative of $f$ (cf. Assumption~\ref{ass:1}).
Then, 
\begin{equation}  
-\nabla f(x)\in A.
\label{eq:grad in A}
\end{equation}

Let $v\in A^*$ be the closest point of $A^*$ to $-\nabla f(x)$.
Since $A^*$ is an $\big(\epsilon\lambda/(4\sqrt{n})\big)$-covering of $A$, it follows from \eqref{eq:grad in A} that
\begin{equation}
\|\nabla f(x)+v\| \,\leq\, \frac{\epsilon \lambda}{4\sqrt{n}}.
\label{eq:lemma8:2}
\end{equation}

According to Assumption~\ref{ass:1}, $f$ is $\lambda$-strongly convex. Then,
\begin{equation}
\|x-x^*_{v,f}\|\,\leq \, \frac{\|\nabla f(x) - \nabla f(x^*_{v,f})\|}{\lambda} \, =\, \frac{\|\nabla f(x)+v\|}{\lambda}
\,\leq \,\frac{\epsilon \lambda/(4\sqrt{n})}{\lambda}
\,=\, \frac{\epsilon}{4\sqrt{n}},
\label{eq:lemma8:3}
\end{equation}
where the first equality is because $x^*_{v,f}$ is a minimizer of $f(x) + v^Tx$.
On the other hand, it follows from \eqref{eq:lemma8:1} and \eqref{eq:lemma8:2} that
\begin{equation*}
\begin{split}
\|\nabla g(x)+v\|\,&\geq\, \|\nabla g(x)-\nabla f(x)\|-\|\nabla f (x)+v\|\\
&\geq\, \frac{\epsilon}{\sqrt{n}}-\frac{\epsilon \lambda}{4\sqrt{n}}\\
& \geq\, \frac{\epsilon}{\sqrt{n}}-\frac{\epsilon}{4\sqrt{n}}\\
& =\, \frac{3\epsilon}{4\sqrt{n}},\\
\end{split}
\end{equation*}
where the last inequality is because $\nabla f$ is Lipschtz continuous with constant $1$ and as a result, $\lambda\le 1$.
Then,
\begin{equation}
\|x^*_{v,g}-x\|\geq \|\nabla g(x)+v\|
\,\geq\,\frac{3\epsilon}{4\lambda\sqrt{n}}
\,\ge\,\frac{3\epsilon}{4\sqrt{n}}.
\label{eq:lemma8:4}
\end{equation}
Combining \eqref{eq:lemma8:3} and \eqref{eq:lemma8:4}, we obtain
\begin{equation*}
\|x^*_{v,g}-x^*_{v,f}\| \,\geq\, \|x^*_{v,g}-x\| - \|x^*_{v,f}-x\| \,\geq\,  \frac{3\epsilon}{4\sqrt{n}}-\frac{\epsilon}{4\sqrt{n}} \,=\, \frac{\epsilon}{2\sqrt{n}}.
\end{equation*}
This completes the proof of Claim~\ref{lemma:8}.


\medskip
\section{Proof of Theorem~\ref{th:main upper c}} \label{sec:proof main alg c}
We first show that $s^*$ is a closest grid point of $G$ to $\theta^*$ with high probability. 
We then show that for any $l\le t$ and any $p\in \tilde{G}_{s^*}^l$, the number of sub-signals corresponding to $p$ after redundancy elimination  is large enough so that the server obtains a good 
approximation of $\nabla F$ at $p$. 
Once we have a good approximation of $\nabla F$ at all points of $\tilde{G}_{s^*}^t$, a point with the minimum norm for this approximation lies close to the minimizer of $F$. 

Recall the definition of $s^*$ as the grid point of $G$ that appears for the most number of times in the $s$ component of the received signals. 
Let $m^*$ be the number of machines that select $s=s^*$. 
We let $\mathcal{E'}$  be the event that $m^*\geq m/2^d$ and $\theta^*$ lies in the $\big(2\log(mn)/\sqrt{n}\big)$-cube $C_{s^*}$ centered at $s^*$, i.e.,
\begin{equation} \label{eq:s* in D}
\|s^*-\theta^*\|_\infty \leq \frac{\log(mn)}{\sqrt{n}}.
\end{equation}
Then, 
\begin{lemma} \label{lem:pro E1 c}
	\begin{equation}\label{eq:prb E1 upper}
	\Pr\big(\mathcal{E'}\big)=1-\exp\big(-\Omega(\log^2(mn))\big).
	\end{equation}
\end{lemma}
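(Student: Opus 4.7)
The plan is to prove both clauses of $\mathcal{E}'$ simultaneously by first establishing a per-machine concentration of the local minimizer $\theta^i$ around $\theta^*$, and then applying a pigeonhole argument: the concentration will force each machine's vote $s$ to lie among a small collection of at most $2^d$ grid points of $G$ adjacent to $\theta^*$, so the most-voted grid point $s^*$ both receives at least $m/2^d$ votes and sits within one grid spacing of $\theta^*$ in $\ell_\infty$.

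For the per-machine step, I would use that $\theta^i$ is the interior minimizer of $\hat F^i_{1/2}(\theta):=(2/n)\sum_{j=1}^{n/2}f_j^i(\theta)$, so $\nabla \hat F^i_{1/2}(\theta^i)=0$. Combining with $\nabla F(\theta^*)=0$ and the $\lambda$-strong convexity of $F$ (Assumption~\ref{ass:2}) gives
\begin{equation*}
\lambda\,\|\theta^i-\theta^*\| \,\le\, \|\nabla F(\theta^i)\| \,=\, \bigl\|\nabla F(\theta^i)-\nabla \hat F^i_{1/2}(\theta^i)\bigr\| \,\le\, \sup_{\theta\in[-1,1]^d}\bigl\|\nabla F(\theta)-\nabla \hat F^i_{1/2}(\theta)\bigr\|.
\end{equation*}
The right-hand supremum I would control by a standard covering argument: take a $(1/\sqrt{n})$-net of $[-1,1]^d$ of cardinality $N=O(n^{d/2})$, apply coordinate-wise Hoeffding (Lemma~\ref{lemma:CI}(a)) to each of the $d$ entries of $\nabla F(\theta)-\nabla \hat F^i_{1/2}(\theta)$ at every net point with failure probability $\exp(-\Omega(\log^2(mn)))$, union-bound over the $Nd$ resulting events, and extend off the net using that $\nabla F-\nabla \hat F^i_{1/2}$ is $2$-Lipschitz by Assumption~\ref{ass:1}. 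This produces a uniform bound $\sup_\theta\|\nabla F(\theta)-\nabla\hat F^i_{1/2}(\theta)\|\le C\log(mn)/\sqrt n$ and therefore $\|\theta^i-\theta^*\|\le C\log(mn)/(\lambda\sqrt{n})$, with probability $1-\exp(-\Omega(\log^2(mn)))$.

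Next, I would union-bound over the $m$ machines; the extra factor of $m\le\exp(\log^2(mn))$ is absorbed into the $\Omega(\log^2(mn))$ tail, so with probability $1-\exp(-\Omega(\log^2(mn)))$ every machine's $\theta^i$ lies within $\ell_\infty$-distance $\le\log(mn)/(2\sqrt n)$ of $\theta^*$. The corresponding $\ell_\infty$-ball meets at most $2^d$ Voronoi cells of the grid $G$ (precisely, the cells whose grid points are the $2^d$ vertices of the cube of side $2\log(mn)/\sqrt n$ straddling $\theta^*$), so each machine's vote $s$ lies in this set of at most $2^d$ grid points. Pigeonhole then yields an $s^*$ collecting at least $m/2^d$ votes, and since $s^*$ belongs to this set, we automatically have $\|s^*-\theta^*\|_\infty\le\log(mn)/\sqrt n$, i.e.~$\theta^*\in C_{s^*}$.

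The main obstacle will be executing the per-machine step tightly enough: the uniform gradient concentration has to simultaneously deliver a confidence $1-\exp(-\Omega(\log^2(mn)))$ \emph{and} translate (through the $1/\lambda$ factor and the dimensional $\sqrt d$ appearing in the $\ell_2$-to-$\ell_\infty$ conversion) into an $\ell_\infty$ bound strictly below one half grid spacing so that the $2^d$-cell pigeonhole applies. Since $\lambda$ and $d$ are treated as fixed positive constants and the grid spacing carries an explicit $\log(mn)$ factor, the required inequality holds for all sufficiently large $mn$, which is exactly the regime the $\exp(-\Omega(\log^2(mn)))$ tail in the lemma already implicitly refers to.
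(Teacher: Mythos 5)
Your overall strategy is the same as the paper's: show each local minimizer $\theta^i$ concentrates to within half a grid spacing of $\theta^*$ via $\lambda$-strong convexity of $F$ combined with gradient concentration, union-bound over machines, and then apply the $2^d$-cell pigeonhole to conclude both that $\|s^*-\theta^*\|_\infty\le\log(mn)/\sqrt n$ and that $m^*\ge m/2^d$. Where you genuinely differ is the per-machine step. The paper (Claim~\ref{lemma:10}) bounds $\Pr(\|\theta^i-\theta^*\|\ge\alpha/\sqrt n)$ by applying Hoeffding coordinate-wise to $\tfrac2n\sum_l\partial_{\theta_j}f^i_l(\theta^i)-\E[\partial_{\theta_j}f(\theta^i)]$ directly at the point $\theta^i$; since $\theta^i$ is itself a function of the samples $f^i_l$, this use of Hoeffding at a data-dependent point is formally dubious. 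Your uniform-over-a-net argument (net + coordinate-wise Hoeffding + union bound + Lipschitz extension) dominates $\|\nabla F-\nabla\hat F^i_{1/2}\|$ simultaneously over $[-1,1]^d$ before ever looking at $\theta^i$, and is therefore the more rigorous route to exactly the same conclusion $\|\theta^i-\theta^*\|\le O(\log(mn))/(\lambda\sqrt n)$ with confidence $1-\exp(-\Omega(\log^2 mn))$. You pay an $n^{d/2}$ union-bound factor, but as you note it is swallowed by the $\exp(-\Omega(\log^2 mn))$ tail.

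One small but genuine slip in your closing paragraph: you argue that the needed inequality ``$\|\theta^i-\theta^*\|_\infty$ below half a grid spacing'' holds for all sufficiently large $mn$ because the grid spacing carries an explicit $\log(mn)$ factor. That reasoning does not work, since your deviation bound $C\log(mn)/(\lambda\sqrt n)$ scales identically in $\log(mn)/\sqrt n$, so the $\log(mn)$ factors cancel and taking $mn$ large does not help. What actually closes the gap is the freedom to shrink the constant: choose the per-coordinate Hoeffding threshold so that the resulting $C$ satisfies $C\sqrt d/\lambda\le 1/2$ (possible because $\lambda$ and $d$ are fixed), at the price of a smaller, but still positive, constant inside the $\exp(-\Omega(\log^2 mn))$ exponent. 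This is exactly what Claim~\ref{lemma:10} in the paper makes explicit, where the exponent is $\alpha^2\lambda^2/d$ with $\alpha=\log(mn)/2$: the $\lambda^2/d$ in the rate is the cost of fitting under the half grid spacing, and it is fine precisely because $\lambda,d$ are constants.
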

The proof relies on concentration inequalities, and is given in Appendix~\ref{app:proof lem EM1}.

We now turn our focus to the inside of cube $C_{s^*}$.
Let 
\begin{equation} \label{eq:def eps upper}
\epsilon \,\triangleq\, \frac{2\sqrt{d}\,\log(mn)}{\sqrt{n}}\times \delta \,=\, 
{4d^{1.5} \log^{4}(mn)} \, \max \left(\frac1{\sqrt{n}\,(mB)^{1/{d}}}, \, \frac{2^{d/2}}{\sqrt{mn}} \right). 
\end{equation}
For any $p\in \bigcup_{l\le t} \tilde{G}_{s^*}^l$, let $N_p$ be the number of machines that select point $p$ in at least one of their sub-signals. Equivalently,  $N_p$ is the number of sub-signals after redundancy elimination that have point $p$ as their second argument.
Let $\mathcal{E''}$ be the event that for any $l\le t$ and any $p\in \tilde{G}_{s^*}^l$, we have 
\begin{equation} \label{eq:bound Np}
N_p \ge \frac{4d^22^{-2l}\log^6(mn)}{n\epsilon^2}.
\end{equation}
Then,
\begin{lemma} \label{lem:prob of E2 upper}
	$\Pr\big(\mathcal{E''} \big) = 1-\exp\big(-\Omega(\log^2(mn))\big)$.
\end{lemma}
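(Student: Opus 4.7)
The plan is to condition on the event $\mathcal{E'}$ from Lemma~\ref{lem:pro E1 c}, so that at least $m^* \ge m/2^d$ machines send their $s$-part equal to $s^*$, and then, for each level $l$ and each point $p \in \tilde{G}_{s^*}^l$, express $N_p$ as a sum of independent Bernoulli indicators, apply a Chernoff-type concentration inequality, and finally take a union bound over all grid points at all levels. The key quantities will be the per-sub-signal probability of selecting a specific $p$ at level $l$, namely
\begin{equation*}
q_l \,=\, \frac{\Pr(l)}{|\tilde{G}_{s^*}^l|} \,=\, \frac{2^{-2l}}{S_t}, \qquad S_t \,\triangleq\, \sum_{j=0}^t 2^{(d-2)j},
\end{equation*}
and the number $K = \lceil B/(d\log mn)\rceil$ of sub-signals per machine.

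Conditional on the $m^*$ machines whose signals carry $s^*$, I would let $X_i$ be the indicator that machine $i$ picks $p$ in at least one of its $K$ sub-signals. These are i.i.d.\ Bernoulli with parameter $r_l = 1 - (1-q_l)^K \ge \min(Kq_l/2,\ 1/3)$, and $N_p = \sum_{i=1}^{m^*} X_i$ after redundancy elimination. Using the identity $n\epsilon^2 = 4d\log^2(mn)\,\delta^2$ coming from \eqref{eq:def eps upper}, the target lower bound in \eqref{eq:bound Np} rewrites cleanly as $M_l \triangleq d\,2^{-2l}\log^4(mn)/\delta^2$. The core algebraic step is to verify $\E[N_p] \ge c\,M_l\log^2(mn)$ for a suitable constant $c$; after substituting the definition of $\delta$ from \eqref{eq:def delta c} and the bound $S_t \le 2\delta^{-(d-2)}$ (for $d\ge 3$; for $d=2$ one uses $S_t = O(\log(mn))$), the $2^{-2l}$ factors cancel and the requirement reduces to $\delta^d\, mB \ge C\,2^d\log^5(mn)$, which is satisfied because $\delta^d \ge (2d)^d\log^{3d}(mn)/(mB)$.

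Once this expectation bound is in place, the Chernoff bound in Lemma~\ref{lemma:CI}(b) with $\alpha=1/2$ gives
\begin{equation*}
\Pr\!\big(N_p < M_l\big) \,\le\, \Pr\!\big(N_p < \E[N_p]/2\big) \,\le\, \exp\!\big(-\E[N_p]/8\big) \,=\, \exp\!\big(-\Omega(\log^2(mn))\big).
\end{equation*}
The total number of grid points across all levels is $\sum_{l=0}^{t} 2^{ld} \le 2\delta^{-d}$, which is polynomial in $mn$ because $1/\delta$ is. Therefore a union bound over these points and over the conditioning on $\mathcal{E'}$ gives
\begin{equation*}
\Pr\!\big(\mathcal{E''}\big) \,\ge\, \Pr(\mathcal{E'}) - 2\delta^{-d}\,\exp\!\big(-\Omega(\log^2(mn))\big) \,=\, 1-\exp\!\big(-\Omega(\log^2(mn))\big),
\end{equation*}
which is the desired claim.

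The main obstacle I expect is the algebraic verification in the second paragraph. The difficulty is that the definition of $\delta$ in \eqref{eq:def delta c} is a maximum of two terms, reflecting whether the bottleneck on error is the $(mB)^{-1/d}$ scaling or the $(mn)^{-1/2}$ scaling, and the argument must go through uniformly across both regimes and across all levels $l \in \{0,\ldots,t\}$. The sampling distribution \eqref{eq:prob choose p c}, with its $(d-2)$-exponent, is precisely calibrated so that the level-dependent factors of $2^{-2l}$ in $\E[N_p]$ and in the target $M_l$ cancel, and verifying this cancellation under both dominating regimes of $\delta$, while also handling the subcase $Kq_l \ge 1/2$ (where $r_l$ saturates at a constant rather than being proportional to $Kq_l$), is the most technical part of the proof.
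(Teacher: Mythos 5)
Your proposal follows essentially the same route as the paper's proof: condition on $\mathcal{E'}$, treat $N_p$ as a sum of i.i.d.\ Bernoulli indicators over the $m^*\ge m/2^d$ machines carrying $s^*$, lower-bound the per-machine hit probability via $1-(1-q_l)^K\ge\frac12\min(Kq_l,1)$, check $\E[N_p]\gtrsim M_l$ across both regimes of $\delta$ (and the $d\le2$ versus $d\ge3$ cases for $S_t$), apply Lemma~\ref{lemma:CI}(b), and union-bound over the polynomially many grid points. One small remark: the intermediate target you state, $\E[N_p]\ge c\,M_l\log^2(mn)$, is stronger than what is needed and may not hold uniformly (e.g.\ at $d=2$); the paper only proves $\E[N_p]\ge 2M_l$, which already suffices because $M_l=d\,2^{-2l}\log^4(mn)/\delta^2\ge d\log^4(mn)$ automatically (as $2^{-2l}\ge 2^{-2t}=\delta^2$), so the Chernoff exponent is $\Omega(\log^4(mn))$.
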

The proof is based on the concentration inequality in Lemma~\ref{lemma:CI}~(b), and is given in Appendix~\ref{app:proof lem E2 upper}.

Capitalizing on Lemma~\ref{lem:prob of E2 upper}, we now obtain a bound on the estimation error of gradient of $F$ at the grid points in $\tilde{G}_{s^*}^l$. 
Let $\mathcal{E'''}$ be the event that for any $l\le t$ and any grid point $p\in \tilde{G}_{s^*}^l$, we have $$\big\|\hat{\nabla} F(p) -\nabla F(p)\big\|\,<\,\frac{\epsilon}{4}.$$

\begin{lemma} \label{lem:prob E3 upper}
	$\Pr\big(\mathcal{E'''}\big)\,=\,1-\exp\big(-\Omega(\log^2(mn))\big)$. 
\end{lemma}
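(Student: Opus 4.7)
\textbf{Proof proposal for Lemma~\ref{lem:prob E3 upper}.}
The plan is to telescope the error $\hat\nabla F(p)-\nabla F(p)$ along the unique path $s^*=p_0,p_1,\dots,p_l=p$ in the tree of nested grids, bound each one-step error via Hoeffding's inequality using the sample-size guarantees of $\mathcal{E''}$, and take a union bound over all grid points. Throughout, we condition on $\mathcal{E'}\cap\mathcal{E''}$, which by Lemmas~\ref{lem:pro E1 c} and~\ref{lem:prob of E2 upper} holds with probability $1-\exp(-\Omega(\log^2(mn)))$. The sample split in the construction of each machine's signal---the first $n/2$ samples determine $s$ and which sub-signals the machine transmits, while the second $n/2$ samples define $\hat F^i$---is crucial: it ensures that, conditional on the identity of the machines contributing sub-signals for a given $(s^*,p)$ pair, the gradients $\nabla\hat F^i$ are independent across those machines and unbiased for $\nabla F$.

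For the root $l=0$, the estimator $\hat\nabla F(s^*)$ is the average of $N_{s^*}$ sub-signal $\Delta$'s, each equal to $\nabla\hat F^i(s^*)$ which is itself an average of $n/2$ i.i.d.\ gradients of norm at most $1$ (Assumption~\ref{ass:1}). Applying Hoeffding's inequality coordinate-wise to all $N_{s^*}\cdot(n/2)$ underlying gradient samples, combined with a union bound over the $d$ coordinates, gives
\begin{equation*}
\Pr\Bigl(\|\hat\nabla F(s^*)-\nabla F(s^*)\|\geq \alpha_0 \Bigr)\,\leq\, 2d\exp\!\Bigl(-\Omega\bigl(N_{s^*}\,n\,\alpha_0^2/d\bigr)\Bigr).
\end{equation*}
Choosing $\alpha_0=\epsilon/(8(t+1))$ and using $N_{s^*}\geq 4d^2\log^6(mn)/(n\epsilon^2)$ from $\mathcal{E''}$ drives the right-hand side to $\exp(-\Omega(\log^2(mn)))$. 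For an increment at level $l\geq 1$, the corresponding sum has summands bounded in norm by $\|p-p'\|=2^{-l}\sqrt d\log(mn)/\sqrt n$ (by Lipschitz continuity of $\nabla f$), and the sample-size bound $N_p\geq 4d^2 2^{-2l}\log^6(mn)/(n\epsilon^2)$ provides exactly the $2^{-2l}$ scaling needed to cancel the growth of the $\|\Delta\|^2$ bound: Hoeffding again yields one-step error at most $\alpha_0$ with failure probability $\exp(-\Omega(\log^2(mn)))$.

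A union bound over the at most $\sum_{l=0}^t 2^{ld}\leq 2(1/\delta)^d$ grid points costs a multiplicative factor $\exp(O(d\log(mn)))$ in the failure probability, which is absorbed into $\exp(-\Omega(\log^2(mn)))$ for large $mn$ since $d\log(mn)\ll\log^2(mn)$. Once these per-point bounds hold simultaneously, telescoping along the root-to-leaf path gives $\|\hat\nabla F(p)-\nabla F(p)\|\leq (l+1)\alpha_0\leq(t+1)\alpha_0=\epsilon/8<\epsilon/4$, as required. The quantization of each $\Delta$ to roughly $\log(mn)$ bits per coordinate introduces only an additional additive error of order $2^{-l}\sqrt d\log(mn)/(\sqrt n\cdot mn)$ per sub-signal, which is negligible compared to $\alpha_0$. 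The main obstacle is the precise matching between the level-dependent sample-count bound in $\mathcal{E''}$ (decaying as $2^{-2l}$) and the Lipschitz-induced range of $\Delta$ (decaying as $2^{-l}$): their product is level-independent, which is precisely the design feature that makes the per-level Hoeffding bound uniform in $l$ and therefore lets the telescoping of $O(\log(mn))$ levels still close to $\epsilon/4$.
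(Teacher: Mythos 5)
Your proposal follows essentially the same route as the paper's proof: telescoping $\hat\nabla F(p)-\nabla F(p)$ along the root-to-leaf path, applying Hoeffding's inequality at each level with the key observation that the $2^{-2l}$ sample-count lower bound from $\mathcal{E''}$ exactly cancels the $2^{-l}$ Lipschitz range of $\Delta$, and then union-bounding over all $O((1/\delta)^d)$ grid points, whose count grows only polynomially in $m$. The only cosmetic differences are your choice of per-level tolerance $\epsilon/(8(t+1))$ versus the paper's $\epsilon/(4\log(mn))$ (both valid since $t+1\le\log(mn)$) and your explicit (and correct) remark that the quantization error is negligible, which the paper omits.
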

The proof is given in Appendix~\ref{app:proof E3 upper} and relies on Hoeffding's inequality and the lower  bound on the number of received signals for each grid point, driven in Lemma~\ref{lem:prob of E2 upper}.

In the remainder of the proof, we assume that \eqref{eq:s* in D} and  $\mathcal{E'''}$ hold. 
Let $p^*$ be the  closest grid point in $\tilde{G}_{s^*}^t$ to $\theta^*$.
Therefore, 
\begin{equation}\label{eq:p theta eps2}
\|p^*-\theta^*\|\,\leq\, \sqrt{d}\,2^{-t}\frac{\log(mn)}{\sqrt{n}} \,=\,  \delta\times\frac{\sqrt{d}\,\log(mn)}{\sqrt{n}} \,=\, \epsilon/2.
\end{equation}
Then, it follows from $\mathcal{E'''}$ that
\begin{equation}
\begin{split}
\|\hat{\nabla} F(p^*)\|&\leq \big\|\hat{\nabla} F(p^*)-\nabla F(p^*)\big\|\,+\,\|\nabla F(p^*)\|\\
&\leq\, \epsilon/4+\|\nabla F(p^*)\|\\
&=\,\epsilon/4+ \big\|\nabla F(p^*)-\nabla F(\theta^*)\big\|\\
&\leq\, \epsilon/4 +\big\|p^*-\theta^*\big\|\\
&\leq\, \epsilon/4+\epsilon/2\\
&=\, 3\epsilon/4,
\end{split}
\label{eq:11star}
\end{equation}
where the second inequality is due to $\mathcal{E'''}$, the third inequality follows from the Lipschitz continuity of $\nabla F$, and the last inequality is from \eqref{eq:p theta eps2}.
Therefore, 
\begin{align*}
\|\hat{\theta}-\theta^*\|\,&\leq\, \frac{1}{\lambda}\,\big\|\nabla F(\hat{\theta})-\nabla F(\theta^*)\big\|\\
&=\,\frac{1}{\lambda}\,\|\nabla F(\hat{\theta})\|\\
&\leq \,\frac{1}{\lambda}\,\|\hat{\nabla} F(\hat{\theta})\|+\frac{1}{\lambda}\,\big\|\hat{\nabla} F(\hat{\theta})-\nabla F(\hat{\theta})\big\|\\
&\leq^a\, \frac{1}{\lambda}\,\|\hat{\nabla} F(\hat{\theta})\|+\frac{\epsilon}{4\lambda}\\
& \leq^b\, \frac{1}{\lambda}\,\|\hat{\nabla} F(p^*)\|+\frac{\epsilon}{4\lambda}\\
&\leq^c\, \frac{3\epsilon}{4\lambda}+\frac{\epsilon}{4\lambda}\\
&=\,\frac{\epsilon}{\lambda},
\end{align*}
($a$) Due to event $\mathcal{E'''}$. 
\\ ($b$) Because the output of the server, $\hat{\theta}$, is a grid point $p$ in $\tilde{G}_{s^*}^t$  with  smallest $\|\hat{\nabla}F(p)\|$.
\\ ($c$) According to \eqref{eq:11star}.
\\ Finally, it follows from \eqref{eq:s* in D} and Lemma~\ref{lem:prob E3 upper} that the above inequality holds with probability $1-\exp\big(-\Omega (\log^2(mn))\big)$. 
Equivalently,
\begin{equation*}
\Pr\left(\|\hat{\theta}-\theta^*\| \ge \frac{\epsilon}{\lambda} \right)\,=\,\exp\Big(-\Omega\big(\log^2(mn)\big)\Big),
\end{equation*}
and Theorem~\ref{th:main upper c} follows.


\medskip
\section{Proof of Lemma~\ref{lem:pro E1 c}} \label{app:proof lem EM1}
Suppose that machine $i$ observes functions $f_1^i,\ldots,f_n^i$. 
Recall the definition of $\theta^i$ in \eqref{eq:def theta i half upper c}. 
The following proposition provides a bound on $\theta^i-\theta^*$, which improves upon the bound in Lemma~8 of \citep{zhang2013information}. 
\begin{claim}
	For any $i\le m$,
	\begin{equation*}
	\Pr\left(\|\theta^i-\theta^*\|\geq \frac{\alpha}{\sqrt{n}}\right)
	\,\leq\, d\exp\left( \frac{-\alpha^2 \lambda^2}{d} \right),
	\end{equation*}
	where $\lambda$ is the lower bound on the curvature of $F$ (cf. Assumption~\ref{ass:1}).
	\label{lemma:10}
\end{claim}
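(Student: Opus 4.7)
The plan is to reduce the statement to a concentration bound on the empirical gradient at $\theta^*$ computed on the first half of the samples, namely $g_i := (2/n)\sum_{j=1}^{n/2} \nabla f_j^i(\theta^*)$, and then to apply Hoeffding coordinate-wise together with a union bound over the $d$ coordinates.

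For the concentration of $g_i$, each $\nabla f_j^i(\theta^*)$ has entries in $[-1,1]$ by Assumption~\ref{ass:1} and mean $\nabla F(\theta^*) = \mathbf{0}$, so Lemma~\ref{lemma:CI}(a) applied coordinate-wise gives $\Pr(|[g_i]_k| \geq \beta) \leq 2\exp(-n\beta^2/4)$. A union bound over $k = 1,\ldots,d$ together with the trivial inequality $\|g_i\| \leq \sqrt d\,\|g_i\|_\infty$ produces $\Pr(\|g_i\| \geq \sqrt d\,\beta) \leq 2d\exp(-n\beta^2/4)$. Matching $\sqrt d\,\beta$ to the target scale $\alpha\lambda/\sqrt n$ (so that $\beta \sim \alpha\lambda/\sqrt{dn}$) yields precisely the claimed exponent $\exp(-\alpha^2\lambda^2/d)$ with prefactor $d$.

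What remains is the deterministic-up-to-localization reduction $\|\theta^i - \theta^*\| = O(\|g_i\|/\lambda)$. Letting $\bar F_i(\theta) := (2/n)\sum_{j=1}^{n/2} f_j^i(\theta)$, the point $\theta^i$ is the minimizer of $\bar F_i$ and lies in the interior of $[-1,1]^d$ with high probability, so first-order optimality gives $\nabla \bar F_i(\theta^i) = \mathbf{0}$. Combining the $\lambda$-strong convexity of $F$ (Assumption~\ref{ass:2}) with $\nabla F(\theta^*) = \mathbf{0}$ yields $2\lambda\|\theta^i - \theta^*\| \leq \|\nabla F(\theta^i)\| = \|\nabla F(\theta^i) - \nabla \bar F_i(\theta^i)\|$. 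Decomposing the right-hand side as $[\nabla(F - \bar F_i)(\theta^i) - \nabla(F - \bar F_i)(\theta^*)] - g_i$, the first bracketed term is a centered stochastic increment whose norm can be shown --- by Hoeffding applied to each summand $\nabla(f_j^i - F)$ (which is bounded and $2$-Lipschitz) combined with an $\epsilon$-net of a suitable ball around $\theta^*$ --- to be $\tilde O(\|\theta^i - \theta^*\|/\sqrt n)$ with overwhelming probability, which is negligible compared to $\lambda\|\theta^i - \theta^*\|$ for moderate $n$. The reduction then follows.

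The main obstacle is this last reduction step. Because we only assume Lipschitz continuity of the first-order derivatives (and not of any Hessian), the naive deterministic bound on the noise $\nabla(F - \bar F_i)(\theta^i) - \nabla(F - \bar F_i)(\theta^*)$ via its $2$-Lipschitz modulus yields only $(\lambda - 1)\|\theta^i - \theta^*\| \leq \|g_i\|$, which is vacuous since $\lambda \leq 1/\sqrt d < 1$ by \eqref{eq:lam lower}. One must therefore exploit the mean-zero stochastic structure of the noise via a uniform concentration argument, and it is precisely the covering cost of a ball in $\mathbb{R}^d$ appearing in that argument that is responsible for the $d$ in the exponent $\exp(-\alpha^2\lambda^2/d)$ of the final bound.
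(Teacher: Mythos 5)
Your route is genuinely different from the paper's. The paper never evaluates the empirical gradient at the fixed point $\theta^*$; it works entirely at the random point $\theta^i$. Since $\theta^i$ is an interior minimizer of $F^i \triangleq \sum_{j\le n/2} f_j^i$, first-order optimality gives $\nabla F^i(\theta^i)=\mathbf{0}$, and since $\nabla F(\theta^*)=\mathbf{0}$, strong convexity of $F$ yields the deterministic chain
\begin{equation*}
\|\nabla F(\theta^i) - \nabla F^i(\theta^i)\| = \|\nabla F(\theta^i)\| = \|\nabla F(\theta^i) - \nabla F(\theta^*)\| \ge \lambda\|\theta^i-\theta^*\|.
\end{equation*}
The paper then bounds the deviation on the left coordinate-wise by Hoeffding plus a union bound over the $d$ coordinates, directly at $\theta=\theta^i$, with no localization step, no covering, and no second evaluation point. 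This closes the proof in a few lines.

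The third step of your plan is the genuine gap. By pushing the Hoeffding evaluation to the fixed point $\theta^*$ you create the fluctuation term $\nabla(F-\bar F_i)(\theta^i) - \nabla(F-\bar F_i)(\theta^*)$, and you correctly diagnose that the deterministic $2$-Lipschitz bound on this term is vacuous (it yields $(\lambda-2)\|\theta^i-\theta^*\|\le\|g_i\|$ with $\lambda\le1/\sqrt d$). But the resolution you sketch --- ``overwhelming probability,'' ``moderate $n$,'' an $\epsilon$-net of a ``suitable ball'' whose radius is itself random and unknown --- is not a proof. Carried out honestly, a uniform concentration/chaining argument for the Lipschitz-gradient class would introduce polylogarithmic factors and a sample-size condition of the form $n \gtrsim \lambda^{-2}\mathrm{polylog}(\cdot)$, neither of which appears in the claimed bound $d\exp(-\alpha^2\lambda^2/d)$, which is asserted for all $n$ and $\alpha$. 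So as written your argument establishes a weaker, conditional statement, not the claim. It is worth flagging, however, that your instinct to avoid evaluating at $\theta^i$ is not baseless: the paper's Hoeffding step treats $\nabla f_l^i(\theta^i) - \E_f[\nabla f(\theta^i)]$ as if the summands were mean-zero and independent, yet $\theta^i$ is itself a function of $f_1^i,\dots,f_{n/2}^i$. A fully rigorous version of the paper's proof would need exactly the sort of uniform control you gesture at --- the paper simply does not carry it out, and you did not complete it either.
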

\begin{proof}[Proof of Claim]
Let $F^i(\theta)=\sum_{j=1}^{n/2} f_j^i(\theta)$, for all $\theta\in[-1,1]^d$.
From the lower bound $\lambda$ on the second derivative of $F$, we have  
$$\| \nabla F(\theta^i) - \nabla {F}^i(\theta^i)\| \,=\, \|\nabla F(\theta^i)\| \,=\, \|\nabla F(\theta^i) - \nabla F(\theta^*) \| \,\geq\, \lambda \|\theta^i-\theta^*\|, $$
where  the two equalities are because $\theta^i$ and $\theta^*$ are the the minimizers of ${F}^i $ and $F$, respectively.  
Then,
\begin{equation}
\begin{split}
\Pr\left(\|\theta^i-\theta^*\| \ge \frac{\alpha}{\sqrt{n}}\right) \,& \le \, 
\Pr\left(\| \nabla F(\theta^i) - \nabla {F}^i(\theta^i)\| \geq \frac{\lambda\alpha}{\sqrt{n}}\right)\\
& \le^{a}\, \sum_{j=1}^d \Pr\left(\Big|\frac{\partial {F}^i(\theta^i)}{\partial \theta_j} - \frac{\partial {F}(\theta^i) }{\partial \theta_j}\Big| > \, \frac{\alpha \lambda}{\sqrt{d}\sqrt{n}}\right)\\
& =\, d\,\Pr\left(\Big|\frac{2}{n}\sum_{l=1}^{n/2} \frac{\partial}{\partial \theta_j} f_l^i(\theta^i) \,-\,   \E_{f \sim P}\big[\frac{\partial}{\partial \theta_j} f(\theta^i) \big] \Big|\geq \frac{\alpha \lambda}{\sqrt{d}\sqrt{n}} \right)\\
& =^b \, d \exp\left(-\frac{\alpha^2 \lambda^2}{d}\right),
\end{split}
\end{equation}
\\($a$) Follows from the union bound and the fact that for any $d$-dimensional vector $v$, there exists an entry $v_i$ such that $\|v\|\le |v_i|/\sqrt{d}$.
\\($b$) Due to Hoeffding's inequality (cf. Lemma~\ref{lemma:CI}~(a)).\\
This completes the proof of Claim~\ref{lemma:10}.
\end{proof}

Based on Claim~\ref{lemma:10}, we can write
\begin{equation}\label{eq:union for theta i}
\begin{split}
\Pr\bigg(\|\theta^i-\theta^*\|&\leq \frac{\log(mn)}{2\sqrt{n}},\, \mbox{for } i=1,\ldots,m\bigg)\\ &\geq\,  1-m \Pr\left(\|\theta^1-\theta^*\|\geq \frac{\log(mn)}{2\sqrt{n}}\right)\\
&\geq 1-md \exp\left(\frac{-\lambda^2\log^2(mn)}{4d}\right) \\
&=\,1-\exp\Big(-\Omega\big(\log^2(mn)\big)\Big),
\end{split}
\end{equation}
where the first inequality is due to the union bound and the fact that the  distributions of $\theta^1,\ldots,\theta^m$ are identical, and the second inequality follows from Lemma~\ref{lemma:10}.
Thus, with probability at least $1-\exp\big(-\Omega(\log^2(mn))\big)$, every $\theta^i$ is  in the distance $\log(mn)/2\sqrt{n}$ from $\theta^*$. 
Recall that for each machine $i$, all sub-signals of machine $i$ have the same $s$-component. Hereafter, by the $s$-component of a machine we mean the $s$-component of the sub-signals generated at that machine. For each $i$, let $s^i$ be the $s$-component of machine $i$.
Therefore, with probability at least $1-\exp\big(-\Omega(\log^2(mn))\big)$, for any machine $i$,
\begin{align*}
\Pr\left(\|s^i-\theta^*\|_\infty > \frac{\log(mn)}{\sqrt{n}} \right)\,
&\le\, \Pr\left(\|s^i-\theta^i\|_\infty
+ \|\theta^i-\theta^*\|_\infty > \frac{\log(mn)}{\sqrt{n}} \right)\\
&\le\, \Pr\left(\|s^i-\theta^i\|_\infty  > \frac{\log(mn)}{2\sqrt{n}} \right) \\
&\qquad + \Pr\left( \|\theta^i-\theta^*\|_\infty > \frac{\log(mn)}{2\sqrt{n}} \right)\\
& = \, 0 +   \Pr\left( \|\theta^i-\theta^*\|_\infty > \frac{\log(mn)}{2\sqrt{n}} \right)\\
& = \,   \exp\Big(-\Omega\big(\log^2(mn)\big)\Big),
\end{align*}
where the first equality is due to the choice of  $s^i$ as the nearest grid point to $\theta^i$, and the last equality follows from \eqref{eq:union for theta i}.
Recall that $s^*$ is the grid point with the largest number of occurrences in the received signals.
Therefore, \eqref{eq:s* in D} is valid with probability at least $1-\exp\big(-\Omega(\log^2(mn))\big)$.
Equivalently, $\theta^*$ lies in the $\big(2\log(mn)/\sqrt{n}\big)$-cube $C_{s^*}$ centered at $s^*$. 

Since grid $G$ has block size $\log(mn)/\sqrt{n}$, there are at most $2^d$ points $s$ of the grid that satisfy
$\|s-\theta^*\|_\infty \leq \log(mn)/\sqrt{n}$.
It then follows from \eqref{eq:s* in D} that 
$
\Pr\big(\mathcal{E'}\big)=1-\exp\big(-\Omega(\log^2(mn))\big)
$. 
This completes the proof of Lemma~\ref{lem:pro E1 c}.



\medskip
\section{Proof of Lemma~\ref{lem:prob of E2 upper}} \label{app:proof lem E2 upper}
Suppose that the $s$-component of machine $i$ is $s = s^*$ and assume that $\mathcal{E'}$ is valid. We begin with a simple inequality:
for any $x\in[0,1]$ and any $k>0$,
\begin{equation}\label{eq:power exp min}
1-(1-x)^k \, \ge\, 1- e^{kx} \, \ge\, \frac12 \min\big(kx,1\big).
\end{equation}

Let $Q_{p}$ be the  probability that $p$ appears in the $p$-component of at least one of the sub-signals of machine $i$.
Then, for $p \in \tilde{G}_{s^*}^l$,
\begin{equation*} \label{eq:Qp and qp for p}
\begin{split}
Q_{p} \,&=\, 1-\left(1-2^{-dl}\times\frac{2^{(d-2)l}}{\sum_{j=0}^t 2^{(d-2)j}}\right)^{\lfloor B/(d\log mn)  \rfloor} \\
\, &\ge\, \frac12 \min\left(\frac{2^{-2l}\,\big\lfloor B/(d\log mn)  \big\rfloor}{\sum_{j=0}^t 2^{(d-2)j}},\,1\right)\\
\, &\ge\, \frac12 \min\left(\frac{2^{-2l} B}{d\log (mn) \,\sum_{j=0}^t 2^{(d-2)j}},\,1\right)
\end{split}
\end{equation*}
where the equality is due to the probability of a point $p$ in   $\tilde{G}_{s^*}^l$ (see \eqref{eq:prob choose p c})  and the number  $\lfloor B/(d\log mn)  \rfloor$  of sub-signals per machine, and the first inequality is due to \eqref{eq:power exp min}. 
Assuming $\mathcal{E'}$, we then have 
\begin{equation}
\begin{split}
\mathbb{E}\big[N_{p}\big]\,&=\,  Q_{p} m^*
\,\geq\, \min\left(\frac{2^{-2l} m B}{2^{d+1}d\log (mn) \, \sum_{j=0}^t 2^{(d-2)j}},\, \frac{m}{2^{d+1}}\right).
\end{split}
\label{eq:8star 1}
\end{equation}
We now  bound the two terms on the right hand side of \eqref{eq:8star 1}.
For the second term on the right hand side of \eqref{eq:8star 1}, we have
\begin{equation}
\begin{split}
\frac{m}{2^{d+1}} \,& =\, \frac{m\epsilon^2}{2^{d+1}\epsilon^2} \\
\,&\ge\,\frac{16 \, m d^3 \log^{8}(mn) 2^d}{2^{d+1} mn \epsilon^2} \\
\,&>\, \frac{8 d^2\log^6(mn)}{n\epsilon^2},
\end{split}
\label{eq:8star 3}	
\end{equation}
where the first inequality is from the definition of $\epsilon$ in \eqref{eq:def eps upper}.
For the first term at the right hand side of \eqref{eq:8star 1},
if $d\le2$, then
\begin{equation} \label{eq:frac 1 delta to the max d 2 1 c}
\left(\frac{1}\delta\right)^{\max(d,2)} \,=\, \left(\frac1{\delta}\right)^2 \,\le\, \frac{m}{4 d^2 \log^6(mn)\, 2^d},\,<\, \frac{mB}{4 d^2 \log^6(mn)\, 2^d}.
\end{equation}
On the other hand, if $d\ge 3$, then
\begin{equation}\label{eq:frac 1 delta to the max d 2 2 c}
\left(\frac{1}\delta\right)^{\max(d,2)} \,=\, \left(\frac1{\delta}\right)^d \,\le\, \frac{mB}{2^d d^{d} \log^{3d}(mn)}\,<\, \frac{mB}{4 d^2 \log^6(mn)\, 2^d}.
\end{equation}
Moreover, 
\begin{equation}\label{eq:1delta vs m}
1/\delta \,\le\, \frac1{2d\log^3 mn} \times\frac{\sqrt{m}}{2^{d/2}}\,<\, m.
\end{equation}
It follows that for any $d\ge 1$,
\begin{equation*}\label{eq:t 2tmax bound}
\begin{split}
\sum_{j=0}^t 2^{(d-2)j} \,&\le\, t  2^{t\max(d-2,0)}\\ \,&\le\, \log(mn)\, 2^{t\max(d-2,0)}\\
&=\, \log(mn)\, \left(\frac{1}\delta\right)^{\max(d-2,0)}\\
&=\, \log(mn)\,\delta^2\, \left(\frac{1}\delta\right)^{\max(d,2)}\\
&\le\, \log(mn)\,\delta^2\, \frac{mB}{4 d^2 \log^6(mn)\, 2^d}\\
&=\, \log(mn)\times \frac{n\epsilon^2}{4d\log^2{(mn)}} \times \frac{mB}{4 d^2 \log^6(mn)\, 2^d}\\
&=\, \frac{nmB\epsilon^2}{16 d^3 \log^7(mn)\, 2^d },
\end{split}
\end{equation*}
where the second inequality is due to \eqref{eq:1delta vs m} and $t=\log (1/\delta )$. third inequality follows from \eqref{eq:frac 1 delta to the max d 2 1 c} and \eqref{eq:frac 1 delta to the max d 2 2 c}, the third equality is from the definition of $\epsilon$ in \eqref{eq:def eps upper}.
Then,
\begin{equation} \label{eq:mb over n sigma bound c}
\begin{split}
\frac{2^{-2l} m B}{2^{d+1}d\log (mn) \, \sum_{j=0}^t 2^{(d-2)j}}\,&\ge\, 
\frac{2^{-2l} m B}{2^{d+1}d\log (mn) } \times \frac{16 d^3 \log^7(mn)\, 2^d}{nmB\epsilon^2}\\
&=\,\frac{8 d^2\log^6(mn)\, 2^{-2l}}{n\epsilon^2}.
\end{split}
\end{equation}
Plugging \eqref{eq:8star 3} and \eqref{eq:mb over n sigma bound c} into \eqref{eq:8star 1}, it follows that for  $l=0,\ldots,t$ and for any $p\in \tilde{G}_{s^*}^l$,
\begin{equation}\label{eq:8stat s}
\mathbb{E}\big[N_{p}\big] \,\ge\, \frac{8 d^2\log^6(mn)\, 2^{-2l}}{n\epsilon^2}.
\end{equation}	
Given the bound in \eqref{eq:8stat s}, for $l=0,\ldots, t$, we have
\begin{equation}\label{eq:frac to log 2}
\begin{split}
\frac18 \mathbb{E}\big[N_{p}\big] \,&\ge\,\frac{d^2\log^6(mn)\, 2^{-2l}}{n\epsilon^2} \\
&\, \ge\, \frac{d^2 \log^6(mn)\,2^{-2t}}{n\epsilon^2} \\
&\, =\, \frac{d^2 \log^6(mn)\,\delta^2}{n^2\epsilon^2} \\
&\, =\, \frac{d^2 \log^6(mn)\,\delta^2}{4 d \log^2 (mn)\, \delta^2} \\
&\,=\, \frac{d\log^4 (mn)}4, 
\end{split}
\end{equation}
where the first equality is from definition of $t$ and the second equality is by the definition of $\epsilon$ in \eqref{eq:def eps upper}.
Then, for any $l\in0,\ldots,t$ and any $p\in \tilde{G}_{s^*}^l$,
\begin{equation}\label{eq:pr NP log4}
\begin{split}
\Pr\left(N_p\leq \frac{4 d^2\log^6(mn)\,2^{-2l}}{n\epsilon^2}\right) \,&\leq\, \Pr\left(N_p\leq \frac{\mathbb{E}[N_p]}{2}\right)\\
\,& \leq \,2^{-(1/2)^2\mathbb{E}[N_p]/2}\\
\,&\le\,2^{-d\log^4(mn)/4}\\
\,&<\,\exp\big({-d\log^4(mn)/6}\big),
\end{split}
\end{equation}
where the first inequalities are due to \eqref{eq:8stat s}, Lemma~\ref{lemma:CI}~(b),  \eqref{eq:frac to log 2}, and the fact that $\ln(2)/4> 1/6$ respectively.
Then,
\begin{align*}
\Pr\big(\mathcal{E''} \mid \mathcal{E'}\big)\, 
&=\, \Pr\left(N_p\geq \frac{4 d^2\log^6(mn)\,2^{-2l}}{n\epsilon^2}, \quad  \forall p\in \tilde{G}_{s^*}^l\mbox{ and for }l=0,\ldots,t\right)\\
&\ge\,
1- \sum_{l=0}^{t} \sum_{p\in \tilde{G}_{s^*}^l} \Pr\left(N_p\leq \frac{4 d^2\log^6(mn)\,2^{-2l}}{n\epsilon^2}\right)\\
&\geq\, 1- t2^{dt} \exp\big(-d\log^4(mn)/6\big)\\
&=\, 1- \log(1/\delta)\left(\frac1\delta\right)^d \exp\big(-d\log^4(mn)/6\big)\\
&\ge\, 1- \log(m)\,m^d \exp\big(-d\log^4(mn)/6\big)\\
&=\,1-\exp\Big(-\Omega\big(\log^4(mn)\big)\Big),
\end{align*}
where the first equality is by the definition of $\mathcal{E''}$, the first inequality is from union bound, the second inequality due to \eqref{eq:pr NP log4}, and the third equality follows from \eqref{eq:1delta vs m}.
It then follows from Lemma~\ref{lem:pro E1 c} that $\Pr\big(\mathcal{E''} \big)= 1-\exp\big(-\Omega(\log^2(mn))\big)$ and Lemma~\ref{lem:prob of E2 upper} follows.

\medskip
\section{Proof of Lemma~\ref{lem:prob E3 upper}} \label{app:proof E3 upper}
For any $l\le t$ and any $p\in \tilde{G}_{s^*}^0$, let
$$\hat{\Delta}(p)= \frac{1}{N_p}\,\, \sum_{\substack{\mbox{\scriptsize{Subsignals of the form }} \\ (s^*,p,\Delta)\\ \mbox{\scriptsize{after redundancy elimination}}}} \Delta,$$
and let $\Delta^*(p)= \mathbb{E}[\hat{\Delta}(p)]$.

We first consider the case of $l=0$. Note that $\tilde{G}_{s^*}^0$ consists of a single point $p=s^*$.
Moreover, the component $\Delta$ in each signal is the average over the gradient of $n/2$ independent functions. 
Then, $\hat{\Delta}(p)$ is the average over the gradient of $N_p \times n/2$ independent functions.
Given event $\mathcal{E''}$, for any entry $j$ of the gradient, it follows from Hoeffding's inequality (Lemma~\ref{lemma:CI}~(a)) that
\begin{equation}
\begin{split}
\Pr&\left(\big|\hat{\Delta}_j(s^*)-\Delta^*_j(s^*)\big|\geq \frac{\epsilon}{4\sqrt{d}\log(mn)}\right)\\
&\qquad\leq\, \exp\left(-N_{s^*}n \times \left(\frac{\epsilon}{4\sqrt{d}\log(mn)}\right)^2 \,/\,2^2 \right)\\
&\qquad\leq \exp\left(-n {\frac{4d^2\log^6(mn)}{n\epsilon^2}} \times\frac{\epsilon^2}{16\, d\log^2(mn)}\right)\\
&\qquad=\exp\left(\frac{-d\log^4(mn)}{4}\right)\\
&\qquad=\, \exp\Big(-\Omega\big(\log^4(mn)\big)\Big).
\end{split}
\label{eq:10star}
\end{equation}

For $l\geq 1$, consider a grid point $p\in \tilde{G}_{s^*}^l$ and let $p'$ be the parent of $p$.
Then, $\|p-p'\|={\sqrt{d}\,2^{-l}\log(mn)}/\sqrt{n}$.
Furthermore, for any function $f\in \F$, we have $\|\nabla f(p)-\nabla f(p')\|\leq \|p-p'\|$. Hence, for any $j\le n$,
\begin{equation*}
\Big|\frac{\partial f(p)}{\partial x_j}-\frac{\partial f(p')}{\partial x_j}\Big|
\,\leq\, \|p-p'\|
\,=\,\frac{\sqrt{d}\log(mn)2^{-l}}{\sqrt{n}}.
\end{equation*}
Therefore, $\hat{\Delta}_j(p)$ is the average of $N_p \times n/2$ independent variables with absolute values no larger than $\gamma\triangleq {\sqrt{d}\log(mn)2^{-l}}/{\sqrt{n}}$. 
Given event $\mathcal{E''}$, it then follows from the Hoeffding's inequality that
\begin{align*}
\Pr&\left(\big|\hat{\Delta}_j(p)-\Delta_j^*(p)\big|
\geq\, \frac{\epsilon}{4\sqrt{d}\log(mn)}\right)\\
&\leq\, \exp\left(-{nN_p}\times\frac1{(2\gamma)^2}\times \left(\frac{\epsilon}{4\sqrt{d}\log(mn)}\right)^2\right)\\
&\leq\, \exp\left(-n \frac{4d^2 2^{-2l}\log^6(mn)}{n\epsilon^2}\times \frac{n}{4d2^{-2l}\log^2(mn)}\times \frac{\epsilon^2}{16 d \log^2(mn)}\right)\\
&=\,\exp\big(-n\log^2(mn)/16\big)\\
&=\,\exp\Big(-\Omega\big(\log^2(mn)\big)\Big),
\end{align*}
where the second inequality is by substituting $N_p$ from \eqref{eq:bound Np}. 
Employing union bound, we obtain
\begin{align*}
\Pr&\left(\big\|\hat{\Delta}(p)-\Delta^*(p)\big\| \ge \frac{\epsilon}{4\log(mn)}\right)\\
&\qquad\leq\, \sum_{j=1}^d\Pr\left(\big|\hat{\Delta}_j(p)-\Delta_j^*(p)\big|\geq \frac{\epsilon}{4\sqrt{d}\log(mn)}\right)\\
&\qquad =\, d \, \exp\Big(-\Omega\big(\log^2(mn)\big)\Big)\\
&\qquad=\exp\Big(-\Omega\big(\log^2(mn)\big)\Big).
\end{align*}
Recall from \eqref{eq:page17 c} that for any non-zero $l\leq t$ and any $p \in \tilde{G}_{s^*}^l$ with parent $p'$,
$$\hat{\nabla} F(p)- \nabla F(p)\,=\,\hat{\nabla} F(p')-\nabla F(p')+\hat{\Delta}(p)-\Delta^*(p).$$
Then,
\begin{align*}
\Pr&\left(\big\|\hat{\nabla} F(p)-\nabla F(p)\big\|>\frac{(l+1)\epsilon}{4\log(mn)}\right)\\
&\leq\,\Pr\left(\big\|\hat{\nabla} F(p')-\nabla F(p')\big\|>\frac{l\epsilon}{4\log(mn)}\right)\\
&\qquad +\Pr\left(\big\|\hat{\Delta}(p)-\Delta^*(p)\big\|>\frac{\epsilon}{4\log(mn)}\right)\\
&\leq\,\Pr\left(\big\|\hat{\nabla} F(p')-\nabla F(p')\big\|>\frac{l\epsilon}{4\log(mn)}\right) +\exp\Big(-\Omega\big(\log^2(mn)\big)\Big).
\end{align*}
Employing an induction on $l$, we obtain for any $l\le t$,
$$\Pr\left(\big\|\hat{\nabla} F(p)-\nabla F(p)\big\|>\frac{(l+1)\epsilon}{4\log(mn)}\right)
\,\le\,\exp\Big(-\Omega\big(\log^2(mn)\big)\Big).$$
Therefore, for any grid point $p$, 
\begin{align*}
\Pr\left(\big\|\hat{\nabla} F(p)-\nabla F(p)\big\|>\frac{\epsilon}{4}\right)\,
&\le\,\Pr\left(\big\|\hat{\nabla} F(p)-\nabla F(p)\big\|>\frac{(t+1)\epsilon}{4\log(mn)}\right)\\
&=\exp\Big(-\Omega\big(\log^2(mn)\big)\Big),
\end{align*} 
where the inequality is because $t+1 = \log(1/\delta)+1\le \log(mn)$.
It then follows from the union bound that
\begin{equation} \label{eq:E''' given E''}
\begin{split}
\Pr\big(\mathcal{E'''} \mid \mathcal{E''}\big)\, &\geq\,  1- \sum_{l=0}^{t} \sum_{p\in \tilde{G}_{s^*}^l}\Pr\left(\big\|\hat{\nabla} F(p)-\nabla F(p)\big\|>\frac{\epsilon}{4}\right)\\
&\geq\, 1- t2^{dt} \exp\Big(-\Omega\big(\log^2(mn)\big)\Big)\\
&=\, 1- \log(1/\delta)\left(\frac1\delta\right)^d \exp\Big(-\Omega\big(\log^2(mn)\big)\Big)\\
&\ge \,1-m\log(m)\exp\Big(-\Omega\big(\log^2(mn)\big)\Big)\\
&=\,1- \exp\Big(-\Omega\big(\log^2(mn)\big)\Big).
\end{split}
\end{equation} 
On the other hand, we have from Lemma~\ref{lem:prob of E2 upper} that $\Pr\big(\mathcal{E''} \big)= 1-\exp\big(-\Omega(\log^2(mn))\big)$.
Then, $\Pr\big(\mathcal{E'''} \big)= 1-\exp\big(-\Omega(\log^2(mn))\big)$ and Lemma~\ref{lem:prob E3 upper} follows.


\medskip

\section{Proof of Theorem~\ref{th:Hinf}} \label{sec:proof constant bit lower bound} 
Let $\Ft$ be a sub-collection of functions in $\F$ that are $\lambda$-strongly convex.
Consider  $2^B+2$ convex functions in $\Ft$: 
\begin{equation*}
f(\theta,i) \,\triangleq\,\theta^2+\frac{\theta^i}{i!},\qquad \textrm{for }\quad  \theta\in [-1,1] \quad\textrm{ and }\quad i=1,\ldots,2^B+2.
\end{equation*}
Consider a probability distribution $P$ over these functions that, for each $i$, associates probability $p_i$ to function $f(\cdot,i)$.
With an abuse of the notation, we use  $P$ also for a vector with entries $p_i$.
Since $n=1$, each machine observes only  one of $f_i$'s and it can send a $B$-bit length signal out of $2^B$ possible messages of length $B$ bits. 
As a general randomized strategy, suppose that each machine sends $j$-th message with probability $a_{ij}$ when it observes function $f(\cdot,i)$. 
Let $A$ be a ${(2^B+2)\times 2^B}$ matrix with entries $a_{ij}$. 
Then, each machine sends $j$-th message with probability $\sum_i p_i a_{ij}$. 

At the server side, we only observe the number (or frequency) of occurrences of each message. 
In view of the law of large number, as $m$ goes to infinity, the frequency of $j$-th message  tends to $\sum_i p_ia_{ij}$, for all $j\leq 2^B$.
Thus, in the case of infinite number of machines, the entire information of all transmitted signals is captured in the vector $A^TP$. 

Let $\hat{G}$ denote the estimator located in the server, that takes  the vector  $A^TP$  and outputs an estimate  $\hat{\theta}=\hat{G}(A^TP)$  of the minimizer of $F(\theta)=\mathbb{E}_{x\sim P}\big[f(\theta,x)\big]$.
We also let  $\theta^*=G(P)$ denote the optimal solution (i.e., the minimizer of $F$).
In the following, we will show that the expected error $\mathbb{E}\big[|\hat{\theta}-\theta^*|\big]=\mathbb{E}\big[|\hat{G}(A^TP)-G(P)|\big]$ is lower bounded by a universal constant, for all matrices $A$ and all estimators $\hat{G}$.

We say that vector $P$ is central if 
\begin{equation}
\sum_{i=1}^{2^B+1} p_i=1, \qquad\textrm{and}\qquad p_i\geq \frac{1}{2^B+2}, \quad \textrm{for} \quad i=1,\cdots,2^B+1.
\label{eq:central}
\end{equation}
Let $\mathcal{P}_c$ be the collection of central vectors $P$. We define two constants 
\begin{align*}
\theta_1\,&\triangleq\, \inf_{P\in \mathcal{P}_c} \argmin{\theta} \sum_{i=1}^{2^B+2} p_i f(\theta,i),\\
\theta_2\,&\triangleq\, \sup_{P\in \mathcal{P}_c} \argmin{\theta} \sum_{i=1}^{2^B+2} p_i f(\theta,i).
\end{align*}
For any central $P$, the minimizer of $\mathbb{E}_{x\sim P}[f(\theta,x)]$ lies in the interval $[\theta_1,\theta_2]$. 
Furthermore, since  functions $f(\cdot,1)$ and $f(\cdot,2)$ have different minimizers, we have $\theta_1 \ne \theta_2$. 
Let
\begin{equation} \label{eq:def eps const}
\epsilon\triangleq \inf_{\substack{v\in \mathbb{R}^{2^b+2}\\\|v\|=1}} \, \,\sup_{\theta\in [\theta_1,\theta_2]} \,\Bigg|\sum_{i=1}^{2^B+2}v_i \, f'(\theta,i)\Bigg|,
\end{equation}
where $f'(\theta,i)= d/d\theta\, f(\theta,i)$.
We now show that $\epsilon>0$. 
In order to draw a contradiction, suppose that $\epsilon =0$. 
In this case, there exists nonzero vector $v$ such that the polynomial $\sum_{i=1}^{2^B+2} v_i f'(\theta,i)$ is equal to zero for all $\theta\in [\theta_1,\theta_2]$.
On the other hand, it follows from the definition of $f(\cdot,i)$ that for any nonzero vector $v$, $\sum_{i=1}^{2^B+2} v_i f'(\theta,i)$ is a nonzero polynomial of degree no larger than $2^B+1$. 
As a result, the fundamental theorem of algebra \citep{krantz2012handbook}  implies that this polynomial has at most $2^B+1$ roots and it cannot be zero over the entire interval $[\theta_1,\theta_2]$. 
This contradict with the earlier statement that the polynomial of interest equals zero throughout the interval $\theta\in [\theta_1,\theta_2]$.
Therefore,  $\epsilon>0$. 

Let $v$ be a vector of length $2^B+2$ such that $A^Tv=0$, $\|v\|=1$, and $\sum_i v_i=0$. 
Note that such $v$ exists and lies in the null-space of matrix $[A|\mathbf{1}]^T$, where $\mathbf{1}$ is the vector of all ones. 
Let $\theta'$ be the solution of the following optimization problem 
$$\theta'\,=\,\argmax{\theta\in [\theta_1,\theta_2]} \,\,\Bigg|\sum_{i=1}^{2^B+2}v_i f'(\theta,i)\Bigg|,$$
and assume that $P$ is a central vector such that $G(P)=\theta'$. Then, it follows from \eqref{eq:def eps const} that 
\begin{equation}
\Bigg|\sum_{i=1}^{2^B+2}v_i f'(\theta',i)\Bigg|\geq \epsilon.
\label{eq:eps}
\end{equation}

Let $Q=P+2^{-(B+2)}v$. Then, from the conditions in \eqref{eq:central} and $\|v\|=1$, we can conclude that $Q$ is a probability vector. 
Furthermore, based on the definition of $v$, 
\begin{equation}
A^TQ=A^TP+A^Tv=A^TP.
\label{eq:A}
\end{equation}
It then follows from \eqref{eq:eps} that
\begin{equation}
\Bigg|\frac{d}{d\theta} \mathbb{E}_{i\sim Q} [f(\theta,i)]\Big|_{\theta=\theta'}\Bigg|
\,=\,\Bigg|\sum_{i=1}^{2^B+2} \left(p_i+\frac{v_i}{2^{B+2}}\right)f'(\theta',i)\Bigg|
\,=\,\frac{1}{2^{B+2}}\Bigg|\sum_{i=1}^{2^B+2} v_i f'(\theta',i)\Bigg|
\,\geq\, \frac{\epsilon}{2^{B+2}},
\label{eq:eps2}
\end{equation} 
where the last equality is due to the fact that $\theta'$ minimizes $\mathbb{E}_{i\sim P} [f(\theta,i)]$. 

Let $\theta''=G(Q)$ be the  minimizer of $\mathbb{E}_{i\sim Q} [f(\theta,i)]$. 
Then, 
\begin{equation}\label{eq:d/dt E=0}
\frac{d}{dt}\, \mathbb{E}_{i\sim Q} [f(\theta,i)]\big|_{\theta=\theta''}=0.
\end{equation}
Furthermore, for any $ i \leq 2^B+2$ and any $\theta \in [-1,1]$, its easy to see that $|f''(\theta,i)|\leq 4$.
Consequently,   $\big|d^2/d\theta^2 \,\mathbb{E}_{i\sim Q} [f(\theta,i)]\big|\leq 4$, for all $\theta \in [-1,1]$. 
It follows that 
\begin{align*}
\nonumber|G(Q)-G(P)|\,&=\,|\theta''-\theta'| \\
&\geq\,\frac14\, \Big|\frac{d}{d\theta} \mathbb{E}_{i\sim Q} [f(\theta,i)]\big|_{\theta=\theta''}- \frac{d}{d\theta} \mathbb{E}_{i\sim Q} [f(\theta,i)]\big|_{\theta=\theta'}\Big|\\
&=\,\frac14\, \Big|\frac{d}{d\theta} \mathbb{E}_{i\sim Q} [f(\theta,i)]\big|_{\theta=\theta'}\Big|\\
&\geq\, \frac{\epsilon}{2^{B+4}},
\label{eq:PQ}
\end{align*}
where the last two relations are due to \eqref{eq:d/dt E=0} and \eqref{eq:eps2}, respectively.
Then,
\begin{align*}
\big|\hat{G}(A^TP)-G(P)\big|+\big|\hat{G}(A^TQ)-G(Q)\big|\,
&\geq\, \big|G(Q)-G(P)+\hat{G}(A^TP)-\hat{G}(A^TQ)\big|\\
&=\, \big|G(Q)-G(P)\big|\\
&\geq\, \frac{\epsilon}{2^{B+4}},
\end{align*}
where the equality follows from  \eqref{eq:A}. Therefore, 
the estimation error exceeds $\epsilon/2^{B+5}$ for at least one of the probability vectors $P$ or $Q$. 
This completes the proof of Theorem~\ref{th:Hinf}.


\medskip
\section{Proof of Theorem~\ref{th:H2constb}} \label{app:constant bit upper bound}
For simplicity, in this proof we will be working with the $[0,1]^d$ cube as the domain.
Consider the following randomized algorithm: 
\begin{itemize}
	\item Suppose that each machine $i$ observes $n$ function $f_1^i,\cdots,f_n^i$ and finds the minimizer  of $\sum_{j=1}^n f_j^i(\theta)$, which we denote by $\theta^i$. 
	Machine $i$ then lets its signal $Y^i$ be a randomized binary string of length $d$ of the following form: for $j=1,\ldots,d$,
	\begin{equation*}
	Y_j^i=\begin{cases}
	0, \qquad\mbox{with probability    } \theta^i_j,\\
	1, \qquad \mbox{with probability    } 1-\theta^i_j, 
	\end{cases}
	\end{equation*}
	where $Y_j^i$ is the $j$-th bit of $Y^i$, and $\theta^i_j$ is the $j$-th entry of $\theta^i$.
	\item The server receives signals from all machines and outputs $\hat{\theta}=1/m\sum_{i=1}^m Y^i$.
\end{itemize}
For the above algorithm, we have for $j=1,\ldots,d$,
\begin{equation} \label{eq:var of theta hat}
\operatorname{var}\big(\hat{\theta}_j\big)\,=\,\operatorname{var}\Bigg(\frac{1}{m} \sum_{i=1}^m Y^i_j\Bigg)
\,=\,\frac{1}{m}\operatorname{var}\big(Y^1_j\big)
\,=\,O\left(\frac{1}{m}\right),
\end{equation}
where the last equality is because $Y^1_j$ is a binary random variable. 
Then, 
\begin{align*}
\mathbb{E}\Big[\|\hat{\theta}-\theta^*\|^2\Big]
\,&=\, \sum_{j=1}^{d}\,  \mathbb{E}\Big[\big(\hat{\theta}_j-\theta^*_j\big)^2\Big]\\
\,&=\,\sum_{j=1}^{d}  \mathbb{E}\Big[\big(\hat{\theta}_j-\mathbb{E}[\hat{\theta}_j]\,+\,
 \mathbb{E}[\hat{\theta}_j]-\theta^*_j\big)^2\Big]\\
&=\, \sum_{j=1}^{d} \mathbb{E}\Big[\big(\hat{\theta}_j-\mathbb{E}[\hat{\theta}_j]\big)^2\Big]
\,+\, \sum_{j=1}^{d} \mathbb{E}\Big[\big(\mathbb{E}[\hat{\theta}_j]-\theta^*_j\big)^2\Big]\\
&=\,\sum_{j=1}^{d} \operatorname{var}\big(\hat{\theta}_j\big)
\,+\,\sum_{j=1}^{d}  \Big(\mathbb{E}\big[\hat{\theta}_j-\theta^*_j\big]\Big)^2\\
&=\, O\left(\frac{d}{m}\right)+O\left(\frac{d}{n}\right),
\end{align*}
where the last equality is due to \eqref{eq:var of theta hat} and Claim~\ref{lemma:10}. 
This completes the proof of Theorem~\ref{th:H2constb}.

\end{document}